  \providecommand\BibTeX{{%
    \normalfont B\kern-0.5em{\scshape i\kern-0.25em b}\kern-0.8em\TeX}}}
\newcommand{\longversion}[1]{\color{black}{#1}\color{black}}
\newcommand{\shortversion}[1]{}
\theoremstyle{plain}
\newtheorem{theorem}{Theorem}
\newtheorem{lemma}[theorem]{Lemma}
\newtheorem{definition}[theorem]{Definition}
\theoremstyle{remark}
\def\Lap{\text{Lap}}
\newcommand{\eps}{\varepsilon}
\newcommand{\bracs}[1]{\langle #1\rangle}
\newcommand{\norm}[1]{\|#1\|}
\newcommand{\E}{\mathbb{E}}
\def\gs{\text{GS}}
\def\bx{\textbf{x}}
\def\by{\textbf{y}}
\def\bz{\textbf{z}}
\def\be{\textbf{e}}
\def\ncov{\text{ncov}}
\def\nvar{\text{nvar}}
\def\var{\text{var}}
\def\ones{\mathbf{1}}
\def\calY{\mathcal{Y}}
\def\calR{\mathcal{R}}
\def\calN{\mathcal{N}}
\def\calX{\mathcal{X}}
\def\calZ{\mathcal{Z}}
\def\reals{{\mathbb R}}
\def\xnew{x_{new}}
\def\pxnew{p_{\xnew}}
\def\ptf{p_{25}}
\def\psf{p_{75}}
\def\hatpxnew{\widehat{p}_{\xnew}}
\def\hatptf{\widehat{p}_{25}}
\def\hatpsf{\widehat{p}_{75}}
\def\tildepxnew{\widetilde{p}_{\xnew}}
\def\tildeptf{\widetilde{p}_{25}}
\def\tildepsf{\widetilde{p}_{75}}
\def\hatsigma{\widehat{\sigma}}
\def\ci{\hat{C}(68)}
\def\citrue{\hat{C}_{\text{true}}}
\def\cptrue{C_{\text{true}}}
\newcommand\data{\ensuremath{\{(x_i, y_i)\}_{i=1}^n \in ([0,1] \times [0,1])^n}}
\DeclareMathOperator*{\Exp}{Exp}
\DeclareMathOperator*{\Var}{Var}
\DeclareMathOperator*{\argmin}{argmin}
\begin{document}

\title{Differentially Private Simple Linear Regression}

\author{Daniel Alabi}
\affiliation{%
  \institution{Harvard John A. Paulson School of Engineering and Applied Sciences}
  \streetaddress{29 Oxford St}
}

\author{Audra McMillan}
\affiliation{%
  \institution{Department of Computer Science, Boston University}
  \streetaddress{111 Cummington Mall}
  }
 \affiliation{
 \institution{Khoury College of Computer Sciences, Northeastern University}
 \streetaddress{805 Columbus Ave}
 }

\author{Jayshree Sarathy}
\affiliation{%
  \institution{Harvard John A. Paulson School of Engineering and Applied Sciences}
  \streetaddress{29 Oxford St}
}

\author{Adam Smith}
\affiliation{
\institution{Department of Computer Science, Boston University}
  \streetaddress{111 Cummington Mall}
  }

\author{Salil Vadhan}
\affiliation{%
  \institution{Harvard John A. Paulson School of Engineering and Applied Sciences}
  \streetaddress{29 Oxford St}
}

\renewcommand{\shortauthors}{Alabi et al.}

\begin{abstract}
Economics and social science research often require analyzing datasets of sensitive personal information at fine granularity, with models fit to small subsets of the data. Unfortunately, such fine-grained analysis can easily reveal sensitive individual information. We study algorithms for simple linear regression that satisfy \textit{differential privacy}, a constraint which guarantees that an algorithm's output reveals little about any individual input data record, even to an attacker with arbitrary side information about the dataset. 
We consider the design of differentially private algorithms for simple linear regression for small datasets, with tens to hundreds of datapoints, which is a particularly challenging regime for differential privacy. Focusing on a particular application to small-area analysis in economics research, we study the performance of a spectrum of algorithms we adapt to the setting. We identify key factors that affect their performance, showing through a range of experiments that algorithms based on robust estimators (in particular, the Theil-Sen estimator)
perform well on the smallest datasets, but that other more standard algorithms do better as the dataset size increases.
\end{abstract}

\begin{CCSXML}
<ccs2012>
<concept>
<concept_id>10002978.10002991.10002995</concept_id>
<concept_desc>Security and privacy~Privacy-preserving protocols</concept_desc>
<concept_significance>500</concept_significance>
</concept>
</ccs2012>
\end{CCSXML}

\ccsdesc[500]{Security and privacy~Privacy-preserving protocols}

\keywords{differential privacy, linear regression, robust statistics, small-area analysis}

\maketitle

\section{Introduction}

The analysis of small datasets, with sizes in the dozens to low hundreds, is crucial in many social science applications. For example, neighborhood-level household income, high school graduation rate, and incarceration rates are all studied using small datasets that contain sensitive, and often protected, data~(e.g., \cite{CFHJP18}). 
However, the release of statistical estimates based on these data 
quantities---if too many and too accurate---can allow reconstruction of the original dataset~\cite{DN03}. The possibility of such attacks led to differential privacy \cite{DMNS06}, a rigorous mathematical definition used to quantify privacy loss. Differentially private (DP) algorithms limit the information that is leaked about any particular individual by introducing random distortion. The amount of distortion, and its effect on utility, are most often studied for large datasets, using asymptotic tools. 
When datasets are small, one has to be very careful when calibrating differentially private statistical estimates to preserve utility.

In this work, we focus on simple (i.e. one-dimensional) linear regression and show that this prominent statistical task can have accurate differentially private algorithms even on small datasets. Our goal is 
to provide insight and guidance into how to choose a DP algorithm for simple linear regression in a variety of realistic parameter regimes. 

Even without a privacy constraint, small sample sizes pose a problem for traditional statistics, since the variability from sample to sample, called the \emph{sampling error}, can overwhelm the signal about the underlying trend.
A reasonable concrete goal for a DP mechanism, then, is that it not introduce substantially \emph{more} uncertainty into the estimate. Specifically, we  compare the noise added in order to maintain privacy to the standard error of the (nonprivate) OLS estimate. Our experiments indicate that for a wide range of realistic datasets and moderate values of the privacy parameter, $\eps$, it is possible to choose a DP linear regression algorithm that introduces distortion less than the standard error. In particular, in our motivating use-case of the Opportunity Atlas~\cite{Chetty14}, described below, we design a differentially private algorithm that matches or outperforms the heuristic method currently deployed, which does not formally satisfy differential privacy.

\subsection{Problem Setup}

\subsubsection{Simple Linear Regression}
\label{sec:problem}

In this paper we consider the most common model of linear regression: one-dimensional linear regression with homoscedastic noise.
This model is defined by a slope $\alpha\in\mathbb{R}$, an intercept $\beta\in\mathbb{R}$, 
a noise distribution $F_e$ with mean $0$ and variance $\sigma_e^2$,
and the relationship \[\by = \alpha\cdot \bx + \beta + \be\] where $\bx,\by\in\mathbb{R}^n$.
Our data consists of $n$ pairs, $(x_i,y_i)$, where 
$x_i$ is the explanatory variable, $y_i$ is the response variable, and
each random variable $e_i$ is draw i.i.d. from  the distribution $F_e$. 
The goal of simple linear regression is to estimate $\alpha$ and $\beta$ given the data $\{(x_1, y_1), \cdots, (x_n, y_n)\}$.

Let $\bx = (x_1, \ldots, x_n)^T$, $\by = (y_1, \ldots, y_n)^T$. The Ordinary Least Squares (OLS) solution to the simple linear regression problem is the solution to the following optimization problem: 
\begin{equation}\label{OLSopt}
(\hat\alpha, \hat\beta)=\arg\min_{\alpha,\beta} \|\by-\alpha\bx-\beta\textbf{1}\|_2.
\end{equation}
It is the most commonly used solution in practice since it is the maximum likelihood estimator when $F_e$ is Gaussian, and it has a closed form solution.
We define the following empirical quantities: 
\begin{align*}
    \bar{x} = \frac{1}{n}\sum_{i=1}^n x_i,\;&\;\; \bar{y} = \frac{1}{n}\sum_{i=1}^n y_i\\
    \ncov(\bx, \by) &= \langle \bx-\bar{x}\mathbf{1}, \by-\bar{y}\mathbf{1}\rangle,\\
    \nvar(\bx) &= \langle \bx-\bar{x}\mathbf{1}, \bx-\bar{x}\mathbf{1}\rangle = n\cdot \var(\bx).
\end{align*} 
Then
\begin{equation}\label{closedform}
\hat\alpha = \frac{\ncov(\bx, \by)}{\nvar(\bx)}\;\; \text{and} \;\; \hat\beta = \bar{y} - \hat\alpha\bar{x}
\end{equation}
 this paper, we focus on predicting the (mean of the) response variable $y$ at a single value of the explanatory variable $x$.  For $\xnew \in[0,1]$, the 
\emph{prediction} at $\xnew$ is defined as: 
\[\pxnew = \alpha \xnew +\beta \]
Let $\hatpxnew$ be the estimate of the prediction at $\xnew$ computed using the OLS estimates $\hat\alpha$ and $\hat\beta$. The quantity $\hatpxnew$ is a random variable, where the randomness is due to the sampling process. The standard error 
$\hatsigma(\hatpxnew)$ is an estimate of the standard deviation of $\hatpxnew$. 
If we assume that the noise $F_e$ is Gaussian, then we can compute the standard error as
follows~(see \cite{Xin09}, for example): \begin{equation}\label{standarderror}
\hatsigma(\hatpxnew) = \frac{\| \by-\hat\alpha \bx-\hat\beta\|_2}{\sqrt{n-2}}\sqrt{\frac{1}{n}+\frac{(\xnew-\bar{x})^2}{n \var(\bx)}}. \end{equation} 

It can be shown that the variance of $(\hatpxnew - \pxnew)/\hatsigma(\hatpxnew)$ approaches $1$ as $n$ increases. 

\subsubsection{Differential Privacy}

In this section, we define the notion of privacy that we are using: differential privacy (DP). Since our algorithms often include hyperparameters, we include a definition of DP for algorithms that take as input not only the dataset, but also the desired privacy parameters and any required hyperparameters. Let $\mathcal{X}$ be a data universe
and $\mathcal{X}^n$ be the space of datasets. Two datasets $d, d' \in \mathcal{X}^n$ are neighboring, denoted $d \sim d'$, if they differ on a single record. 
Let $\mathcal{H}$ be a hyperparameter space and $\mathcal{Y}$ be an output space.

\begin{definition}[$(\eps, \delta)$-Differential Privacy~\cite{DMNS06}]\label{def:DP}
A randomized mechanism $M: \mathcal{X}^n \times \mathbb{R}_{\geq 0} \times [0, 1] \times \mathcal{H} \rightarrow \mathcal{Y}$ is \emph{differentially private} if for all datasets $d \sim d' \in \mathcal{X}^n$, privacy-loss parameters $\eps\geq 0, \delta\in[0, 1]$, $\textrm{hyperparams} \in \mathcal{H}$, and events $E$,
\begin{align*} \label{def:dp-with-inputs}
    &\Pr[M(d, \eps, \delta, \text{hyperparams}) \in E] \\
    &\leq e^\eps \cdot\Pr[M(d', \eps, \delta, \text{hyperparams}) \in E] + \delta,
\end{align*}
where the probabilities are taken over the random coins of $M$.

\end{definition}
For strong privacy guarantees, the privacy-loss parameter is typically taken to be a small constant less than $1$ (note that $e^\eps \approx 1+\eps$ as $\eps \rightarrow 0$), but we will sometimes consider larger constants such as $\eps = 8$ to match our motivating application (described in Section~\ref{OIusecase}).

The key intuition for this definition is that the distribution of outputs on input dataset $d$ is almost indistinguishable from the distribution on outputs on input dataset $d'$. Therefore, given the output of a differentially private mechanism, it is impossible to confidently determine whether the input dataset was $d$ or $d'$.

\subsubsection{Other Notation}
The DP estimates will be denoted by $\tildepxnew$ and the OLS estimates by
$\hatpxnew$. We will focus on data with values bounded between 0 and 1, so $0\le x_i, y_i\le 1$ for $i=1,\ldots, n$. 

We will be primarily concerned with the predicted values at $\xnew =0.25$ and $0.75$, which for ease of notation we denote as 
$\ptf$ and $\psf$, respectively. Correspondingly, we will use
$\hatptf, \hatpsf$ to denote the OLS estimates of the predicted values and
$\tildeptf, \tildepsf$ to denote the DP estimates.
Our use of the 25th and 75th percentiles is motivated by
    the Opportunity Atlas tool~\cite{CFHJP18}, described in Section~\ref{OIusecase}, which releases estimates of $\ptf$ and $\psf$ for certain regressions done for every census tract in each state. 

\subsubsection{Error Metric}
We will be concerned primarily with empirical performance measures.
In particular, we will restrict our focus to high probability error bounds that can be accurately computed empirically through Monte Carlo experiments. The question of providing tight theoretical error bounds for DP linear regression is an important one, which we leave to future work. 
Since the relationship between the OLS estimate $\hatpxnew$ and the true value $\pxnew$ is well-understood, we focus on measuring the difference between the private estimate $\tildepxnew$ and $\hatpxnew$, which is something we can measure experimentally on real datasets.
Specifically, we define the \emph{prediction error} at $\xnew$ to be
    \(|\tildepxnew - \hatpxnew|.\)

For a dataset $d$, $\xnew \in [0,1]$, and $q \in [0,100]$, we define the \emph{$q\%$ error bound} as
\[C(q)(d) = \min\left\{c: \mathbb{P}(|\tildepxnew-\hatpxnew|\leq c)\ge \frac{q}{100}\right\},\] where the dataset $d$ is fixed, and the probability is taken over the randomness in the DP algorithm. 

We empirically estimate $C(q)$ by running many trials of the algorithm on the same dataset $d$:
        \[\hat{C}(q)(d) = \min\left\{c: \text{for at least } q\%\text{ of trials}, |\tildepxnew-\hatpxnew| \leq c\right\}.\]
    We term $\hat{C}(q)(d)$ the \emph{$q\%$ empirical error bound}. 
We will often drop the reference to $d$ from the notation. This error metric only accounts for the randomness in the algorithm, not the sampling error.  

When the ground truth is known (eg. for synthetically generated data), we can compute error bounds compared to the ground truth, rather than to the non-private OLS estimate. So, let $\cptrue(q)(d)$ and $\citrue(q)(d)$ be similar to the error bounds described earlier, except that the prediction error is measured as $|\tildepxnew-\pxnew|$. This error metric accounts for the randomness in both the sampling and the algorithm. 
When we say \emph{the noise added for privacy is less than the sampling error}, we are referring to the technical statement that $\hat{C}(68)$ is less than the 
standard error, $\hatsigma(\hatpxnew)$.

\subsection{Motivating Use-Case: Opportunity Atlas}\label{OIusecase}

The Opportunity Atlas, designed and deployed by the economics research group Opportunity Insights, is an interactive tool designed to study the link between the neighbourhood a child grows up in and their prospect for economic mobility \cite{CFHJP18}. The tool provides valuable insights to researchers and policy-makers, and is built from Census data, protected under Title 13 
and authorized by the Census Bureau’s Disclosure Review Board, 
linked with federal income tax returns from the US Internal Revenue Service. 
The Atlas provides individual statistics on each census tract in the country, with tract data often being refined by demographics to contain only a small subset of the individuals who live in that tract;
\longversion{for example, black male children with low parental income. }
 The resulting datasets typically contain  100 to 400 datapoints, but can be as small as 30 datapoints. 
A separate regression estimate is released in each of these small geographic areas.  The response variable $y_i\in[0,1]$ is the child’s income percentile at age 35 and the explanatory variable $x_i\in[0,1]$ is the parent’s income percentile,
each
with respect to the national income distribution. The coefficient $\alpha$ 
in the model
$y_i = \alpha \cdot x_i + \beta + e_i$
is a measure of economic mobility for that particular Census tract and demographic.
The small size of the datasets used in the Opportunity Atlas are the result of Chetty et al.'s motivation to study inequality at the neighbourhood level:
\longversion{
``the estimates permit precise targeting of policies
to improve economic opportunity by uncovering specific neighborhoods where certain subgroups
of children grow up to have poor outcomes. Neighborhoods matter at a very granular level:
conditional on characteristics such as poverty rates in a child’s own Census tract, characteristics
of tracts that are one mile away have little predictive power for a child’s outcomes"~\cite{CFHJP18}. 
}

\subsection{Robustness and DP Algorithm Design}\label{robustnessalgodesign}
Simple linear regression is one of the most fundamental statistical tasks with well-understood convergence properties in the non-private literature. However, finding a differentially private estimator for this task that is accurate across a range of datasets and parameter regimes is surprisingly nuanced. As a first attempt, one might consider
the global sensitivity \cite{DMNS06}: 

\begin{definition}[Global Sensitivity]
For a query $q:~\calX^n~\rightarrow~\reals^k$, the \textbf{global sensitivity} is
\[
GS_q = \max_{\bx \sim \bx'}\|q(\bx) - q(\bx')\|_1.
\]
\end{definition}

One can create a differentially private mechanism by adding noise proportional to $GS_q/\eps$. Unfortunately, the global sensitivity of $\ptf$ and $\psf$ are both infinite (even though we consider bounded  $\bx,\by\in[0,1]^n$, the point estimates $\ptf$ and $\psf$ are unbounded). For the type of datasets that we typically see in practice, however, changing one datapoint does not result in a major change in the point estimates. For such datasets, where the point estimates are reasonably stable, one might hope to take advantage of the local sensitivity:

\begin{definition}[Local Sensitivity~\cite{NRS07}] The \textbf{local sensitivity} of a query $q:~\calX^n~\rightarrow~\reals^k$ with respect to a dataset $\bx$ is 
\[
LS_q(\bx) = \max_{\bx\sim\bx'} \norm{q(\bx) - q(\bx')}_1.
\]
\end{definition}

Adding noise proportional to the local sensitivity is typically \textbf{not} differentially private, since the local sensitivity itself can reveal information about the underlying dataset. To get around this, one could try to
add noise that is larger, but not too much larger, than the local sensitivity. As DP requires that the amount of noise added cannot depend too strongly on the particular dataset, DP mechanisms of this flavor often involve calculating the local sensitivity on neighbouring datasets. So far, we have been unable to design a computationally feasible algorithm for performing the necessary computations for OLS. Furthermore, computationally feasible upper bounds for the local sensitivity have, so far, proved too loose to be useful.

The Opportunity Insights (OI) algorithm takes a heuristic approach by adding noise proportional to a non-private, heuristic upper bound on the local sensitivity of data from tracts in any given state. 
However, their heuristic approach does not satisfy the formal requirements of differential privacy, leaving open the possibility that there is a realistic attack.

The OI algorithm incorporates a ``winsorization'' step in their estimation procedure (e.g. dropping bottom and top 10\%
of data values). This sometimes has the effect of greatly reducing the local sensitivity (and also their upper bound on
it) due to the possible removal of outliers. This suggests that for finding an effective differentially private
algorithm, we should consider differentially private analogues of \emph{robust} linear regression methods rather than
of OLS. Specifically, we consider \emph{Theil-Sen}, a robust estimator for linear regression proposed by
Theil~\cite{Theil50} and further developed by Sen~\cite{Sen68}.
Similar to the way in which the median is less sensitive to changes in the data than the mean, the Theil-Sen estimator is more robust to changes in the data than OLS . 

In this work, we consider three differentially private  algorithms based on both robust and non-robust methods:

\begin{itemize}
    \item \textbf{\texttt{NoisyStats}} is the DP mechanism that most closely mirrors OLS. It involves perturbing the sufficient statistics $\ncov(\bx,\by)$ and $\nvar(\bx)$ from the OLS computation. This algorithm is inspired by the ``Analyze Gauss'' technique~\cite{DworkTT014}, although the noise we add ensures pure differential
    privacy rather than approximate differential privacy. 
    \texttt{NoisyStats} has two main benefits: it is as computationally efficient as its non-private analogue, and it allows us to release DP versions of the sufficient statistics with no extra privacy cost.
    
\item \textbf{\texttt{DPTheilSen}} is a DP version of Theil-Sen. The non-private estimator computes the $\ptf$ estimates based on the lines defined by all pairs of
    datapoints $(x_i, y_i), (x_j, y_j)\in [0, 1]^2$ for all $i \neq j\in[n]$, then outputs the median of these pairwise estimates. To create a differentially private version, we replace the median computation with a differentially private median algorithm.
    We will consider three DP versions of this algorithm which use different DP median algorithms: \texttt{DPExpTheilSen, DPWideTheilSen,} and \texttt{DPSSTheilSen}. 
    We also consider more computationally efficient variants that pair points according to one or more random matchings,
    rather than using all ${n \choose 2}$ pairs. A DP algorithm obtained by using one matching was previously
    considered by Dwork and Lei~\cite{DworkL09} (their ``Short-Cut Regression Algorithm'').
    Our algorithms can be viewed as updated versions, reflecting improvements in DP median estimation since~\cite{DworkL09}, as well as incorporating benefits accrued by considering more than one matching or all
    ${n\choose 2}$ pairs.
    
\item \textbf{\texttt{DPGradDescent}} is a DP mechanism that uses DP gradient descent to solve the convex optimization problem that defines OLS: $\argmin_{\alpha,\beta} \|\by-\alpha\bx-\beta\mathbf{1}\|_2.$ We use the private stochastic gradient descent
technique proposed by Bassily et. al in~\cite{BST14}. Versions that satisfy
pure, approximate, and zero-concentrated differential privacy are considered.
\end{itemize}

\subsection{Our Results}

Our experiments indicate that for a wide range of realistic datasets, and moderate values of $\eps$, it is possible to choose a DP linear regression algorithm where the error due to privacy is less than the standard error. In particular, in our motivating use-case of the Opportunity Atlas, we can design a differentially private algorithm that outperforms the heuristic method used by the Opportunity Insights team. This is promising, since the error added by the heuristic method was deemed acceptable by the Opportunity Insights team for deployment of the tool, and for use by policy makers.
One particular differentially private algorithm of the robust variety, called  \texttt{DPExpTheilSen},
emerges as the best algorithm in a wide variety of settings for this small-dataset regime.

Our experiments reveal three main settings where an analyst should consider alternate algorithms: 
\begin{itemize}
    \item When $\eps n\var(\bx)$ is 
large and $\sigma_e^2$ is large,
a DP algorithm \texttt{NoisyStats} that simply perturbs the Ordinary Least
Squares (OLS) sufficient statistics, $\nvar(\bx)$ and $\ncov(\bx, \by)$, performs well. This algorithm is preferable in this setting since it is more computationally efficient, and allows for the release of noisy sufficient statistics without any additional privacy loss.
\item When the standard error $\hatsigma(\hatpxnew)$ is very small, \newline \texttt{DPExpTheilSen} can perform poorly. In this setting, one should switch to a different DP estimator based on Theil-Sen. We give two potential alternatives, which are both useful in different situations.
\item The algorithm \texttt{DPExpTheilSen} requires as input a range in which to search for the output predicted value. If this range is very large and $\eps$ is small ($\eps\ll1$) then \texttt{DPExpTheilSen} can perform poorly, so it is good to use other algorithms like \texttt{NoisyStats} that do not require a range for the output (but do require that the input variables $x_i$ and $y_i$ are bounded, which is not required for \texttt{DPExpTheilSen}). 
\end{itemize}

The quantity $\eps n\var(\bx)$ is connected to the size of the dataset, how concentrated the independent variable of the data is and how private the mechanism is. Experimentally, this quantity has proved to be a good indicator of the relative performance of the DP algorithms. 
Roughly, when $\eps n\var(\bx)$ is small, the OLS estimate can be very sensitive to small changes in the data, and thus we recommend switching to 
differentially private versions of the Theil-Sen estimator.
In the opposite regime, when $\eps n\var(\bx)$ is large, \texttt{NoisyStats} typically suffices and is a simple non-robust method to adopt in practice. 
In this regime the additional noise added for privacy by \texttt{NoisyStats} can be less than the difference between the non-private OLS and Theil-Sen point estimates.
\longversion{The other non-robust algorithm we consider, \texttt{DPGradDescent},
may be a suitable replacement for \texttt{NoisyStats} depending on
the privacy model used (i.e. pure, approximate, or zero-concentrated DP). Our comparison of \texttt{NoisyStats} and \texttt{DPGradDescent} is not
comprehensive, but we find that any performance advantages of \texttt{DPGradDescent} over \texttt{NoisyStats}
appear to be small in the regime where the non-robust methods outperform the robust ones. \texttt{NoisyStats} is simpler and has fewer hyperparameters, however, so we find that it may be preferable in practice.
}

In addition to the quantity $\eps n\var(\bx)$, the magnitude of noise in the dependent variable effects the relative performance of the algorithms. When the dependent variable is not very noisy (i.e. $\sigma_e^2$ is small), the Theil-Sen-based estimators perform better since they are better at leveraging a strong linear signal in the data.

These results show that even in this one-dimensional setting, the story is already quite nuanced. 
Indeed, which algorithm performs best depends on properties of the dataset, such as $n \var(\bx)$, which cannot be directly used without violating differential privacy. So, one has to make the choice based on guesses (eg. using similar public datasets) or develop differentially private methods for selecting the algorithm, a problem which we leave to future work.
Moreover, most of our methods come with hyperparameters that govern their behavior. 
How to optimally choose these parameters is still an open problem. In addition, we focus on outputting accurate point estimates, rather than confidence intervals. Computing confidence intervals is an important direction for future work.

\subsection{Other Related Work}\label{relatedwork} 
Linear regression is one of the most prevalent statistical methods in the social sciences, and hence has been studied previously in the differential privacy literature. These works have included both theoretical analysis and experimental exploration, with the majority of work focusing on large datasets.

One of our main findings --- that robust estimators perform better than parametric estimators in the differentially private setting, even when the data come from a parametric model --- corroborate insights by~\cite{DworkL09} with regard to the connection between robust statistics and differential privacy, and by~\cite{CKSBG19} in the context of hypothesis testing.

Other systematic studies of DP linear regression have been performed by Sheffet~\cite{Sheffet17} and Wang~\cite{Wang18}.
Sheffet \cite{Sheffet17} considered differentially private
ordinary least squares methods and estimated confidence intervals for the regression coefficients. He assumes normality of the explanatory variables, while we do not make any distributional assumptions on our covariates.

Private linear regression in the high-dimensional settings is
studied by Cai et al.~\cite{CWZ19} and Wang~\cite{Wang18}.
Wang~\cite{Wang18} considered private ridge regression, where the ridge parameter is adaptively
and privately chosen, using techniques similar to output
perturbation~\cite{CMS11}. 
These papers present methods and experiments for
high dimensional data (the datasets used contain at least 13 explanatory variables), whereas we are concerned with the one-dimensional setting. 
We find that even the one-dimensional setting the choice of optimal algorithm is already quite complex.

A Bayesian approach to DP
linear regression is taken by Bernstein and Sheldon~\cite{GarrettS19}. Their method uses sufficient statistic perturbation (similar to our
\texttt{NoisyStats} algorithm) for private release and Markov Chain Monte Carlo sampling 
over a posterior of the regression coefficients.
Their Bayesian approach can produce tight credible intervals for the
regression coefficients, but unlike ours, requires distributional assumptions on
both the regression coefficients and the underlying independent variables. 
In order to make private releases, 
we assume the data is bounded but make no other distributional assumptions on 
the independent variables.

\section{Algorithms}\label{sec:algorithms}

In this section we detail the practical differentially private algorithms we will
evaluate experimentally. Pseudocode for all efficient implementations of each algorithm described can be found in
the Appendix,
and real code can be found in our GitHub repository.
We will assume throughout that $(x_i,y_i)\in[0,1]^2$, as in the Opportunity Atlas, where it is achieved by preprocessing of the data. While we would ideally ensure differentially private preprocessing of the data, we will consider this outside the scope of this work.

\subsection{\texttt{NoisyStats}}
In \texttt{NoisyStats} (Algorithm~\ref{alg:ns}), we add Laplace noise, with standard deviation approximately $1/\eps$, to the OLS sufficient statistics, $\ncov(\bx, \by), \nvar(\bx)$, and then use the noisy sufficient statistics to compute the predicted values.  
Note that this algorithm fails if the denominator for the OLS estimator, the noisy version of $\nvar(\bx)$, becomes 0 or negative, in which case we output $\perp$ (failure). The probability of failure decreases as $\eps$ or
$\nvar(x)$ increases.
\shortversion{}
\longversion{}
\texttt{\texttt{NoisyStats}} is the simplest and most efficient algorithm that we will study. In addition, the privacy guarantee is maintained even if we additionally release the noisy statistics $\nvar(\bx)+L_1$ and $\ncov(\bx,\by)+L_2$, which may be of independent interest to researchers. We also note that the algorithm is biased due to dividing by a Laplacian distribution centered at $\nvar(\bx)$.
\longversion{

\begin{algorithm}
  \KwData{\data}
  \KwPrivacyparams{$\eps$}
  \KwHyperparams{n/a}
  Define $\Delta_1 = \Delta_2 = \left(1-1/n\right)$

  Sample $L_1 \sim \Lap(0, 3\Delta_1/\eps)$

  Sample $L_2 \sim \Lap(0, 3\Delta_2/\eps)$

  \If {$\nvar(\bx) + L_2 > 0$} {
    $\tilde\alpha = \frac{\ncov(\bx, \by) + L_1}{\nvar(\bx) + L_2}$
    
    $\Delta_3 = 1/n\cdot(1 + |\tilde\alpha|)$
    
    Sample $L_3 \sim \Lap(0, 3\Delta_3/\eps)$
    
    $\tilde\beta = (\bar{y} - \tilde\alpha\bar{x}) + L_3$
    
    $\tildeptf = 0.25\cdot \tilde\alpha + \tilde\beta$
    
    $\tildepsf = 0.75\cdot \tilde\alpha + \tilde\beta$
    
    \Return $\tildeptf, \tildepsf$
  } \Else {
    \Return $\perp$
  }
  \caption{\texttt{NoisyStats}: $(\eps, 0)$-DP Algorithm}  \label{alg:ns}
\end{algorithm}

\begin{lemma}\label{lem:ns}
Algorithm~\ref{alg:ns} (\texttt{NoisyStats}) is $(\eps, 0)$-DP.
\end{lemma}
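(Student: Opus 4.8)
The plan is to present \texttt{NoisyStats} as an adaptive composition of three Laplace mechanisms, each calibrated to spend privacy budget $\eps/3$, and then invoke adaptive composition together with closure under post-processing. Since the reported outputs $\tildeptf$ and $\tildepsf$ are a deterministic function of $\tilde\alpha = (\ncov(\bx,\by)+L_1)/(\nvar(\bx)+L_2)$ and $\tilde\beta$ (with $\perp$ mapping to $\perp$), it suffices to show that the release of the triple $\bigl(Z_1, Z_2, \tilde\beta\bigr)$, where $Z_1 = \ncov(\bx,\by)+L_1$ and $Z_2 = \nvar(\bx)+L_2$, is $(\eps,0)$-DP; the stated output follows by post-processing.

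First I would pin down the global sensitivities of the two sufficient statistics, where the naive bound on the covariance is too large. The key computation is to rewrite $\nvar(\bx) = \frac1n\sum_{i<j}(x_i-x_j)^2$ and $\ncov(\bx,\by)=\frac1n\sum_{i<j}(x_i-x_j)(y_i-y_j)$. Changing a single record $k$ alters only the $n-1$ pairwise terms involving index $k$. For $\nvar$, each term $(x_k-x_j)^2$ lies in $[0,1]$ and so changes by at most $1$, giving sensitivity at most $(n-1)/n = \Delta_2$. For $\ncov$, the subtle point is that each bilinear term $(x-x_j)(y-y_j)$, viewed as a function of $(x,y)\in[0,1]^2$ with $(x_j,y_j)$ fixed, has range at most $1$; since the function is bilinear it is extremized at a corner of the unit square, so this is a short case analysis over the four corners. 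Hence each of the $n-1$ terms again changes by at most $1$ and the sensitivity is at most $(n-1)/n=\Delta_1$. Therefore $Z_1$ and $Z_2$, being Laplace mechanisms with scales $3\Delta_1/\eps$ and $3\Delta_2/\eps$, are each $(\eps/3,0)$-DP, and by composition $(Z_1,Z_2)$ is $(2\eps/3,0)$-DP.

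The main obstacle is the third noise term, whose scale $\Delta_3 = \frac1n(1+|\tilde\alpha|)$ depends on the data through $\tilde\alpha$ --- exactly the kind of data-dependent noise calibration that is \emph{not} in general differentially private. The resolution is to observe that $\tilde\alpha$ is a function only of the already-privatized pair $(Z_1,Z_2)$, so I would analyze the $\tilde\beta$ step conditionally on $(Z_1,Z_2)$. For any fixed value of $(Z_1,Z_2)$, and hence a fixed constant $\tilde\alpha$ and a fixed scale $\Delta_3$, the map $d\mapsto(\bar y - \tilde\alpha\bar x)+L_3$ is just the Laplace mechanism applied to the query $q_{\tilde\alpha}(d)=\bar y-\tilde\alpha\bar x$. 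Because changing one record moves each of $\bar x,\bar y$ by at most $1/n$, this query has global sensitivity at most $\frac1n(1+|\tilde\alpha|)=\Delta_3$, so with scale $3\Delta_3/\eps$ the release is $(\eps/3,0)$-DP for every fixed conditioning value. Adaptive composition of the $(2\eps/3)$-DP release of $(Z_1,Z_2)$ with this $(\eps/3)$-DP release of $\tilde\beta$ then yields $(\eps,0)$-DP for $(Z_1,Z_2,\tilde\beta)$.

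Finally I would dispatch the failure branch: when $Z_2\le 0$ the algorithm outputs $\perp$ without sampling $L_3$, which folds into the second stage by letting it emit the fixed, data-independent symbol $\perp$ (trivially $0$-DP) on inputs with $z_2\le0$ and the Laplace release otherwise; either way the second stage is $(\eps/3)$-DP for each fixed $(z_1,z_2)$, so the composition bound is unaffected. Equivalently, one can verify the claim directly by bounding the joint output density ratio $p_d/p_{d'}$ on the region $z_2>0$: conditioned on an output point, the scale of the $L_3$ density is the same under $d$ and $d'$ because it depends only on the output coordinates $z_1,z_2$, so the ratio factors into three Laplace ratios each bounded by $e^{\eps/3}$, with the third controlled by $|q_{\tilde\alpha}(d)-q_{\tilde\alpha}(d')|\le\Delta_3$. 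I expect the corner-case range bound for $\ncov$ and the correct handling of the data-dependent $\Delta_3$ via conditioning to be the two places where genuine care is required.
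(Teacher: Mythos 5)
Your proof is correct, and its overall architecture is the same as the paper's: three Laplace releases at budget $\eps/3$ each, adaptive composition, and post-processing to obtain $(\tildeptf,\tildepsf)$, with the failure branch $\perp$ folded in as a data-independent output. Where you genuinely depart from the paper is in the proof of the sensitivity bound $1-1/n$ for the sufficient statistics (the paper's Lemma~\ref{lem:gs}): the paper expands $\nvar(\bx)-\nvar(\bx')$ and $\ncov(\bx,\by)-\ncov(\bx',\by')$ directly for neighboring datasets and runs a sign-by-sign case analysis, whereas you use the pairwise representations $\nvar(\bx)=\frac{1}{n}\sum_{i<j}(x_i-x_j)^2$ and $\ncov(\bx,\by)=\frac{1}{n}\sum_{i<j}(x_i-x_j)(y_i-y_j)$, observe that changing one record touches only the $n-1$ terms involving that index, and bound each term's change by $1$ via the corner argument for bilinear functions on $[0,1]^2$ (both identities and the corner bound check out). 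Your route is more conceptual and reusable --- it applies to any statistic expressible as a pairwise sum with a bounded kernel, and it makes the constant transparent as $(n-1)\cdot\frac{1}{n}$ --- while the paper's is more elementary, requiring nothing beyond direct algebra. A further point in your favor: you make explicit the conditioning on $(Z_1,Z_2)$ that justifies calibrating $L_3$ to the data-dependent scale $\Delta_3=\frac{1}{n}(1+|\tilde\alpha|)$, which is exactly the subtle step; the paper makes the same argument but more tersely, asserting that $\tilde\alpha$ is already a private release and then invoking the sensitivity of $\bar y-\tilde\alpha\bar x$ and basic composition, leaving the conditional analysis implicit.
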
}

\subsection{DP TheilSen}\label{alg:dpts}

The standard Theil-Sen estimator is a \emph{robust} estimator for linear regression. It computes the $\ptf$ estimates based on the lines defined by all pairs of
datapoints $(x_i, y_i), (x_j, y_j)\in [0, 1]^2$ for all $i\neq j\in[n]$, then outputs the median of these pairwise estimates. To create a differentially private version, we can replace the median computation with a differentially private median algorithm.
We implement this approach using three DP median algorithms; two based on the exponential mechanism~\cite{McSherryT07} and one based on the smooth sensitivity of~\cite{NRS07} and the noise distributions of \cite{BunS19}.

In the ``complete" version of Theil-Sen, all pairwise estimates are included in the final median computation. A similar algorithm can be run on the point estimates computed using $k$ random matchings of the $(x_i,y_i)$ pairs. 
The case $k=1$ amounts to the differentially private ``Short-cut Regression Algorithm'' proposed by Dwork and Lei~\cite{DworkL09}. This results in a more computationally efficient algorithm.

\longversion{ Furthermore, while in the non-private setting $k=n-1$ is the optimal choice, this is not necessarily true when we incorporate privacy. Each datapoint affects $k$ of the datapoints in $\bz^{(p25)}$ (using the notation from Algorithm~\ref{alg:ts-match}) out of the total $k\cdot n/2$ datapoints. In some private algorithms, the amount of noise added for privacy is a function of the fraction of points in $\bz^{(p25)}$ influenced by each $(x_i,y_i)$, which is independent of $k$ -- meaning that one should always choose $k$ as large as is computationally feasible. However, this is not always the case. Increasing $k$ can result in more noise being added for privacy resulting in a trade-off between decreasing the noise added for privacy (with smaller $k$) and decreasing the non-private error (with larger $k$). 
}

\longversion{While intuitively, for small $k$, one can think of using $k$ random matchings, we will actually ensure that no edge is included twice. We say the permutations $\tau_1, \cdots, \tau_{n-1}$ are a decomposition of the complete graph on $n$ vertices, $K_n$ into $n-1$ matchings if $\cup \Sigma_k = \{(x_i, x_j)\;|\; i,j\in[n]\}$ where $\Sigma_k=\{(x_{\tau_k(1)}, x_{\tau_k(2)}), \cdots, (x_{\tau_k(n-1)}, x_{\tau_k(n)})\} $. Thus, \texttt{DPTheilSen(n-1)Match}, referred to simply as \texttt{DPTheilSen}, uses all pairs of points. } We will focus mainly on $k=n-1$, which we will refer to simply as \texttt{DPTheilSen} and $k=1$, which we will refer to as \texttt{DPTheilSenMatch}. For any other $k$, we denote the algorithm as \texttt{DPTheilSenkMatch}. In the following subsections we discuss the different differentially private median algorithms we use as subroutines.
The pseudo-code for \texttt{DPTheilSenkMatch} is found in Algorithm~\ref{alg:ts-match}.

\longversion{

\begin{lemma}\label{lem:ts-match} 
If DPmed$(\bz^{(p25)}, \eps, (n, k, \text{hyperparameters})=\\\mathcal{M}(z^{(p25)}, \text{hyperparameters}))$ for some $(\eps/k, 0)$-DP mechanism $\mathcal{M}$, then Algorithm~\ref{alg:ts-match} (\texttt{DPTheilSenkMatch}) is $(\eps, 0)$-DP.
\end{lemma}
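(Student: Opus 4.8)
The plan is to view \texttt{DPTheilSenkMatch} as applying the differentially private median mechanism $\mathcal{M}$ to a deterministic function $\bz^{(\ptf)}$ of the data, and then to reduce the claimed $(\eps,0)$-DP guarantee to a group-privacy argument for $\mathcal{M}$. Concretely, I would first fix the $k$ matchings: these are chosen independently of the data, so I can condition on them and average over their randomness at the very end. Given the matchings, the algorithm computes the vector $\bz^{(\ptf)}$ of pairwise prediction estimates at $\xnew=0.25$ as a \emph{deterministic} function of the dataset $d$, and then returns $\mathcal{M}(\bz^{(\ptf)},\text{hyperparameters})$. Since the only randomness (given the matchings) lives in $\mathcal{M}$, it suffices to control how much the input $\bz^{(\ptf)}$ fed to $\mathcal{M}$ can differ between two neighboring datasets.

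The crux is a combinatorial sensitivity bound on the map $d \mapsto \bz^{(\ptf)}$. In a decomposition of (a subgraph of) $K_n$ into $k$ matchings, each matching is a set of disjoint edges, so a given datapoint lies in at most one edge per matching and hence in at most $k$ edges overall; consequently it participates in at most $k$ of the pairs whose estimates populate $\bz^{(\ptf)}$. Therefore, if $d \sim d'$ differ only in the single record $(x_i,y_i)$, then only the (at most $k$) entries of $\bz^{(\ptf)}$ corresponding to pairs containing index $i$ can change, while every entry coming from a pair not involving $i$ is identical across $d$ and $d'$. Thus $\bz^{(\ptf)}(d)$ and $\bz^{(\ptf)}(d')$ agree in all but at most $k$ coordinates, i.e. they are at Hamming distance at most $k$ in the input space of $\mathcal{M}$.

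With this in hand, I would invoke group privacy for pure differential privacy. Because $\mathcal{M}$ is $(\eps/k,0)$-DP with respect to changing a single coordinate of its input, a standard telescoping (hybrid) argument inserts a chain of at most $k$ single-coordinate changes between $\bz^{(\ptf)}(d)$ and $\bz^{(\ptf)}(d')$ and applies the $(\eps/k,0)$ guarantee along each link, yielding for every event $E$,
\[
\Pr[\mathcal{M}(\bz^{(\ptf)}(d)) \in E] \le e^{k\cdot(\eps/k)}\,\Pr[\mathcal{M}(\bz^{(\ptf)}(d')) \in E] = e^{\eps}\,\Pr[\mathcal{M}(\bz^{(\ptf)}(d')) \in E],
\]
with no additive $\delta$ term since $\delta=0$. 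As $\mathcal{M}(\bz^{(\ptf)}(\cdot))$ is exactly the private computation underlying \texttt{DPTheilSenkMatch}, this is the desired bound for each fixed choice of matchings; averaging over the data-independent matching randomness preserves it, and post-processing absorbs any deterministic transformation of $\mathcal{M}$'s output into the final predicted value, establishing $(\eps,0)$-DP.

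The step I expect to be the main obstacle is the combinatorial sensitivity claim: justifying, uniformly over all neighboring pairs $d \sim d'$, that each record influences at most $k$ entries of $\bz^{(\ptf)}$. This is precisely where the matching structure (rather than using all ${n\choose 2}$ pairs with a different accounting) is essential, and it is what pins down the factor $k$ in the group-privacy blow-up. A secondary point to verify carefully is that the matchings are generated without examining the data; once that is confirmed, the conditioning and subsequent averaging are routine.
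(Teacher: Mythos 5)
Your proof is correct and is essentially the paper's own argument unwound: fixing the data-independent matchings and noting that a single changed record alters at most $k$ entries of $\bz^{(p25)}$ is exactly the coupling the paper uses to show the pairwise-estimate transformation is a $k$-Lipschitz randomized transformation, and your group-privacy telescoping is precisely the content of its composition-with-Lipschitz-transformations lemma, giving $k\cdot(\eps/k)=\eps$ with no $\delta$ term. The one subtlety both treatments gloss over identically is that pairs with $x_l = x_j$ are skipped, so a changed record can also add or remove entries of $\bz^{(p25)}$ rather than merely modify them; the Hamming-distance-$\le k$ claim (and hence the argument) still goes through under the natural reading.
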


\begin{algorithm} 
  \KwData{\data}
  \KwPrivacyparams{$\eps$}
  \KwHyperparams{$n, k$, DPmed, hyperparams}
  
  $\bz^{(p25)}, \bz^{(p75)}~= [~]$
  
  Let $\tau_1, \cdots, \tau_{n-1}$ be a decomposition of $K_n$ into matchings.
  
  \For{$k$ iterations}{
  
  Sample (without replacement) $h\in[n-1]$

  \For{$0 \leq i < n-1, i = i + 2$} {
    $j = \tau_h(i)$
        
    $l = \tau_h(i+1)$
    
    \If {$(x_{l} - x_{j} \neq 0)$} {
        
        $s = (y_{l} - y_{j})/(x_{l} - x_{j})$
        
        $z^{(p25)}_{j,l} = s \left(0.25 - \frac{x_l + x_j}{2}\right) + \frac{y_l + y_j}{2}$
        
        $z^{(p75)}_{j,l} = s \left(0.75 - \frac{x_l + x_j}{2}\right) + \frac{y_l + y_j}{2}$

        Append $z^{(p25)}_{j, l}$ to $\bz^{(p25)}$ and $z^{(p75)}_{j, l}$ to $\bz^{(p75)}$
        }
  }
  }
  $\tilde{p}_{25} = \textrm{DPmed}\left(\bz^{(p25)}, \eps, (n, k, \text{hyperparams})\right)$
 
  $\tilde{p}_{75} = \textrm{DPmed}\left(\bz^{(p75)}, \eps, (n, k, \text{hyperparams})\right)$ 
  
  \Return $\tilde{p}_{25}, \tilde{p}_{75}$

  \caption{\texttt{DPTheilSenkMatch}: $(\eps, 0)$-DP Algorithm} \label{alg:ts-match}
\end{algorithm}

}

\subsubsection{DP Median using the Exponential Mechanism}

The first differentially private algorithm for the median that we will consider is an instantiation of the exponential mechanism~\cite{McSherryT07}, a differentially private algorithm designed for general optimization problems. The exponential mechanism is defined with respect to a utility function $u$, which maps (dataset, output) pairs to real values. For a dataset $\bz$, the mechanism aims to output a value $r$ that maximizes $u(\bz,r)$.

\begin{definition}[Exponential Mechanism~\cite{McSherryT07}] 
Given dataset $\bz \in \reals^n$ and the range of the outputs, $[r_l, r_u]$, the exponential mechanism outputs $r \in [r_l, r_u]$ with probability proportional to $\exp\left(\frac{\eps u(\bz, r)}{2 GS_u}\right)$, where \[GS_u=\max_{r\in[r_l,r_u]}\max_{\bz, \bz' \text{neighbors}} |u(\bz, r) - u(\bz', r)|.\]
\end{definition}

One way to instantiate the exponential mechanism to compute the median is by using 
the following utility function. Let 
\[
u(\bz, r) = - \left \vert
\#\text{above } r - \#\text{below } r
\right\vert 
\]
where \#above $r$ and \#below $r$ denote the number of datapoints in $\bz$ that are above and below $r$ in value respectively, not including $r$ itself.
\shortversion{}
 An example of the shape of the output distribution of this algorithm is given in Figure~\ref{diag:em}. An efficient implementation is given in the Appendix. 
\shortversion{}
\longversion{

}
\begin{figure}
\includegraphics[scale=0.2]{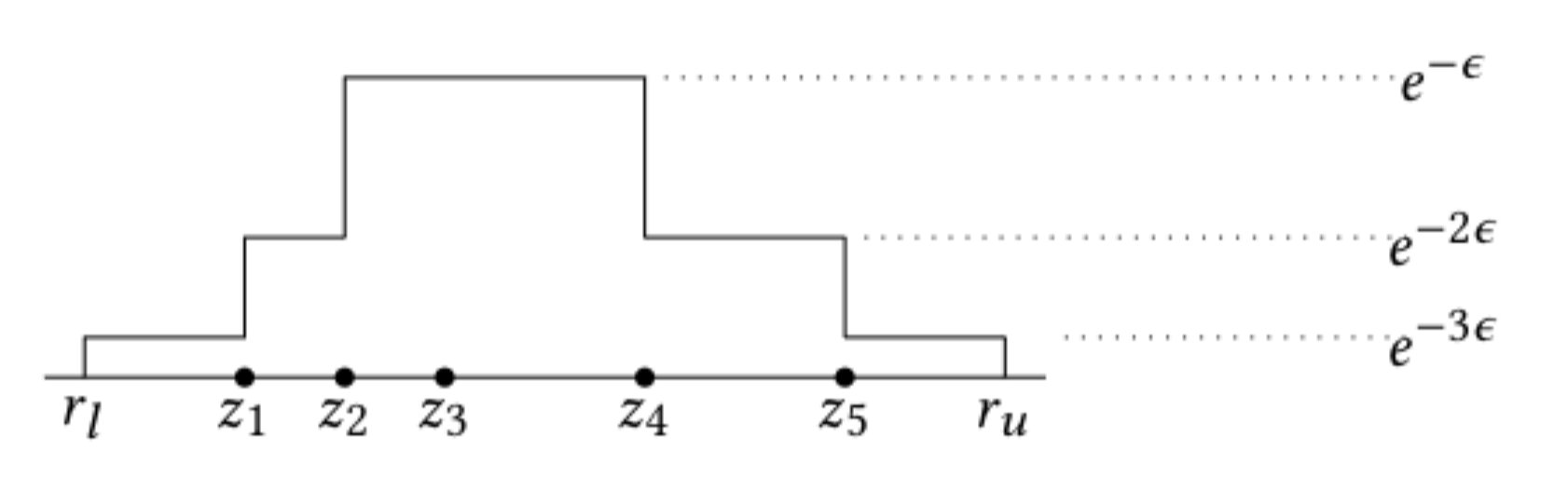}
\caption{Unnormalized distribution of outputs of the exponential mechanism for differentially privately computing the median of dataset $\bz$.}\label{diag:em}
\end{figure}
\shortversion{}
We will write \texttt{DPExpTheilSenkMatch} to refer to \texttt{DPTheilSenkMatch} where \shortversion{the DP median function } \longversion{DPmed } is the DP exponential mechanism described above with privacy parameter $\eps/k$. Again, we write \texttt{DPExpTheilSenMatch} when $k=1$ and \texttt{DPExpTheilSen} when $k=n-1$.
\longversion{
\begin{lemma}\label{lem:ets}
\texttt{DPExpTheilSenkMatch} is $(\eps, 0)$-DP. 
\end{lemma}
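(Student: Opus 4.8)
The plan is to reduce the claim to the privacy of the median subroutine and then apply Lemma~\ref{lem:ts-match}. By definition, \texttt{DPExpTheilSenkMatch} is the algorithm \texttt{DPTheilSenkMatch} (Algorithm~\ref{alg:ts-match}) in which DPmed is instantiated as the exponential mechanism with the median utility $u(\bz,r)=-|\#\text{above } r-\#\text{below } r|$ and privacy parameter $\eps/k$. Lemma~\ref{lem:ts-match} states that whenever DPmed is an $(\eps/k,0)$-DP mechanism, the whole algorithm is $(\eps,0)$-DP; crucially, this lemma already absorbs the group-privacy amplification coming from the fact that each data record perturbs exactly $k$ of the entries of $\bz^{(p25)}$ (and of $\bz^{(p75)}$), together with the combined release of both $\ptf$ and $\psf$. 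So the only thing left for me to establish is that the exponential mechanism with parameter $\eps/k$ is $(\eps/k,0)$-DP with respect to a single-coordinate change of its input vector.

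This is the content of the McSherry--Talwar guarantee~\cite{McSherryT07}, and I would prove it directly. Fix neighbouring input vectors $\bz\sim\bz'$, write $\Delta=GS_u$ for the global sensitivity appearing in the definition of the mechanism, and let $Z_{\bz}=\int_{r_l}^{r_u}\exp(\eps' u(\bz,s)/(2\Delta))\,ds$ be the normalizing constant, where $\eps'=\eps/k$. For any output $r\in[r_l,r_u]$ the output density ratio factors as $\exp(\eps'(u(\bz,r)-u(\bz',r))/(2\Delta))\cdot(Z_{\bz'}/Z_{\bz})$. The first factor is at most $\exp(\eps'/2)$ by the definition of $\Delta$, and the same bound applied inside the integral gives $Z_{\bz'}\le\exp(\eps'/2)\,Z_{\bz}$, so the second factor is also at most $\exp(\eps'/2)$. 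Multiplying, the density ratio is at most $\exp(\eps')$ at every $r$, which integrates to $\Pr[\mathcal{M}(\bz)\in E]\le e^{\eps'}\Pr[\mathcal{M}(\bz')\in E]$ for every event $E$, i.e.\ $(\eps',0)=(\eps/k,0)$-DP. Combined with Lemma~\ref{lem:ts-match}, this yields the claim.

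The only place this departs from the textbook statement is that the output range $[r_l,r_u]$ is a continuous interval rather than a finite set, so I must run the argument with integrals in place of sums and check that $Z_{\bz}$ is finite and strictly positive; this is immediate because $u(\bz,\cdot)$ is bounded and piecewise constant on the bounded interval $[r_l,r_u]$. I do not expect a genuine obstacle here: in particular, the numerical value of $\Delta=GS_u$ never needs to be computed, since the mechanism is defined with exactly this quantity in its exponent, which is precisely what makes the two $\exp(\eps'/2)$ bounds hold by construction. The remaining bookkeeping---group privacy across the $k$ matchings and the simultaneous release of the two predicted values---is inherited wholesale from Lemma~\ref{lem:ts-match}, so no additional composition argument is needed.
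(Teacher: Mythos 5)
Your proposal is correct and takes essentially the same route as the paper: the paper's proof simply combines Lemma~\ref{lem:ts-match} with the exponential mechanism's privacy guarantee (Theorem~\ref{thm:exp-mech}), exactly the decomposition you use. The only difference is that you re-derive the McSherry--Talwar bound inline (in its continuous-output, integral form) instead of citing it as a black box, which is a harmless expansion rather than a different argument.
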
}

\subsubsection{DP Median using Widened Exponential Mechanism}
When the output space is the real line, the standard exponential mechanism for the median has some nuanced behaviour when the data is highly concentrated. For example, imagine in Figure~\ref{diag:em} if all the datapoints coincided. In this instance, \texttt{DPExpTheilSen} is simply the uniform distribution on $[r_l, r_u]$, despite the fact that the median of the dataset is very stable. To mitigate this issue, we use a variation on the standard utility function. For a widening parameter $\theta>0$, the widened utility function is 
\[u(\bz, r) = -\min\left\{\left \vert
\#\text{above } a - \#\text{below } a
\right \vert \;:\; |a-r|\le\theta\right\} \]
where \#above $a$ and \#below $a$ are defined as before. 
This has the effect of increasing the probability mass around the median, as shown in Figure~\ref{diag:wem}.

\begin{figure}
\includegraphics[scale=0.2]{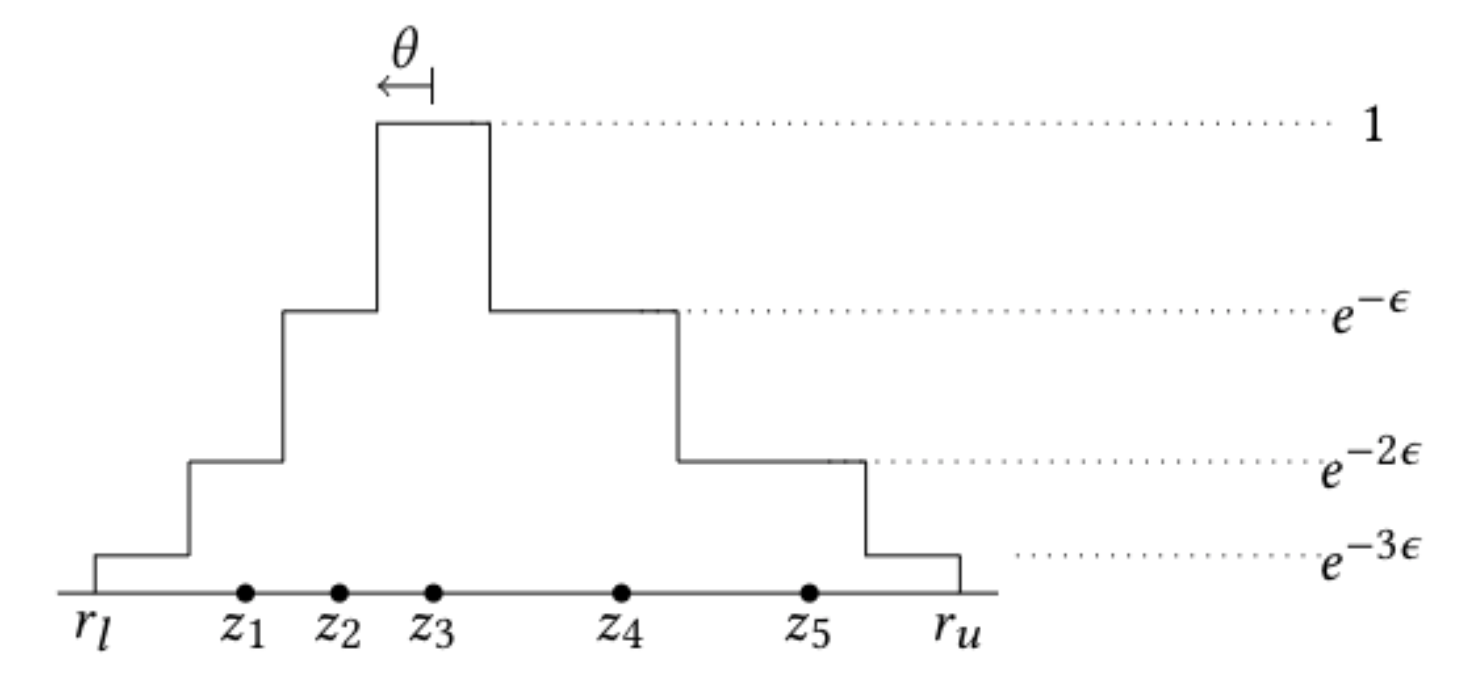}
\caption{Unnormalized distribution of outputs of the $\theta$-widened exponential mechanism for differentially privately computing the median of dataset $\bz$.}\label{diag:wem}
\end{figure}

The parameter $\theta$ needs to be carefully chosen. All outputs within $\theta$ of the median are given the same utility score, so $\theta$ represents a lower bound on the error. Conversely, choosing $\theta$ too small may result in the area around the median not being given sufficient weight in the sampled distribution. 
\longversion{Note that $\theta>0$ is only required when one expects the datasets $\bz^{p25}$ to be highly concentrated (for example when the dataset has strong linear signal). }
We defer the question of optimally choosing $\theta$ to future work. 

An efficient implementation of the $\theta$-widened exponential mechanism for the median can be found in
the Appendix\longversion{ as Algorithm~\ref{alg:wem}}. We will use \texttt{DPWideTheilSenkMatch} to refer to \texttt{DPTheilSenkMatch} where \shortversion{the DP median mechanism }\longversion{DPmed } is the $\theta$-widened exponential mechanism with privacy parameter $\eps/k$. Again, we use\\ \texttt{DPWideTheilSenMatch} when $k=1$ and \texttt{DPWideTheilSen} when $k=n-1$.

\longversion{
\begin{lemma}\label{lem:wts}
\texttt{DPWideTheilSenkMatch} is $(\eps, 0)$-DP.
\end{lemma}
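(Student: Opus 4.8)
The plan is to reduce the claim to a single sensitivity computation and then invoke the composition result already established for the matching structure. By Lemma~\ref{lem:ts-match}, \texttt{DPTheilSenkMatch} is $(\eps,0)$-DP whenever its median subroutine \texttt{DPmed} is $(\eps/k,0)$-DP. Since \texttt{DPWideTheilSenkMatch} is exactly \texttt{DPTheilSenkMatch} instantiated with the $\theta$-widened exponential mechanism run at privacy parameter $\eps/k$, it suffices to show that the $\theta$-widened exponential mechanism with parameter $\eps'$ is $(\eps',0)$-DP. This mirrors the proof of Lemma~\ref{lem:ets} for the unwidened mechanism; the only thing that changes is the utility function, so the entire argument reduces to checking that the global sensitivity of the widened utility is no larger than the normalizing constant $GS_u$ already used in the exponent.

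Recall the exponential mechanism is $(\eps',0)$-DP whenever the value placed in the denominator of its exponent is a valid upper bound on the global sensitivity of its utility function. Thus I would first fix the standard median quantity $g_\bz(t) = |\#\text{above } t - \#\text{below } t|$ and recall, as in Lemma~\ref{lem:ets}, that $GS_u = \max_t\max_{\bz\sim\bz'}|g_\bz(t)-g_{\bz'}(t)|$ is finite: replacing a single record can move at most one point across $t$, changing $\#\text{above } t - \#\text{below } t$ by at most $2$ and hence $g_\bz(t)$ by at most $2$. The widened utility can then be written as $u_\theta(\bz, r) = -\min_{t:\,|t-r|\le\theta} g_\bz(t)$, i.e. a pointwise minimum of the same family $\{g_\bz(t)\}$ taken over a window that depends only on $r$.

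The crux is then the observation that taking a minimum (or maximum) over a fixed index set of functions that each have sensitivity at most $c$ yields a function of sensitivity at most $c$. Concretely, for neighbors $\bz\sim\bz'$, if $t^*$ achieves the minimum for $\bz$ over the window $[r-\theta, r+\theta]$, then $\min_t g_{\bz'}(t) \le g_{\bz'}(t^*) \le g_\bz(t^*) + GS_u = \min_t g_\bz(t) + GS_u$, and symmetrically with the roles of $\bz$ and $\bz'$ swapped; hence $|u_\theta(\bz,r) - u_\theta(\bz',r)| \le GS_u$ for every $r$ and every neighboring pair. Taking the maximum over $r$ and over neighbors gives $GS_{u_\theta} \le GS_u$, so running the exponential mechanism with the widened utility while normalizing by $GS_u$ remains $(\eps',0)$-DP. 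Setting $\eps' = \eps/k$ and applying Lemma~\ref{lem:ts-match} completes the argument.

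The step I expect to be the main obstacle is precisely the middle claim that widening does not inflate the sensitivity: intuitively one might fear that optimizing the utility over a window around $r$ amplifies the effect of a record change, and the proof must make explicit that the window $[r-\theta,r+\theta]$ is the \emph{same} for both $\bz$ and $\bz'$ (it is a function of $r$ alone, not of the data), which is exactly what lets the uniform per-point bound pass through the minimum unchanged. Everything else is bookkeeping: the per-point sensitivity bound is identical to the unwidened case, and the reduction through Lemma~\ref{lem:ts-match} is immediate.
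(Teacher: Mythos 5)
Your proposal is correct and takes essentially the same route as the paper: the paper's proof of this lemma is a one-liner invoking Theorem~\ref{thm:exp-mech} (privacy of the exponential mechanism) together with the reduction in Lemma~\ref{lem:ts-match}, exactly as you do. The only difference is that you explicitly verify the point the paper leaves implicit---that the $\theta$-widened utility has global sensitivity no larger than the unwidened median utility, because the minimum over the fixed, data-independent window $[r-\theta, r+\theta]$ preserves the pointwise sensitivity bound---which is a worthwhile detail to spell out but not a different argument.
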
}

\subsubsection{DP Median using Smooth Sensitivity Noise Addition}
The final algorithm we consider for releasing a differentially private median adds noise scaled to the smooth sensitivity -- a smooth upper bound on the local sensitivity function. Intuitively, this algorithm should perform well when the datapoints are clustered around the median; that is, when the median is very stable. 
\begin{definition}[Smooth Upper Bound on $LS_f$~\cite{NRS07}] For $t > 0$, a function
$S_{f, t}:\calX^n\rightarrow\reals$ is a $t$-{\em smooth
upper bound on the local sensitivity} of a function $f: \calX^n \rightarrow \reals$ if:
$$
\forall \bz\in\calX^n: LS_f(\bz) \leq S_{f, t}(\bz);
$$
$$
\forall \bz, \bz'\in\calX^n, d(\bz, \bz') = 1: S_{f, t}(\bz) \leq e^t\cdot S_{f, t}(\bz').
$$

where $d(\bz, \bz')$ is the distance between datasets $\bz$ and $\bz'$. 
\label{def:upper}
\end{definition}

Let $\calZ_k:([0, 1]\times[0, 1])^n\rightarrow\reals^{kn/2}$ to denote
the function that transforms a set of point coordinates into estimates for each pair of 
points in our $k$ matchings\longversion{, so in the notation of Algorithm~\ref{alg:ts-match}, 
$\bz^{p25} = \calZ(\bx, \by)$}. 
The function that we are concerned with the smooth sensitivity of is med$\circ\calZ_k$\longversion{, which maps $(\bx, \by)$ to med$(\bz^{p25})$}. We will use the following  
smooth upper bound to the local sensitivity:
\begin{lemma}
\label{lem:ss-upper}
Let $z_1\leq z_2\leq \cdots \leq z_{2m}$ be a sorting of $\calZ_k(\bx, \by)$. Then
\begin{align*} 
    S_{\text{med}\circ\calZ, t}^k((\bx, \by))& \\
    =\max\Big\{&z_{m+k}-z_m, z_{m}-z_{m-k}, \\
    & \max_{l = 1, \ldots, n} \max_{s=0, \cdots, k(l+1)} e^{-lt} ( z_{m+s}-z_{m-(k(l+1)+s})\Big\},
\end{align*} 
is a $t$-smooth upper bound on the local sensitivity of med$\circ\calZ_k$.
\end{lemma}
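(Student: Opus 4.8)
The plan is to verify the two defining properties of a $t$-smooth upper bound from Definition~\ref{def:upper} for the function $f=\text{med}\circ\calZ_k$: first that the claimed expression dominates the local sensitivity $LS_f$ pointwise, and second that it changes by at most a factor $e^t$ between neighbors. Throughout I would write $B^{(0)}(\bx,\by)=\max\{z_{m+k}-z_m,\,z_m-z_{m-k}\}$ and, for $l\ge 1$, $B^{(l)}(\bx,\by)=\max_{0\le s\le k(l+1)}\big(z_{m+s}-z_{m-(k(l+1)+s)}\big)$, so that the claimed bound is $S_{\text{med}\circ\calZ,t}^k=\max_{0\le l\le n}e^{-lt}B^{(l)}$, with the convention that $z_j=-\infty$ for $j\le 0$ and $z_j=+\infty$ for $j>2m$ (equivalently, clipped to the search range). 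Two elementary facts drive everything. (F1) Each record $(x_i,y_i)$ lies in at most $k$ of the pairwise estimates of $\calZ_k(\bx,\by)$ (one per matching, assuming distinct $x$-coordinates so no pair is dropped), so a neighboring dataset alters at most $k$ of the sorted values $z_1\le\cdots\le z_{2m}$. (F2) Altering any $p$ entries of a sorted list moves every order statistic by at most $p$ positions, i.e. $z_{j-p}\le z'_j\le z_{j+p}$ for all $j$; this follows by counting how many unchanged entries must lie below $z_{j+p}$ and above $z_{j-p}$.

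For the first property it suffices to show $LS_f(\bx,\by)\le B^{(0)}(\bx,\by)$, since $B^{(0)}$ appears in the maximum defining $S_{\text{med}\circ\calZ,t}^k$ with coefficient $e^{0}=1$. Indeed $LS_f$ is the largest change of the median $z_m$ over single-record neighbors; by (F1) such a neighbor changes at most $k$ of the $z_i$, and by (F2) its median therefore lies in $[z_{m-k},z_{m+k}]$. Hence the change is at most $\max\{z_{m+k}-z_m,\,z_m-z_{m-k}\}=B^{(0)}$, as required.

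For smoothness I would use the standard chaining argument for bounds of the form $\max_l e^{-lt}B^{(l)}$: it is enough to establish the single domination inequality $B^{(l)}(\bx,\by)\le B^{(l+1)}(\bx',\by')$ for every $l\ge 0$ and every neighboring pair $(\bx,\by)\sim(\bx',\by')$. Granting this, for a neighbor $e^{-lt}B^{(l)}(\bx,\by)\le e^{t}\,e^{-(l+1)t}B^{(l+1)}(\bx',\by')\le e^{t}\,S_{\text{med}\circ\calZ,t}^k(\bx',\by')$, and taking the maximum over $l$ gives $S_{\text{med}\circ\calZ,t}^k(\bx,\by)\le e^{t}\,S_{\text{med}\circ\calZ,t}^k(\bx',\by')$, with symmetry supplying the reverse direction. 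To prove the domination, let $z'$ be the sorted estimates of the neighbor; by (F1)--(F2) we have $z'_j\in[z_{j-k},z_{j+k}]$. For a term $z_{m+s}-z_{m-(k(l+1)+s)}$ of $B^{(l)}$ with $0\le s\le k(l+1)$, these bounds give $z_{m+s}\le z'_{m+s+k}$ and $z_{m-(k(l+1)+s)}\ge z'_{m-(k(l+2)+s)}$, so the term is at most $z'_{m+s+k}-z'_{m-(k(l+2)+s)}$. Writing $s'=s+k\in[k,k(l+2)]\subseteq[0,k(l+2)]$, this equals $z'_{m+s'}-z'_{m-(k(l+1)+s')}$, which is at most the level-$(l+1)$ term $z'_{m+s'}-z'_{m-(k(l+2)+s')}$ because $k(l+2)\ge k(l+1)$. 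The $l=0\to l=1$ step is the same computation run against $B^{(0)}$. This yields $B^{(l)}(\bx,\by)\le B^{(l+1)}(\bx',\by')$.

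The main obstacle is precisely this index bookkeeping, and it explains the otherwise mysterious shape of the bound. A tighter estimate of the sensitivity at distance $l$ would use an index gap of only $k(l+1)$, but such a bound need not survive the $k$-position order-statistic shifts caused by a single neighbor, so the chaining inequality could fail. The symmetric widening window $z_{m+s}-z_{m-(k(l+1)+s)}$, whose index span $k(l+1)+2s$ grows with $s$, is engineered so that shifting every index outward by $k$ still lands inside the level-$(l+1)$ window; checking this containment for all $s$, at the boundary indices (via the $\pm\infty$ / clipping convention and the cap $l\le n$), is the crux. The factor $k$ throughout---rather than $1$ as in the classical median---reflects that one record perturbs up to $k$ pairwise slopes at once, and one must also verify that degenerate pairs (those with $x_l=x_j$, skipped in Algorithm~\ref{alg:ts-match}) change the count of valid estimates by at most $k$, so that the median position shifts by an amount already absorbed by the widened window. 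Tracking this factor consistently through (F1)--(F2) and the reindexing is the part that requires care; I do not expect to prove tightness, only that the stated expression is a valid (if loose) $t$-smooth upper bound.
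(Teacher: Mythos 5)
Your proposal is correct, but it reaches the smoothness property by a genuinely different route than the paper. The paper's proof is semantic: it first argues (via Equation~\ref{LSSS}, justified by the proof-by-picture in Figure~\ref{SSn-1proof}) that each level of the formula is exactly $\max_{\bz':\, d(\bz,\bz')\le lk} LS^{k}_{\text{med}}(\bz')$, the largest distance-$k$ local sensitivity of the median over the Hamming ball of radius $lk$ around $\bz$; once the formula is recognized as this step-$k$ analogue of the Nissim--Raskhodnikova--Smith smooth sensitivity, the $e^t$ bound follows abstractly from nesting of Hamming balls (a one-record change moves $\bz$ by at most $k$ estimates, so the radius-$lk$ ball around the neighbor sits inside the radius-$(l+1)k$ ball around $\bz$). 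You never prove that identity --- which is an equality and requires exhibiting worst-case perturbations --- and instead work purely syntactically with the order-statistic expression: your (F1) is the paper's same "one record touches at most $k$ estimates" observation, but your (F2) ($z'_j \in [z_{j-k}, z_{j+k}]$) plus the reindexing $s' = s+k$ yields the one-sided domination $B^{(l)}(\bx,\by)\le B^{(l+1)}(\bx',\by')$ directly, which is all that chaining needs. What the paper's route buys is an explanation of where the formula comes from and tightness at each level; what your route buys is an argument that needs only inequalities and whose index bookkeeping is carried out explicitly rather than delegated to a figure. Both proofs share the same two loose ends, which you flag and the paper does not: the convention for out-of-range indices (under clipping to $[r_l,r_u]$, the level-$n$ term saturates at $r_u-r_l$ for every dataset, which also settles your top-level $l=n$ chaining concern, where $B^{(n+1)}$ falls outside the max), and the fact that a record change can alter the number of valid estimates when $x$-coordinates coincide. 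Your blind reconstruction is, if anything, more complete than the published argument.
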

\begin{proof}
Proof in the Appendix.
\end{proof}

The algorithm then adds noise proportional to $S_{\text{med}\circ\calZ, t}^k((\bx, \by))/\eps$ to med$\circ\calZ(\bx,\by)$. \longversion{A further discussion of the formula $S_{\text{med}\circ\calZ, t}^k((\bx, \by))$ and pesudo-code can be found in Appendix~\ref{sec:compute-ss}. } \shortversion{Pseudo-code is given in the Appendix.}
The noise is sampled from the Student's T distribution.
There are several other valid choices of noise distributions  (see~\cite{NRS07} and~\cite{BunS19}), but we found the Student's T distribution to be preferable as the mechanism remains stable across values of $\eps$ 

\longversion{
\begin{lemma}\label{lem:ssts}
\texttt{DPSSTheilSenkMatch} is $(\eps, 0)$-DP.
 \end{lemma}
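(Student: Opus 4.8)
The plan is to reduce the privacy of \texttt{DPSSTheilSenkMatch} to two ingredients that are already (or can be) in hand: the validity of the smooth upper bound from Lemma~\ref{lem:ss-upper}, and the generic privacy guarantee of smooth-sensitivity noise addition with an admissible noise distribution. Unlike the exponential-mechanism variants, I would \emph{not} route this through Lemma~\ref{lem:ts-match}: there the factor $k$ enters through group privacy applied to a median mechanism that is $(\eps/k)$-DP with respect to single-entry changes of $\bz$, whereas here the dependence on $k$ is absorbed directly into the bound $S^k_{\text{med}\circ\calZ_k,t}$, which already measures the effect on $\text{med}\circ\calZ_k$ of changing a single record of $(\bx,\by)$ (equivalently, the $k$ coordinates of $\bz$ that this record touches across the $k$ matchings).

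First I would fix neighboring inputs $(\bx,\by)\sim(\bx',\by')$ and recall that, by construction of the $k$ matchings, altering one record changes exactly $k$ of the pairwise estimates in $\calZ_k(\bx,\by)$. Lemma~\ref{lem:ss-upper} then guarantees that $S^k_{\text{med}\circ\calZ_k,t}$ is a $t$-smooth upper bound on the local sensitivity of the scalar function $(\bx,\by)\mapsto\text{med}(\calZ_k(\bx,\by))$. I would then invoke the smooth-sensitivity framework of~\cite{NRS07}, instantiated with the Student's~T noise distribution analyzed in~\cite{BunS19}: for an appropriate choice of smoothing parameter $t$, degrees of freedom, and scale (as functions of the target privacy parameter), releasing $\text{med}(\calZ_k(\bx,\by)) + \frac{S^k_{\text{med}\circ\calZ_k,t}(\bx,\by)}{\eps''}\cdot Z$ is $(\eps'',0)$-DP for a single median.

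Finally, since the algorithm releases the pair $(\tildeptf,\tildepsf)$ --- the two smooth-sensitivity medians of $\bz^{(p25)}$ and $\bz^{(p75)}$ --- I would account for both outputs by basic composition, running each single-median release at budget $\eps/2$ so that the overall mechanism satisfies $(\eps,0)$-DP in the sense of Definition~\ref{def:DP}. One could alternatively pursue a joint smooth-sensitivity bound on the vector $(\text{med}(\bz^{(p25)}),\text{med}(\bz^{(p75)}))$; since Lemma~\ref{lem:ss-upper} is stated in the scalar case, composition is the cleaner route.

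The main obstacle is the middle step: obtaining \emph{pure} $(\eps'',0)$-DP rather than approximate DP from smooth-sensitivity noise. Light-tailed noise (Laplace, Gaussian) scaled to the smooth sensitivity yields only $(\eps,\delta)$-DP with $\delta>0$, so the argument rests on the heavy-tailed Student's~T distribution being \emph{admissible} in the sense of~\cite{NRS07,BunS19} and on calibrating $t$, the degrees of freedom, and the scale jointly to the smoothness constant --- a calculation that must be carried out carefully to certify that the resulting mechanism meets the pure-DP inequality with $\delta=0$. Once this is established, the remaining steps (counting the $k$-coordinate group effect and composing the two releases) are routine.
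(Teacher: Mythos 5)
Your proposal matches the paper's own proof: the paper likewise bypasses Lemma~\ref{lem:ts-match}, absorbing the factor $k$ directly into the smooth upper bound of Lemma~\ref{lem:ss-upper} (via the distance-$k$ local sensitivity of the median), and then certifies pure $(\eps,0)$-DP for each median release using Student's~T noise calibrated exactly as you describe --- this is the paper's Lemma~\ref{lem:ss}, which invokes Theorems 31 and 46 of~\cite{BunS19} with $t=\eps/(2(d+1))$ and $s=\eps\sqrt{d}/(d+1)$ --- finishing with basic composition across the two released predictions. The admissibility-and-calibration step you flag as the main obstacle is precisely the content of that lemma, so your outline is correct and essentially identical to the paper's argument.
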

 }

 \subsection{DP Gradient Descent}

 Ordinary Least Squares (OLS) for simple 1-dimensional linear
 regression is defined as the solution to the optimization problem \shortversion{in Equation~\ref{OLSopt}.} \longversion{\[\argmin_{\alpha, \beta} \sum_{i=1}^n (y_i-\alpha x_i-\beta)^2.\]}
 There has been an extensive line of work on solving convex optimization problems in a differentially private manner.
 We use the private gradient descent algorithm of~\cite{BST14}
 to provide private estimates of the 0.25, 0.75 predictions $(\ptf, \psf)$. This algorithm performs standard gradient descent, except that noise is added to a clipped version of the gradient at each round (clipped to range $[-\tau, \tau]^2$ for some setting of $\tau > 0$).
 \longversion{The number of calls to the gradient needs to be chosen carefully since we
 have to split our privacy budget amongst the gradient calls. 
 We note that we are yet to fully explore the full suite of parameter settings for this method. 
 See the Appendix
 for pseudo-code for our implementation and~\cite{BST14} for more information on 
 differentially private stochastic gradient descent. 
 We consider three versions of \texttt{DPGradDescent}: \texttt{DPGDPure}, \texttt{DPGDApprox}, and \texttt{DPGDzCDP}, which are $(\eps, 0)$-DP, $(\eps, \delta)$-DP, and
 $(\eps^2/2)$-zCDP algorithms, respectively. Zero-concentrated differential privacy (zCDP) is a slightly weaker notion of differential privacy that is well suited to iterative algorithms.
 For the purposes of this paper, we set $\delta = 2^{-30}$ whenever
 \texttt{DPGDApprox} is run, and set the privacy parameter of \texttt{DPGDzCDP} to allow for a fair comparison. \texttt{DPGDPure} provides the strongest privacy guarantee
 followed by \texttt{DPGDApprox} and then \texttt{DPGDzCDP}. As expected, \texttt{DPGDzCDP} typically has the best performance.}

\subsection{NoisyIntercept}

We also compare the above DP mechanisms to simply adding noise to the average $y$-value. For any given dataset
$(\bx, \by)\in([0, 1]\times [0,1])^n$, this method computes 
$\bar{y} = \frac{1}{n}\sum_{i=1}^ny_i$ and outputs a noisy estimate $\tilde{y} = \bar{y} + \Lap\left(0, \frac{1}{\eps n}\right)$ as the predicted $\tildeptf, \tildepsf$ estimates. This method performs well when the slope $\alpha$ is very small.

\subsection{A Note on Hyperparameters}
We leave the question of how to choose the optimal hyperparameters for each algorithm to future work. Unfortunately, since the optimal hyperparameter settings may reveal sensitive information about the dataset, one can not tune the hyperparameters on a hold-out set. However, we found that for most of the hyperparameters, once a good choice of hyperparameter setting was found, it could be used for a variety of similar datasets. Thus, one realistic way to tune the parameters may be to tune on a public dataset similar to the dataset of interest. For example, for applications using census data, one could tune the parameters on previous years' census data.

\longversion{We note that, as presented here, \texttt{NoisyStats} requires no hyperparameter tuning, while \texttt{DPExpTheilSen} and \texttt{DPSSTheilSen} only require a small amount of tuning (they require upper and lower bounds on the output). 
However, \texttt{NoisyStats} requires knowledge of the range of the inputs $x_i, y_i$, which is not required by the Theil-Sen methods.
Both \texttt{DPGradDescent} and \texttt{DPWideTheilSen} can be quite sensitive to the choice of hyperparameters. In fact, \texttt{DPExpTheilSen} is simply \texttt{DPWideTheilSen} with $\theta=0$, and we will often see a significant difference in the behaviour of these algorithms.
Preliminary experiments on the robustness of \texttt{DPWideTheilSen} to the choice of $\theta$ can be found in
the Appendix.}

\section{Experimental Outline}
\label{sec:exp}
The goal of this work is to provide insight and guidance into what features of a dataset should be considered when choosing a differentially private algorithm for simple linear regression. As such, in the following sections, we explore the behavior of these algorithms on a variety of real datasets. 
We also consider some synthetic datasets where we can further explore how properties of the dataset affect performance. 

\subsection{Description of the Data}

\subsubsection{Opportunity Insights data}

The first dataset we consider is a simulated version of the data used by the Opportunity Insights team in creating the Opportunity Atlas tool described in Section~\ref{OIusecase}. \longversion{
In the
appendix, we describe the data generation process used by the
Opportunity Insights team to generate the simulated data. } Each datapoint, $(x_i,y_i)$, corresponds to a pair, (parent income percentile rank, child income percentile rank)
\longversion{, where parent income is defined as the total pre-tax income at the household level, averaged over the years 1994-2000, and child income is defined as the total pre-tax income at the individual level, averaged over the years 2014-2015 when the children are between the ages of 31-37}. 
In the Census data, every state in the United States is partitioned into small neighborhood-level blocks called tracts. We perform the linear regression on each tract individually. 
The ``best'' differentially private algorithm differs from state to state, and even tract to tract. We focus on results for Illinois (IL)  which has a total of $n=219,594$ datapoints divided among 3,108 tracts.
 The individual datasets (corresponding to tracts in the Census data) each contain between $n=30$ and $n=400$ datapoints. The statistic $n\var(\bx)$ ranges between 0 and 25, with the majority of tracts having $n\var(\bx)\in[0,5]$. 
\longversion{We use $\eps=16$ in all our experiments on this data since that is the value approved by the Census for, and currently used in, the release of the Opportunity Atlas~\cite{CFHJP18}. }
\subsubsection{Washington, DC Bikeshare UCI Dataset}

Next, we consider a family of small datasets containing data from the Capital Bikeshare system, in Washington D.C., USA, from the years 2011 and 2012\footnote{This data is publicly available at \url{http://capitalbikeshare.com/system-data} \cite{Hadi:2013}.}. Each $(x_i,y_i)$ datapoint of the dataset contains the temperature ($x_i$) and user count of the bikeshare program ($y_i$) for a single hour in 2011 or 2012. 
The $x_i$ and $y_i$ values are both normalized so that they lie between 0 and 1 by a linear rescaling of the data.
\longversion{In order to obtain smaller datasets to work with, we segment this dataset into 288 (12x24) smaller datasets each corresponding to a (month, hour of the day) pair. Each smaller dataset}\shortversion{There are 288 datasets in this family, each of which } contains between $n=45$ and $n=62$ datapoints.
We linearly regress the number of active users of the bikeshare program on the temperature. While the variables are positively correlated, the correlation is not obviously linear --- in that the data is not necessarily well fit by a line ---
within many of these datasets, so we see reasonably large standard error values ranging between 0.0003 and 0.32 for this data.
Note that our privacy guarantee is at the level of hours rather than individuals so it serves more as an
example to evaluate the utility of our methods, rather than a real privacy application.
\longversion{While this is an important distinction to make for deployment of differential privacy, we will not dwell on it here since our goal to evaluate the statistical utility of our algorithms on real datasets. } \longversion{An example of one of these datasets is displayed in Figure~\ref{bikedata}.}

\longversion{
\begin{figure}
\includegraphics[width=0.4\textwidth]{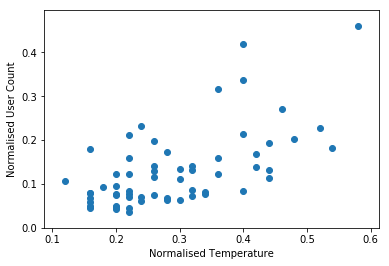}
\caption{Example of dataset from Washington, DC Bikeshare UCI Dataset.}\label{bikedata}
\end{figure}
}

\subsubsection{Carbon Nanotubes UCI Dataset}
While we are primarily concerned with evaluating the behavior of the private algorithms on small datasets, we also present results on a good candidate for private analysis: a large dataset with a strong linear relationship. In contrast to the Bikeshare UCI data and the Opportunity Insights data, this is a single dataset, rather than a family of datasets. This data is drawn from a dataset studying atomic coordinates in carbon nanotubes~\cite{Aci:2016}\longversion{. For each datapoint $(x_i,y_i)$, $x_i$ is the $u$-coordinate of the initial atomic coordinates of a molecule and $y_i$ is the $u$-coordinate of the calculated atomic coordinates after the energy of the system has been minimized. After we filtered out all points that are not contained in $[0,1]\times[0,1]$, the resulting dataset}\shortversion{, and } contains $n=10,683$ datapoints. \longversion{A graphical representation of the data is included in  Figure~\ref{carbondata}. }
Due to the size of this dataset, we run the efficient \texttt{DPTheilSenMatch} algorithm with $k=1$ on this dataset. This dataset does not contain sensitive information, however we have included it to evaluate the behaviour of the DP algorithms on a variety of real datasets.

\longversion{
\begin{figure}
\includegraphics[width=0.4\textwidth]{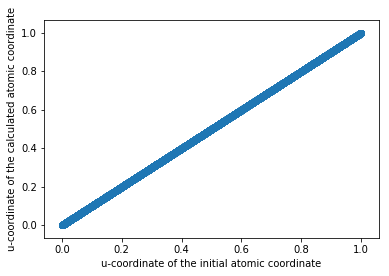}
\caption{Carbon Nanotubes UCI Dataset.
}\label{carbondata}
\end{figure}
}

\subsubsection{Stock Exchange UCI Dataset}

Our final real dataset studies the relationship between the Istanbul Stock Exchange and the USD Stock Exchange \cite{Akbilgic:2013}. \footnote{This dataset was collected from imkb.gov.tr and finance.yahoo.com.} It compares \{$x_i = $ Istanbul Stock Exchange national 100 index\} to \{$y_i = $ the USD International Securities Exchange\}. 
This dataset has $n=250$ datapoints\longversion{ and a representation is included in Figure~\ref{stockdata}}. This is a a smaller, noisier dataset than the Carbon Nanotubes UCI dataset.

\longversion{
\begin{figure}
\includegraphics[width=0.4\textwidth]{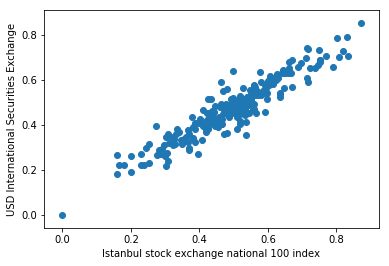}
\caption{The Stock Exchange UCI Dataset.}\label{stockdata}
\end{figure}
}

\subsubsection{Synthetic Data}
The synthetic datasets were constructed by sampling $x_i \in \mathbb{R}$, for $i = 1, \ldots, n$, independently
from a uniform distribution with $\bar{x} = 0.5$ and variance $\sigma_x^2$. For each $x_i$, the corresponding $y_i$ is generated as $y_i = \alpha x_i + \beta + e_i$, where $\alpha = 0.5, \beta = 0.2$, and $e_i$ is sampled from $\calN(0, \sigma_e^2)$. The $(x_i,y_i)$ datapoints are then clipped to the box $[0,1]^2$. The DP algorithms estimate the prediction at $\xnew$ using privacy parameter $\eps$.

The values of $n$, $\sigma_x^2$, $\sigma_e^2$, $\xnew$, and $\eps$ vary across the individual experiments.
The synthetic data experiments are designed to study which properties of the data and privacy regime determine the performance of the private algorithms. Thus, in these experiments, we vary one of parameters listed above
and observe the impact on the accuracy of the algorithms and their relative performance to each other. Since we know the ground truth on this synthetically generated data, we plot empirical error bounds that take into account both the sampling error and the error due to the DP algorithms, $\citrue(68)/\sigma(\hatpxnew)$. 
We evaluate \texttt{DPTheilSenkMatch} with $k=10$ in the synthetic experiments rather than \texttt{DPTheilSen}, since the former is computationally more efficient and still gives us insight into the performance of the latter.

\subsection{Hyperparameters}

\longversion{  \begin{table}
    \caption{Hyperparameters used in experiments on OI data and UCI datasets. 
    }
    \begin{tabular}{ccc} 
      \toprule
     Algorithms & OI/UCI data & Synthetic data\\ [0.5ex] 
      \midrule
      { \texttt{NoisyStats}} & None & None \\ 
      { \texttt{NoisyIntercept}} & None & N/A \\ 
      { \texttt{DPExpTheilSen}} & $r_l = -0.5, r_u = 1.5 $ & $r_l = -2, r_u = 2 $\\
      { \texttt{DPWideTheilSen}} & $r_l = -0.5, r_u = 1.5 $, & $r_l = -2, r_u = 2 $ \\
      &$\theta = 0.01$ & $\theta = 0.01$ \\ 
      { \texttt{DPSSTheilSen}} & $r_l = -0.5, r_u = 1.5 $, & $r_l = -2, r_u = 2 $\\
      &$d= 3$ & $d= 3$ \\ 
      { \texttt{DPGradDescent}} & 
    $\tau = 1$,
      T = 80 & $\tau = 1$,
  T = 80\\
      \bottomrule
    \end{tabular}
    \label{tab:OI-hyperparams}
  \end{table}}

Hyperparameters were tuned on the semi-synthetic Opportunity Insights data by experimenting with many different choices, and choosing the best. The hyperparameters are listed in Table~\ref{tab:OI-hyperparams}.
We leave the question of optimizing hyperparameters
in a privacy-preserving way to future work. 
Hyperparameters for the UCI and synthetic datasets were chosen according to choices that seemed to perform well on the Opportunity Insights datasets. This ensures that the hyperparameters were not tuned to the specific datasets (ensuring validity of the privacy guarantees), but also leaves some room for improvement in the performance by more careful hyperparameter tuning.

\section{Results and Discussion}\label{discussion}

In this section we discuss the findings from the experiments described in the previous section. In the majority of the real datasets tested, \texttt{DPExpTheilSen} is the best performing algorithm. We'll discuss some reasons why \texttt{DPExpTheilSen} performs well, as well as some regimes when other algorithms perform better.

A note on the statement of privacy parameters in the experiments. We will state the privacy budget used to compute the pair $(\ptf, \psf)$, however we will only show empirical error bounds for $\ptf$. The empirical error bounds for $\psf$ display similar phenomena. The algorithms \texttt{NoisyStats} and \texttt{DPGradDescent} inherently release both point estimates together so the privacy loss is the same whether we release the pair $(\ptf, \psf)$ or just $\ptf$. However, \texttt{DPExpTheilSen}, \texttt{DPExpWideTheilSen}, \texttt{DPSSTheilSen} and the algorithm used by the Opportunity Insights team use half their budget to release $\ptf$ and half their budget to release $\psf$, separately.

\subsection{Real Data and the Opportunity Insights Application}

Figure~\ref{OIdata} shows the results of all the DP linear regression algorithms, as well as the mechanism used by the Opportunity Insights team, on the Opportunity Insights data for the state\longversion{s } of Illinois \longversion{and North Carolina. } For each algorithm, we build an empirical cumulative density distribution of the empirical error bounds set over the tracts in that state. The vertical dotted line in Figures~\ref{OIdata} intercepts each curve at the point where the \emph{noise due to privacy exceeds the standard error}.
The Opportunity Insights team used a heuristic method inspired by, but not satisfying, DP to deploy the Opportunity Altas \cite{CFHJP18}. Their privacy parameter of $\eps=16$
was selected by the
Opportunity Insights team and a Census Disclosure Review Board by balancing the privacy and utility considerations. Figure~\ref{OIdata} shows that there exist differentially private algorithms that are competitive with, and in many cases more accurate than, the algorithm currently deployed in the Opportunity Atlas. 

Additionally, while the methods used by the Opportunity Insights team are highly tailored to their setting
relying on coordination across Census tracts, in order to compute an upper bound on the local sensitivity, as discussed in Section~\ref{robustnessalgodesign}, the formally differentially private methods are general-purpose and do not require coordination across tracts.

\longversion{
\begin{figure}
\begin{subfigure}{0.5\textwidth}
\includegraphics[scale=0.6]{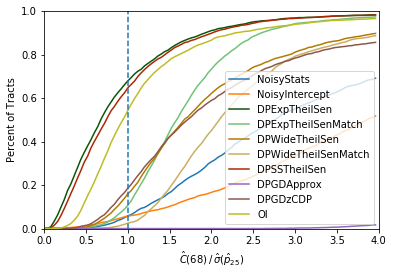}
    \caption{Illinois (IL)}
    \label{fig:IL_CDF}
    \end{subfigure}
    \begin{subfigure}{0.5\textwidth}
\includegraphics[width=\textwidth]{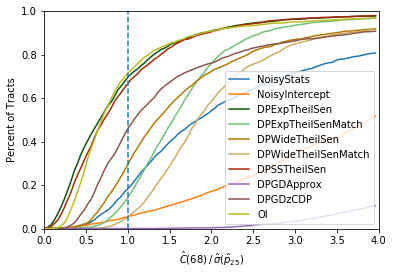}
    \caption{North Carolina (NC)}
    \label{fig:NC_CDF}
    \end{subfigure}
  \caption{Empirical CDF for the Empirical 68\% error bounds, $\ci$, normalized by empirical OLS standard error when evaluated on Opportunity Insights data for the states of Illinois and North Carolina. The algorithm denoted by OI refers to the heuristic algorithm used in the deployment of the Opportunity Altas Tool~\cite{ChettyF19}. {Privacy parameter $\eps=16$ for the pair $(\ptf, \psf)$.}}
  \label{OIdata}
\end{figure}}

Figures~\ref{bike} shows empirical cumulative density distributions of the empirical error bounds on the 288 datasets in the Bikeshare dataset. Figure~\ref{stock} shows the empirical cumulative density function of the output distribution on the Stockexchange dataset. Note that this is a different form to the empirical CDFs which appear in Figure~\ref{OIdata} and Figure~\ref{bike}. Figure~\ref{carbon} shows the empirical cumulative density function of the output distribution on the Carbon Nanotubes dataset. Figures~\ref{bike}, \ref{stock} and~\ref{carbon} show that on the three other real datasets, for a range of realistic $\eps$ values the additional noise due to privacy, in particular using \texttt{DPExpTheilSen}, is less than the standard error. 

\begin{figure}
\includegraphics[width=0.5\textwidth]{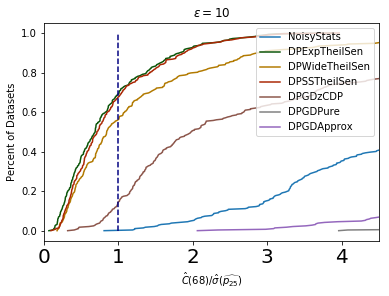}
\caption{Bikeshare data. Empirical CDFs of the performance over the set of datasets when $\eps=10$. 
}\label{bike}
\end{figure}

\begin{figure}
\includegraphics[width=0.5\textwidth]{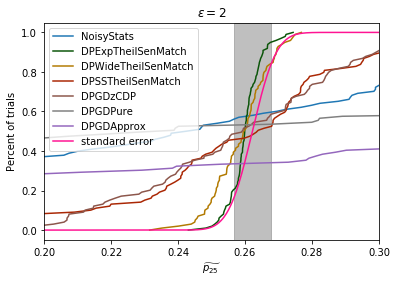}
\caption{
Stock Exchange UCI Data. Empirical cdf of the output distribution of the estimate of $\ptf$ after 100 trials of each algorithm with $\eps=2$. The grey region includes all the values that are within one standard error of $\hatsigma(\hatptf)$. The curve labelled ``standard error`` shows the non-private posterior belief on the value of the $\ptf$ assuming Gaussian noise.}\label{stock}
\end{figure}

\begin{figure}
        \centering
        \begin{subfigure}[b]{0.5\textwidth}
            \centering
            \includegraphics[width=\textwidth]{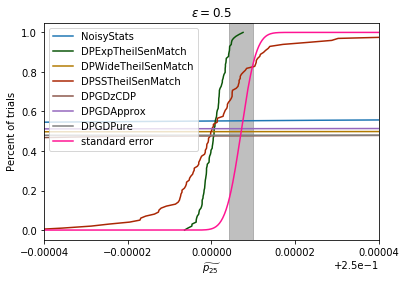}
            \vspace*{-7mm}
            \caption{Domain knowledge that $\ptf,\psf~\in~[-0.5, 1.5]$}
            \label{carbon}
        \end{subfigure}
        \begin{subfigure}[b]{0.5\textwidth}   
            \centering 
            \includegraphics[width=\textwidth]{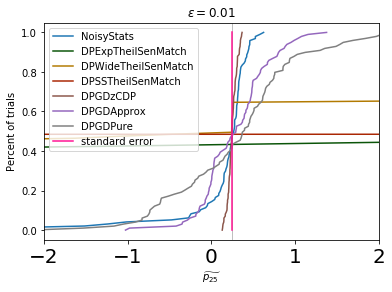}
            \vspace*{-7mm}
            \caption{Domain knowledge that $\ptf,\psf~\in~[-50, 50]$}    
            \label{carbonNSwins}
        \end{subfigure}
        \caption{Carbon nanotubes UCI Data. Empirical cdf of the output distribution of the estimate of $\ptf$ after 100 trials of each algorithm. The grey region includes all the values that are within one standard error of $\hatptf$. The curve labelled ``standard error`` shows the non-private posterior belief on the value of the $\ptf$ assuming Gaussian noise.} 
\end{figure}

\subsection{Robustness vs. Non-robustness: Guidance for Algorithm Selection}\label{restriction}

The DP algorithms we evaluate can be divided into two classes, \emph{robust} DP estimators based on Theil-Sen --- \texttt{DPSSTheilSen}, \texttt{DPExpTheilSen} and \texttt{DPWideTheilSen} --- and \emph{non-robust} DP estimators based on OLS --- \texttt{NoisyStats} and \texttt{GradDescent}. Experimentally, we found that the algorithm's behaviour tend to be clustered in these two classes, with the robust estimators outperforming the non-robust estimators in a wide variety of parameter regimes.
In most of the experiments we saw in the previous section (Figures~\ref{OIdata},~\ref{bike}, ~\ref{stock} and, ~\ref{carbon}), \texttt{DPExpTheilSen} was the best performing algorithm, followed by \texttt{DPWideTheilSen} and \texttt{DPSSTheilSen}. 
However, below we will see that in experiments on synthetic data, we found that the non-robust estimators outperform the robust estimators in some parameter regimes. 
\subsubsection{\texttt{NoisyStats} and \texttt{DPExpTheilSen}}

In Figure~\ref{fig:synth-algs-ground-truth}, we investigate the relative performance ($\citrue(68) / \sigma(\hatptf)$) of the algorithms in several parameter regimes of $n, \eps$, and $\sigma_x^2$ on synthetically generated data. 
 For each parameter setting, for each algorithm, we plot of the average value of $\citrue(68) / \sigma(\hatptf)$ over $50$ trials on a single dataset, and average again over $500$ independently sampled datasets. 
Across large ranges for each of these three parameters ($n\in[30, 10,000]$; $\sigma_x^2 \in[0.003, 0.03]$; $\eps\in [0.01, 10]$, all varied on a logarithmic scale), we see that \longversion{either } \texttt{DPExpTheilSen}\longversion{,~\texttt{DPGDzCDP}, } or \texttt{NoisyStats} is consistently the best performing algorithm---or close to it. \longversion{ Note that of these three algorithms, only \texttt{DPExpTheilSen} and \texttt{NoisyStats} fulfill the stronger pure $(\eps, 0)$-DP privacy guarantee. } We see that \longversion{\texttt{DPGDzCDP} and } \texttt{NoisyStats} trend\shortversion{s } towards taking over as the best algorithm\longversion{s } as $\eps n \sigma_x^2$ increases.

\begin{figure}
        \centering
        \begin{subfigure}[b]{0.5\textwidth}
            \centering
            \includegraphics[width=\textwidth]{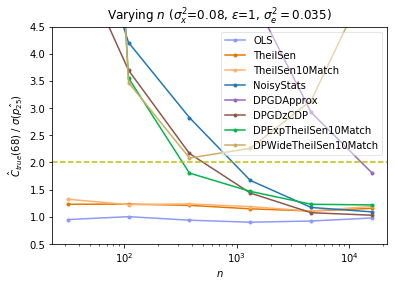}
            \vspace*{-7mm}
            \caption{{\small Varying n}
            }
            \label{fig:synth-n-ground-truth}
        \end{subfigure}
        \begin{subfigure}[b]{0.5\textwidth}   
            \centering 
            \includegraphics[width=\textwidth]{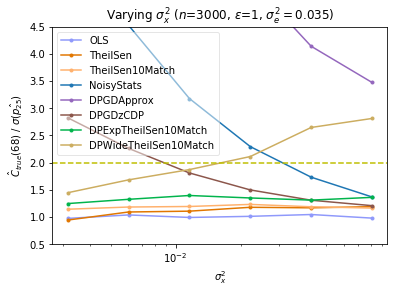}
            \vspace*{-7mm}
            \caption{{\small Varying $\sigma_x^2$ }}    
            \label{fig:synth-varx-ground-truth}
        \end{subfigure}
        \begin{subfigure}[b]{0.5\textwidth}   
            \centering 
            \includegraphics[width=\textwidth]{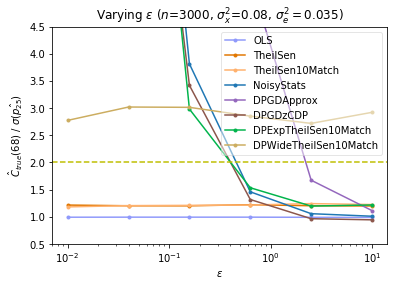}
            \vspace*{-7mm}
            \caption{{\small Varying $\eps$ 
            }}    
            \label{fig:synth-eps-ground-truth}
        \end{subfigure}
        \caption{ Relative error ($\citrue / \sigma(\hatptf)$) of DP and non-private algorithms on synthetic data..
}
\label{fig:synth-algs-ground-truth}
\end{figure}

    \begin{figure}
        \begin{subfigure}[b]{0.23\textwidth}
            \centering
            \includegraphics[width=\textwidth]{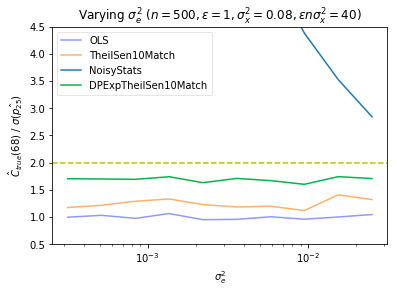}
            \vspace*{-7mm}
            \captionsetup{width=0.9\textwidth}
            \caption{{\small $\eps n \sigma_x^2 = 40$; varying $\sigma_e^2$}}
            \label{fig:synth-epsnvarx-small-vare-ground-truth}
        \end{subfigure}
        \begin{subfigure}[b]{0.23\textwidth}
            \centering
            \includegraphics[width=\textwidth]{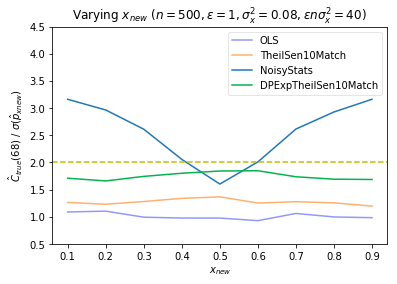}
            \vspace*{-7mm}
            \captionsetup{width=0.9\textwidth}
            \caption{{\small $\eps n \sigma_x^2 = 40$; varying $\xnew$}}
            \label{fig:synth-epsnvarx-small-xnew-ground-truth}
        \end{subfigure}
        \begin{subfigure}[b]{0.23\textwidth}
            \centering
            \includegraphics[width=\textwidth]{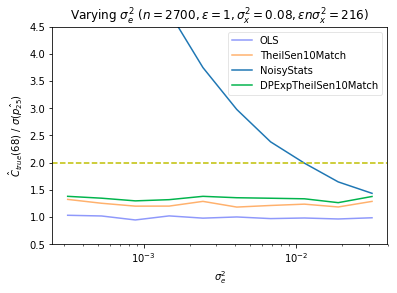}
            \vspace*{-7mm}
            \captionsetup{width=0.9\textwidth}
            \caption{{\small $\eps n \sigma_x^2 = 216$; varying $\sigma_e^2$}}
            \label{fig:synth-epsnvarx-medium-vare-ground-truth}
        \end{subfigure}
        \begin{subfigure}[b]{0.23\textwidth}
            \centering
            \includegraphics[width=\textwidth]{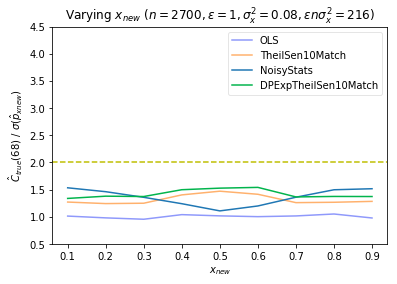}
            \vspace*{-7mm}
            \captionsetup{width=0.9\textwidth}
            \caption{{\small $\eps n \sigma_x^2 = 216$; varying $\xnew$}}
            \label{fig:synth-epsnvarx-medium-xnew-ground-truth}
        \end{subfigure}
        \begin{subfigure}[b]{0.23\textwidth}
            \centering
            \includegraphics[width=\textwidth]{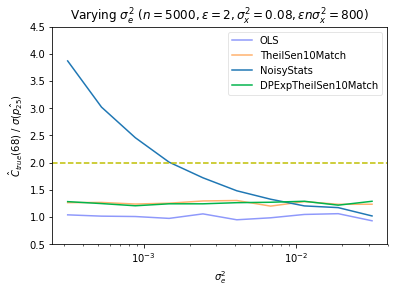}
            \vspace*{-7mm}
            \captionsetup{width=0.9\textwidth}
            \caption{{\small $\eps n \sigma_x^2 = 800$; varying $\sigma_e^2$}}
            \label{fig:synth-epsnvarx-large-vare-ground-truth}
        \end{subfigure}
        \begin{subfigure}[b]{0.23\textwidth}
            \centering
            \includegraphics[width=\textwidth]{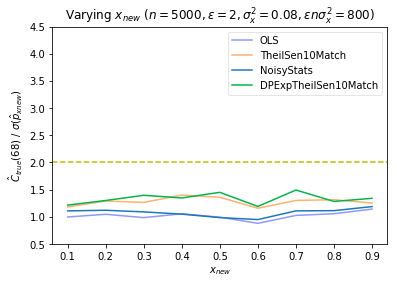}
            \vspace*{-7mm}
            \captionsetup{width=0.9\textwidth}
            \caption{{\small $\eps n \sigma_x^2 = 800$; varying $\xnew$}
            }
            \label{fig:synth-epsnvarx-large-xnew-ground-truth}
        \end{subfigure}
        \caption{{ Comparing \texttt{NoisyStats} and \texttt{DPExpTS10Match} on synthetic data as $\sigma_e^2$ and $\xnew$ vary, for $\bar{x} = 0.5$. Plotting $\citrue / \sigma(\hatptf)$. OLS and TheilSen10Match included for reference. 
        }} 
        \label{fig:synth-ns-ts-ground-truth}
    \end{figure}

For parameter regimes in which the non-robust algorithms outperform the robust estimators, \texttt{NoisyStats} is preferable \longversion{to \texttt{DPGDzCDP} } since it is more efficient, requires no hyperparameter tuning (except bounds on the inputs $x_i$ and $y_i$), fulfills a stronger privacy guarantee, and releases the noisy sufficient statistics with no additional cost in privacy.
Experimentally, we find that the main indicator for deciding between robust estimators and non-robust estimators is the quantity $\eps n\var(\bx)$ (which is a proxy for $\eps n \sigma_x^2$). 
Roughly, when $\eps n\var(\bx)$ and $\sigma_e^2$ 
are both large, \texttt{NoisyStats} is \longversion{close to } optimal among the DP algorithms tested; otherwise, the robust estimator \texttt{DPExpTheilSen} typically has lower error. 
Hyperparameter tuning and the quantity $|\xnew-\bar{x}|$ also play minor roles in determining the relative performance of the algorithms.

\subsubsection{Role of $\eps n\var(\bx)$}

The quantity $\eps n\var(\bx)$ is related to the size of the dataset, how concentrated the independent variable of the data is and how private the mechanism is. It appears to be a better indicator of the performance of DP mechanisms than any of the individual statistics $\eps, n$, or $\var(\bx)$ in isolation. In Figure~\ref{fig:synth-ns-ts-ground-truth} we compare the performance of \texttt{NoisyStats} and \texttt{DPExpTheilSen10Match} as we vary $\sigma_e^2$ and $\xnew$. For each parameter setting, for each algorithm, the error is computed as the average error over $20$ trials and $500$ independently sampled datasets.  Notice that the blue line presents the error of the non-private OLS estimator, which is our baseline. The quantity we control in these experiments is $\eps n \sigma_x^2$, although we expect this to align closely with $\eps n\var(\bx)$, which is the empirically measurable quantity.
In all of our synthetic data experiments, in which \longversion{the $x_i$'s are uniform and } the $e_i$'s are Gaussian, once $\eps n \sigma_x^2>400$ and $\sigma_e^2\ge 10^{-2}$, \texttt{NoisyStats} is \longversion{close to } the best performing algorithm.
 It is also important to note that once $\eps n\var(\bx)$ is large, both \texttt{NoisyStats} and \texttt{DPExpTheilSen} perform well. See Figure~\ref{fig:synth-ns-ts-ground-truth} for a demonstration of this on the synthetic data.

The error, as measured by $\citrue(68)$, of both OLS and Theil-Sen estimators converges to 0 as $n\to\infty$ at the same asymptotic rate. However, OLS converges a constant factor faster than Theil-Sen resulting in its superior performance on relatively large datasets.
As $\eps n\var(\bx)$ increases, the error of \texttt{NoisyStats} decreases, and the outputs of this algorithm concentrate around the OLS estimate. While the outputs of \texttt{DPExpTheilSen} tend to be more concentrated, they converge to the Theil-Sen estimate, which has higher sampling error. Thus, as we increase $\eps n\var(\bx)$, eventually the additional noise due to privacy added by \texttt{NoisyStats} is less than the difference between the OLS and Theil-Sen point estimates, resulting in \texttt{NoisyStats} outperforming \texttt{DPExpTheilSen}. This phenomenon can be seen in Figure~\ref{fig:synth-algs-ground-truth} and Figure~\ref{fig:synth-ns-ts-ground-truth}.

\longversion{The classifying power of $\eps n\var(\bx)$ is a result of its impact on the performance of \texttt{NoisyStats}.
Recall that \texttt{NoisyStats} works by using noisy sufficient statistics within the closed form solution for OLS given in Equation~\ref{closedform}. The noisy sufficient statistic $\nvar(\bx)+\Lap(0, 3\Delta_2/\eps)$, appears in the denominator of this closed form solution. When $\eps n\var(\bx)$ is small, this noisy sufficient statistic has constant mass concentrated near, or below, 0, and hence the inverse, $1/(\nvar(\bx)+\Lap(0, 3\Delta_2/\eps))$, which appears in the \texttt{NoisyStats}, can be an arbitrarily bad estimate of $1/\nvar(\bx)$. In contrast, when $\eps n\var(\bx)$ is large, the distribution of $\nvar(\bx)+\Lap(0, 3\Delta_2/\eps)$ is more concentrated around $\nvar(\bx)$, so that with high probability $1/(\nvar(\bx)+\Lap(0, 3\Delta_2/\eps))$ is close to $1/\nvar(\bx)$. 
}

In Figures~\ref{fig:synth-n-ground-truth}, \ref{fig:synth-varx-ground-truth} and ~\ref{fig:synth-eps-ground-truth}, the performance of all the algorithms, including \texttt{NoisyStats} and \texttt{DPExpTheilSen}, are shown as we vary each of the parameters $\eps, n$ and $\sigma_x^2$, while holding the other variables constant, on synthetic data. In doing so, each of these plots is effectively varying $\eps n\var(\bx)$ as a whole. These plots suggest that all three parameters help determine which algorithm is the most accurate. Figure~\ref{fig:synth-ns-ts-ground-truth} confirms that $\eps n\var(\bx)$ is a strong indicator of the relative performance of \texttt{NoisyStats} and \texttt{DPExpTheilSen}, even as other variables in the OLS standard error equation (\ref{standarderror}) -- including the variance of the noise of the dependent variable, $\sigma_e$, and the difference between $\xnew$ and the mean of the $x$ values, $|\xnew - \bar{x}|$ -- are varied.

\subsubsection{The Role of $\sigma_e^2$}
One main advantage that all the \texttt{DPTheilSen} algorithms have over \texttt{NoisyStats} is that they are better able to adapt to the specific dataset. 
When $\sigma_e^2$ is small, there is a strong linear signal in the data that the non-private OLS estimator and the \texttt{DPTheilSen} algorithms can leverage. Since the noise addition mechanism of \texttt{NoisyStats} does not leverage this strong signal, its relative performance is worse when $\sigma_e^2$ is small. Thus, $\sigma_e^2$ affects the relative performance of the algorithms, even when $\eps n\var(\bx)$ is large. 
We saw this effect in Figure~\ref{carbon} on the Carbon Nanotubes UCI data, where $\eps n\var(x)=889.222$ is large, but \texttt{NoisyStats} performed poorly relative to the \texttt{DPTheilSen} algorithms. 

In each of the plots \ref{fig:synth-epsnvarx-small-vare-ground-truth},  \ref{fig:synth-epsnvarx-medium-vare-ground-truth}, and \ref{fig:synth-epsnvarx-large-vare-ground-truth}, the quantity $\eps n\var(\bx)$ is held constant while $\sigma_e^2$ is varied. These plots confirm that the performance of \texttt{NoisyStats} degrades, relative to other algorithms, when $\sigma_e^2$ is small. As $\sigma_e^2$ increases, the relative performance of \texttt{NoisyStats} improves until it drops below the relative performance of the TheilSen estimates. The cross-over point varies with $\eps n\var(\bx)$. In Figure~\ref{fig:synth-epsnvarx-small-vare-ground-truth}, we see that when $\eps n\var(\bx)$ is small, the methods based on Theil-Sen are the better choice for all values of $\sigma_e^2$ studied. When we increase $\eps n\var(\bx)$ in Figure~\ref{fig:synth-epsnvarx-large-vare-ground-truth} we see a clear cross over point. 
These plots add evidence to our conjecture that robust estimators are a good choice when $\eps  n\var(\bx)$ and $\sigma_e^2$ are both small, while non-robust estimators perform well when $\eps n \var\bx$ is large.

\subsubsection{The Role of $|\xnew-\bar{x}|$}

The final factor we found that plays a role in the relative performance of \texttt{NoisyStats} and \texttt{DPExpTheilSen} is $|\xnew-\bar{x}|$. This effect is less pronounced than that of $\eps n\var(\bx)$ or $\sigma_e^2$, and seems to rarely change the ordering of DP algorithms. In Figures~\ref{fig:synth-epsnvarx-small-xnew-ground-truth}, \ref{fig:synth-epsnvarx-medium-xnew-ground-truth} and \ref{fig:synth-epsnvarx-large-xnew-ground-truth}, we explore the effect of this quantity on the relative performance of OLS, TheilSen10Match, \texttt{DPExpTheilSen10Match}, and \texttt{NoisyStats}.
We saw in Equation~\ref{standarderror} that $|\xnew-\bar{x}|$ affects the standard error $\hat\sigma(\hatptf)$ - the further $\xnew$ is from the centre of the data, $\bar{x}$, the less certain we are about the OLS point estimate. All algorithms have better performance when $|\xnew-\bar{x}|$ is small; however, this effect is most pronounced with \texttt{NoisyStats}. In \texttt{NoisyStats}, $\tildepxnew = \tilde{\alpha}(\xnew-\bar{x})+\bar{y}+L_3$ where
$L_3\sim\Lap(0, 3(1+|\tilde\alpha|)/(\eps n))$ and $\tilde{\alpha}$ and $\tilde{\beta}$ are the noisy slope and intercepts respectively. Thus, we expect a large $|\xnew-\bar{x}|$ to amplify the error present in $\tilde{\alpha}$. 
We
vary $\xnew$ between 0 and 1. As expected, the error is minimized when $\xnew = 0.5$,
since $\bar{x} = 0.5$.
Since we expect the error in $\tilde{\alpha}$ to decrease as we increase $\eps n\var(\bx)$, this explains why the quantity $|\xnew-\bar{x}|$ has a larger effect when $\eps n\var(\bx)$ is small.

\subsubsection{The Role of Hyperparameter Tuning}

A final major distinguishing feature between the \texttt{DPTheilSen} algorithms, \texttt{NoisyStats} and \texttt{DPGradDescent} is the amount of prior knowledge needed by the data analyst to choose the hyperparameters appropriately. Notably, \texttt{NoisyStats} does not require any hyperparameter tuning other than a bound on the data. The \texttt{DPTheilSen} algorithms require some reasonable knowledge of the range that $\ptf$ and $\psf$ lie in\longversion{ in order to set $r_l$ and $r_u$}. Finally, \texttt{DPGradDescent} requires some knowledge of where the input values lie, so it can set $\tau$ and $T$. 

In Figure~\ref{carbonNSwins}, \texttt{NoisyStats} and all three \texttt{DPGradDescent} algorithms outperform the robust estimators.
This experiment differs from Figure~\ref{carbon} in two important ways: the privacy parameter $\eps$ has decreased from 1 to 0.01, and the feasible output region for the DP TheilSen methods has increased from $[-0.5,1.5]$ to $[-50,50]$. 
When $\eps$ is large, the \texttt{DPTheilSen} algorithms are robust to the choice of this region since any area outside the range of the data is exponentially down weighted (see Figure~\ref{diag:em}). However, when $\eps$ is small, the size of this region can have a large effect on the stability of the output. As $\eps$ decreases, the output distributions of the \texttt{DPTheilSen} estimators are flattened, so that they are essentially sampling from the uniform distribution on the range of the parameter. 
This effect likely explains the poor performance of the robust estimators in Figure~\ref{carbonNSwins}, and highlights the importance of choosing hyperparameters carefully. If $\eps$ is small ($\eps$ much less than 1) and the analyst does not have a decent estimate of the range of $\ptf$, then $\texttt{NoisyStats}$ may be a safer choice than \texttt{DPTheilSen}.

\subsection{Which robust estimator?}

In the majority of the regimes we have tested, \texttt{DPExpTheilSen} outperforms all the other private algorithms. While $\texttt{DPSSTheilSen}$ can be competitive with \texttt{DPExpTheilSen} and \texttt{DPWideTheilSen}, it rarely seems to outperform them. However, \texttt{DPWideTheilSen} can significantly outperform \texttt{DPExpTheilSen} when the standard error is small.
Figure~\ref{bikeTS} compares the performance of {DPExpTheilSen} and \texttt{DPWideTheilSen} on the Bikeshare UCI data. When there is little noise in the data we expect the set of pairwise estimates that Theil-Sen takes the median of to be highly concentrated. We discussed in Section~\ref{alg:dpts} why this is a difficult setting for \texttt{DPExpTheilSen}: in the continuous setting, the exponential mechanism based median algorithm can fail to put sufficient probability mass around the median, even if the data is concentrated at the median (see Figure~\ref{diag:em}). 
\texttt{DPWideTheilSen} was designed exactly as a fix to this problem. The parameter $\theta$ needs to be chosen carefully. In Figure~\ref{carbon}, $\theta$ is set to be much larger than the standard error, resulting in \texttt{DPWideTheilSen} performing poorly. 

\begin{figure}[!hbt]
\includegraphics[width=0.45\textwidth]{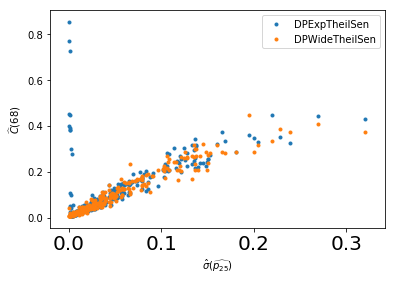}
\caption{
Comparison of \texttt{DPExpTheilSen} and \texttt{DPWideExpTheilSen} on Bikeshare UCI data with $\eps=2$ and $\theta=0.01$. Datasets are sorted by their standard errors. For each dataset, there is a dot corresponding to the $\hat{C}(68)$ value of each algorithm. Both dots lie on the same vertical line.
}
\label{bikeTS}
\end{figure}

\longversion{
\subsection{Which non-robust estimator?}

There are two main non-robust estimators we consider: \texttt{DPGradDescent} and
\texttt{NoisyStats}. \texttt{DPGradDescent} has three versions --
\texttt{DPGDPure}, \texttt{DPGDApprox}, and \texttt{DPGDzCDP} -- corresponding to the
pure, approximate, and zero-concentrated variants. Amongst the \texttt{DPGradDescent} algorithms,
as expected, \texttt{DPGDzCDP} provides the best utility followed by \texttt{DPGDApprox},
and then \texttt{DPGDPure}. But how do these compare to \texttt{NoisyStats}?
\texttt{NoisyStats} outperforms both \texttt{DPGDPure} and \texttt{DPGDApprox} for small $\delta$ (e.g. $\delta=2^{-30}$ in our experiments).
\texttt{DPGDzCDP} consistently outperforms \texttt{NoisyStats}, but the gap in performance is small in the regime where the non-robust estimators
beat the robust estimators.  Moreover, \texttt{NoisyStats} achieves a stronger privacy guarantee (pure $(\eps, 0)$-DP rather than $\eps^2/2$-zCDP).  A fairer comparison is to use the natural $\eps^2/2$-zCDP analogue of \texttt{NoisyStats} (using Gaussian noise and zCDP composition), in which case we have found that the advantage of \texttt{DPGDzCDP} significantly shrinks and in some cases is reversed. (Experiments omitted.)

The performance of the \texttt{DPGradDescent}
algorithms also depend on hyperparameters that need to be carefully tuned,
such as the number of gradient calls $T$ and the clip range $[-\tau, \tau]$. Since \texttt{NoisyStats} requires less hyperparameters, this
makes \texttt{DPGradDescent} potentially harder to use in practice.
In addition, \texttt{NoisyStats} is more efficient and
can be used to release the noisy sufficient statistics with no additional privacy loss.  Since the performance of the two algorithms is similar in the regime where non-robust methods appear to have more utility than
the robust ones, the additional benefits of \texttt{NoisyStats} may make it preferable in practice. 

We leave a more thorough
evaluation and optimization of these algorithms in the regime of large $n$, including how to optimize the
hyperparameters in a DP manner, to future work.

}

\longversion{\subsection{Analyzing the bias}

Let $(\ptf^{TS}, \psf^{TS})$ be the prediction estimates produced using the non-private TheilSen estimator. 
The non-robust DP methods -- \texttt{NoisyStats} and \texttt{DPGradDescent} -- approach $(\hatptf, \hatpsf)$ as $\eps\to\infty$, 
while the \texttt{DPTheilSen} methods approach $(\ptf^{TS}, \psf^{TS})$ as $\eps\to\infty$.
For any fixed dataset, $(\hatptf, \hatpsf)$ and
$(\ptf^{TS}, \psf^{TS})$ are not necessarily equal.  
A good representation of this bias can be seen in Figure~\ref{carbon}. However, 
both the TheilSen estimator and the OLS estimator are consistent unbiased estimators of the true slope in simple linear regression. That is, as $n\to\infty$, both $(\ptf^{TS}, \psf^{TS})$ and $(\hatptf, \hatpsf)$ tend to the true value $(\ptf, \psf)$. Thus, all the private algorithms output the true prediction estimates as $n\to\infty$, for a fixed $\eps$.}

\section{Conclusion}

It is possible to design DP simple linear regression algorithms where the distortion added by the private algorithm is less than the standard error, even for small datasets. In this work, we found that in order to achieve this we needed to switch from OLS regression to the more robust linear regression estimator, Theil-Sen. We identified key factors that analysts should consider when deciding whether DP methods based on robust or non-robust estimators are right for their application.

\begin{acks}
We thank Raj Chetty, John Friedman, and Daniel Reuter from Opportunity Insights for many motivating discussions, providing us with simulated Opportunity Atlas data for our experiments, and helping us implement the OI algorithm.  We also thank Cynthia Dwork and James Honaker for helpful conversations at an early stage of this work; Garett Bernstein and Dan Sheldon for discussions about their work on private Bayesian inference for linear regression; Sofya Raskhodnikova, Thomas Steinke, and others at the Simons Institute Spring 2019 program on data privacy for helpful discussions on specific technical aspects of this work. 
\end{acks}

\bibliographystyle{ACM-Reference-Format}
\bibliography{main2}


\begin{thebibliography}{26}


\ifx \showCODEN    \undefined \def \showCODEN     #1{\unskip}     \fi
\ifx \showDOI      \undefined \def \showDOI       #1{#1}\fi
\ifx \showISBNx    \undefined \def \showISBNx     #1{\unskip}     \fi
\ifx \showISBNxiii \undefined \def \showISBNxiii  #1{\unskip}     \fi
\ifx \showISSN     \undefined \def \showISSN      #1{\unskip}     \fi
\ifx \showLCCN     \undefined \def \showLCCN      #1{\unskip}     \fi
\ifx \shownote     \undefined \def \shownote      #1{#1}          \fi
\ifx \showarticletitle \undefined \def \showarticletitle #1{#1}   \fi
\ifx \showURL      \undefined \def \showURL       {\relax}        \fi
\providecommand\bibfield[2]{#2}
\providecommand\bibinfo[2]{#2}
\providecommand\natexlab[1]{#1}
\providecommand\showeprint[2][]{arXiv:#2}

\bibitem[\protect\citeauthoryear{??}{Akb}{2013}]%
        {Akbilgic:2013}
 \bibinfo{year}{2013}\natexlab{}.
\newblock \showarticletitle{A novel Hybrid RBF Neural Networks model as a
  forecaster}.
\newblock \bibinfo{journal}{\emph{Statistics and Computing}}
  (\bibinfo{year}{2013}).
\newblock
\urldef\tempurl%
\url{https://doi.org/10.1007/s11222-013-9375-7}
\showDOI{\tempurl}


\bibitem[\protect\citeauthoryear{Abernethy, Lee, and Tewari}{Abernethy
  et~al\mbox{.}}{2016}]%
        {ALT16}
\bibfield{author}{\bibinfo{person}{Jacob Abernethy}, \bibinfo{person}{Chansoo
  Lee}, {and} \bibinfo{person}{Ambuj Tewari}.} \bibinfo{year}{2016}\natexlab{}.
\newblock \showarticletitle{Perturbation techniques in online learning and
  optimization}.
\newblock \bibinfo{journal}{\emph{Perturbations, Optimization, and Statistics}}
  (\bibinfo{year}{2016}), \bibinfo{pages}{233}.
\newblock


\bibitem[\protect\citeauthoryear{Aci and Avci}{Aci and Avci}{2016}]%
        {Aci:2016}
\bibfield{author}{\bibinfo{person}{Mehmet Aci} {and} \bibinfo{person}{Mutlu
  Avci}.} \bibinfo{year}{2016}\natexlab{}.
\newblock \showarticletitle{Artificial Neural Network Approach for Atomic
  Coordinate Prediction of Carbon Nanotubes}.
\newblock \bibinfo{journal}{\emph{Applied Physics A}} \bibinfo{number}{122,
  631} (\bibinfo{year}{2016}).
\newblock


\bibitem[\protect\citeauthoryear{Bassily, Smith, and Thakurta}{Bassily
  et~al\mbox{.}}{2014}]%
        {BST14}
\bibfield{author}{\bibinfo{person}{Raef Bassily}, \bibinfo{person}{Adam~D.
  Smith}, {and} \bibinfo{person}{Abhradeep Thakurta}.}
  \bibinfo{year}{2014}\natexlab{}.
\newblock \showarticletitle{Private Empirical Risk Minimization, Revisited}.
\newblock \bibinfo{journal}{\emph{CoRR}}  \bibinfo{volume}{abs/1405.7085}
  (\bibinfo{year}{2014}).
\newblock
\urldef\tempurl%
\url{http://arxiv.org/abs/1405.7085}
\showURL{%
\tempurl}


\bibitem[\protect\citeauthoryear{Bernstein and Sheldon}{Bernstein and
  Sheldon}{2019}]%
        {GarrettS19}
\bibfield{author}{\bibinfo{person}{Garrett Bernstein} {and}
  \bibinfo{person}{Daniel~R Sheldon}.} \bibinfo{year}{2019}\natexlab{}.
\newblock \showarticletitle{Differentially Private Bayesian Linear Regression}.
\newblock In \bibinfo{booktitle}{\emph{Advances in Neural Information
  Processing Systems 32}}. \bibinfo{pages}{523--533}.
\newblock


\bibitem[\protect\citeauthoryear{Bun and Steinke}{Bun and Steinke}{2016}]%
        {BunS16}
\bibfield{author}{\bibinfo{person}{Mark Bun} {and} \bibinfo{person}{Thomas
  Steinke}.} \bibinfo{year}{2016}\natexlab{}.
\newblock \showarticletitle{Concentrated differential privacy: Simplifications,
  extensions, and lower bounds}. In \bibinfo{booktitle}{\emph{Theory of
  Cryptography Conference}}. Springer, \bibinfo{pages}{635--658}.
\newblock


\bibitem[\protect\citeauthoryear{Bun and Steinke}{Bun and Steinke}{2019}]%
        {BunS19}
\bibfield{author}{\bibinfo{person}{Mark Bun} {and} \bibinfo{person}{Thomas
  Steinke}.} \bibinfo{year}{2019}\natexlab{}.
\newblock \showarticletitle{Average-Case Averages: Private Algorithms for
  Smooth Sensitivity and Mean Estimation}.
\newblock \bibinfo{journal}{\emph{arXiv preprint arXiv:1906.02830}}
  (\bibinfo{year}{2019}).
\newblock


\bibitem[\protect\citeauthoryear{Cai, Wang, and Zhang}{Cai
  et~al\mbox{.}}{2019}]%
        {CWZ19}
\bibfield{author}{\bibinfo{person}{Tony Cai}, \bibinfo{person}{Yichen Wang},
  {and} \bibinfo{person}{Linjun Zhang}.} \bibinfo{year}{2019}\natexlab{}.
\newblock \showarticletitle{The Cost of Privacy: Optimal Rates of Convergence
  for Parameter Estimation with Differential Privacy}.
\newblock \bibinfo{journal}{\emph{CoRR}}  \bibinfo{volume}{abs/1902.04495}
  (\bibinfo{year}{2019}).
\newblock
\urldef\tempurl%
\url{http://arxiv.org/abs/1902.04495}
\showURL{%
\tempurl}


\bibitem[\protect\citeauthoryear{Chaudhuri, Monteleoni, and Sarwate}{Chaudhuri
  et~al\mbox{.}}{2011}]%
        {CMS11}
\bibfield{author}{\bibinfo{person}{Kamalika Chaudhuri}, \bibinfo{person}{Claire
  Monteleoni}, {and} \bibinfo{person}{Anand~D. Sarwate}.}
  \bibinfo{year}{2011}\natexlab{}.
\newblock \showarticletitle{Differentially Private Empirical Risk
  Minimization}.
\newblock \bibinfo{journal}{\emph{Journal of Machine Learning Research}}
  \bibinfo{volume}{12} (\bibinfo{year}{2011}), \bibinfo{pages}{1069--1109}.
\newblock


\bibitem[\protect\citeauthoryear{Chetty and Friedman}{Chetty and
  Friedman}{2019}]%
        {ChettyF19}
\bibfield{author}{\bibinfo{person}{Raj Chetty} {and} \bibinfo{person}{John~N.
  Friedman}.} \bibinfo{year}{2019}\natexlab{}.
\newblock \showarticletitle{A Practical Method to Reduce Privacy Loss When
  Disclosing Statistics Based on Small Samples}.
\newblock \bibinfo{journal}{\emph{American Economic Review Papers and
  Proceedings}}  \bibinfo{volume}{109} (\bibinfo{year}{2019}),
  \bibinfo{pages}{414--420}.
\newblock


\bibitem[\protect\citeauthoryear{Chetty, Friedman, Hendren, Jones, and
  Porter}{Chetty et~al\mbox{.}}{2018}]%
        {CFHJP18}
\bibfield{author}{\bibinfo{person}{Raj Chetty}, \bibinfo{person}{John~N
  Friedman}, \bibinfo{person}{Nathaniel Hendren}, \bibinfo{person}{Maggie~R
  Jones}, {and} \bibinfo{person}{Sonya~R Porter}.}
  \bibinfo{year}{2018}\natexlab{}.
\newblock \bibinfo{booktitle}{\emph{The opportunity atlas: Mapping the
  childhood roots of social mobility}}.
\newblock \bibinfo{type}{{T}echnical {R}eport}. \bibinfo{institution}{National
  Bureau of Economic Research}.
\newblock


\bibitem[\protect\citeauthoryear{Chetty, Hendren, Kline, and Saez}{Chetty
  et~al\mbox{.}}{2014}]%
        {Chetty14}
\bibfield{author}{\bibinfo{person}{Raj Chetty}, \bibinfo{person}{Nathaniel
  Hendren}, \bibinfo{person}{Patrick Kline}, {and} \bibinfo{person}{Emmanuel
  Saez}.} \bibinfo{year}{2014}\natexlab{}.
\newblock \showarticletitle{Where is the land of opportunity? The geography of
  intergenerational mobility in the United States}.
\newblock \bibinfo{journal}{\emph{The Quarterly Journal of Economics}}
  \bibinfo{volume}{129}, \bibinfo{number}{4} (\bibinfo{year}{2014}),
  \bibinfo{pages}{1553--1623}.
\newblock


\bibitem[\protect\citeauthoryear{Cormode}{Cormode}{[n.d.]}]%
        {C}
\bibfield{author}{\bibinfo{person}{Graham Cormode}.}
  \bibinfo{year}{[n.d.]}\natexlab{}.
\newblock \showarticletitle{Building Blocks of Privacy: Differentially Private
  Mechanisms}.
\newblock  (\bibinfo{year}{[n.\,d.]}), \bibinfo{pages}{18--19}.
\newblock


\bibitem[\protect\citeauthoryear{Couch, Kazan, Shi, Bray, and Groce}{Couch
  et~al\mbox{.}}{2019}]%
        {CKSBG19}
\bibfield{author}{\bibinfo{person}{Simon Couch}, \bibinfo{person}{Zeki Kazan},
  \bibinfo{person}{Kaiyan Shi}, \bibinfo{person}{Andrew Bray}, {and}
  \bibinfo{person}{Adam Groce}.} \bibinfo{year}{2019}\natexlab{}.
\newblock \showarticletitle{Differentially Private Nonparametric Hypothesis
  Testing}.
\newblock \bibinfo{journal}{\emph{arXiv preprint arXiv:1903.09364}}
  (\bibinfo{year}{2019}).
\newblock


\bibitem[\protect\citeauthoryear{Dinur and Nissim}{Dinur and Nissim}{2003}]%
        {DN03}
\bibfield{author}{\bibinfo{person}{Irit Dinur} {and} \bibinfo{person}{Kobbi
  Nissim}.} \bibinfo{year}{2003}\natexlab{}.
\newblock \showarticletitle{Revealing information while preserving privacy}. In
  \bibinfo{booktitle}{\emph{Proceedings of the twenty-second ACM
  SIGMOD-SIGACT-SIGART symposium on Principles of database systems}}.
  \bibinfo{pages}{202--210}.
\newblock


\bibitem[\protect\citeauthoryear{Dwork and Lei}{Dwork and Lei}{2009}]%
        {DworkL09}
\bibfield{author}{\bibinfo{person}{Cynthia Dwork} {and} \bibinfo{person}{Jing
  Lei}.} \bibinfo{year}{2009}\natexlab{}.
\newblock \showarticletitle{Differential privacy and robust statistics.}. In
  \bibinfo{booktitle}{\emph{STOC}}, Vol.~\bibinfo{volume}{9}.
  \bibinfo{pages}{371--380}.
\newblock


\bibitem[\protect\citeauthoryear{Dwork, McSherry, Nissim, and Smith}{Dwork
  et~al\mbox{.}}{2006}]%
        {DMNS06}
\bibfield{author}{\bibinfo{person}{Cynthia Dwork}, \bibinfo{person}{Frank
  McSherry}, \bibinfo{person}{Kobbi Nissim}, {and} \bibinfo{person}{Adam~D.
  Smith}.} \bibinfo{year}{2006}\natexlab{}.
\newblock \showarticletitle{Calibrating Noise to Sensitivity in Private Data
  Analysis}. In \bibinfo{booktitle}{\emph{Theory of Cryptography, Third Theory
  of Cryptography Conference, {TCC} 2006, New York, NY, USA, March 4-7, 2006,
  Proceedings}}. \bibinfo{pages}{265--284}.
\newblock


\bibitem[\protect\citeauthoryear{Dwork, Talwar, Thakurta, and Zhang}{Dwork
  et~al\mbox{.}}{2014}]%
        {DworkTT014}
\bibfield{author}{\bibinfo{person}{Cynthia Dwork}, \bibinfo{person}{Kunal
  Talwar}, \bibinfo{person}{Abhradeep Thakurta}, {and} \bibinfo{person}{Li
  Zhang}.} \bibinfo{year}{2014}\natexlab{}.
\newblock \showarticletitle{Analyze gauss: optimal bounds for
  privacy-preserving principal component analysis}. In
  \bibinfo{booktitle}{\emph{STOC}}. \bibinfo{pages}{11--20}.
\newblock


\bibitem[\protect\citeauthoryear{Fanaee-T and Gama}{Fanaee-T and Gama}{2013}]%
        {Hadi:2013}
\bibfield{author}{\bibinfo{person}{Hadi Fanaee-T} {and} \bibinfo{person}{Joao
  Gama}.} \bibinfo{year}{2013}\natexlab{}.
\newblock \showarticletitle{Event labeling combining ensemble detectors and
  background knowledge}.
\newblock \bibinfo{journal}{\emph{Progress in Artificial Intelligence}}
  (\bibinfo{year}{2013}), \bibinfo{pages}{1--15}.
\newblock
\showISSN{2192-6352}
\urldef\tempurl%
\url{https://doi.org/10.1007/s13748-013-0040-3}
\showDOI{\tempurl}


\bibitem[\protect\citeauthoryear{McSherry and Talwar}{McSherry and
  Talwar}{2007}]%
        {McSherryT07}
\bibfield{author}{\bibinfo{person}{Frank McSherry} {and} \bibinfo{person}{Kunal
  Talwar}.} \bibinfo{year}{2007}\natexlab{}.
\newblock \showarticletitle{Mechanism Design via Differential Privacy}. In
  \bibinfo{booktitle}{\emph{48th Annual {IEEE} Symposium on Foundations of
  Computer Science {(FOCS} 2007), October 20-23, 2007, Providence, RI, USA,
  Proceedings}}. \bibinfo{pages}{94--103}.
\newblock


\bibitem[\protect\citeauthoryear{Nissim, Raskhodnikova, and Smith}{Nissim
  et~al\mbox{.}}{2007}]%
        {NRS07}
\bibfield{author}{\bibinfo{person}{Kobbi Nissim}, \bibinfo{person}{Sofya
  Raskhodnikova}, {and} \bibinfo{person}{Adam~D. Smith}.}
  \bibinfo{year}{2007}\natexlab{}.
\newblock \showarticletitle{Smooth sensitivity and sampling in private data
  analysis}. In \bibinfo{booktitle}{\emph{Proceedings of the 39th Annual {ACM}
  Symposium on Theory of Computing, San Diego, California, USA, June 11-13,
  2007}}. \bibinfo{pages}{75--84}.
\newblock


\bibitem[\protect\citeauthoryear{Sen}{Sen}{1968}]%
        {Sen68}
\bibfield{author}{\bibinfo{person}{Pranab~Kumar Sen}.}
  \bibinfo{year}{1968}\natexlab{}.
\newblock \showarticletitle{Estimates of the regression coefficient based on
  Kendall's tau}.
\newblock \bibinfo{journal}{\emph{Journal of the American statistical
  association}} \bibinfo{volume}{63}, \bibinfo{number}{324}
  (\bibinfo{year}{1968}), \bibinfo{pages}{1379--1389}.
\newblock


\bibitem[\protect\citeauthoryear{Sheffet}{Sheffet}{2017}]%
        {Sheffet17}
\bibfield{author}{\bibinfo{person}{Or Sheffet}.}
  \bibinfo{year}{2017}\natexlab{}.
\newblock \showarticletitle{Differentially Private Ordinary Least Squares}. In
  \bibinfo{booktitle}{\emph{Proceedings of the 34th International Conference on
  Machine Learning, {ICML} 2017, Sydney, NSW, Australia, 6-11 August 2017}}.
  \bibinfo{pages}{3105--3114}.
\newblock
\urldef\tempurl%
\url{http://proceedings.mlr.press/v70/sheffet17a.html}
\showURL{%
\tempurl}


\bibitem[\protect\citeauthoryear{Theil}{Theil}{1950}]%
        {Theil50}
\bibfield{author}{\bibinfo{person}{Henri Theil}.}
  \bibinfo{year}{1950}\natexlab{}.
\newblock \showarticletitle{A rank-invariant method of linear and polynomial
  regression analysis, 3; confidence regions for the parameters of polynomial
  regression equations}.
\newblock \bibinfo{journal}{\emph{Indagationes Mathematicae}}
  \bibinfo{volume}{1}, \bibinfo{number}{2} (\bibinfo{year}{1950}),
  \bibinfo{pages}{467--482}.
\newblock


\bibitem[\protect\citeauthoryear{Wang}{Wang}{2018}]%
        {Wang18}
\bibfield{author}{\bibinfo{person}{Yu{-}Xiang Wang}.}
  \bibinfo{year}{2018}\natexlab{}.
\newblock \showarticletitle{Revisiting differentially private linear
  regression: optimal and adaptive prediction {\&} estimation in unbounded
  domain}.
\newblock \bibinfo{journal}{\emph{CoRR}}  \bibinfo{volume}{abs/1803.02596}
  (\bibinfo{year}{2018}).
\newblock
\showeprint[arxiv]{1803.02596}
\urldef\tempurl%
\url{http://arxiv.org/abs/1803.02596}
\showURL{%
\tempurl}


\bibitem[\protect\citeauthoryear{Yan and Su}{Yan and Su}{2009}]%
        {Xin09}
\bibfield{author}{\bibinfo{person}{Xin Yan} {and} \bibinfo{person}{Xiao~Gang
  Su}.} \bibinfo{year}{2009}\natexlab{}.
\newblock \bibinfo{booktitle}{\emph{Linear Regression Analysis: Theory and
  Computing}}.
\newblock \bibinfo{publisher}{World Scientific Publishing Co., Inc.},
  \bibinfo{address}{USA}.
\newblock
\showISBNx{9789812834102}


\end{thebibliography}
\appendix
\label{pseudocode}

\section{Opportunity Insights Application}

\subsection{Data Generating Process for OI Synthetic Datasets}

In this subsection, we describe the data generating process for simulating census microdata from the 
Opportunity Insights team.
In the model, assume that each set of parents $i$ in tract $t$ and state $m$ has one child
(also indexed by $i$).

\textbf{Size of Tract}: The size of each tract in any state is a random variable. 
Let $\Exp(b)$ represent the exponential distribution with scale $b$. Then if $n_{tm}$ is the number of people
in tract $t$ and state $m$, then $n_{tm}\sim\lfloor\Exp(52) + 20\rfloor$. This distribution over 
simulated counts was chosen by the Opportunity Insights
team because it closely matches the distribution of tract sizes in the
real data.

\textbf{Linear Income Relationship}: Let $x_{itm}$ be the child income given the parent income $y_{itm}$, then we enforce the following relationship between $x_{itm}$ and $y_{itm}$:
$$
\ln(x_{itm}) = \alpha_{tm} + \beta_{tm}\ln(y_{itm}) + e_i,
$$
where $e_i\sim\calN(0, (0.20)^2)$ and $\alpha_{tm}, \beta_{tm}$ are calculated from public estimates of
child income rank given the parent income rank.
\footnote{
Gleaned from some tables the Opportunity Insights team publicly released with tract-level (micro) data.
See \url{https://opportunityatlas.org/}.
The dataset used to calculate the $\alpha_{tm}, \beta_{tm}$
is aggregated from some publicly-available income data
for all tracts within all U.S. states between 1978 and 1983.
}
Next, $p_{im}$ is calculated as the parent $i$'s percentile income within the state $m$'s parent
income distribution (and rounded up to the 2nd decimal place).

\textbf{Parent Income Distribution}: Let $\mu_{tm}$ denote the public estimate of the mean 
household income for tract $t$ in state $m$ (also obtained from publicly-available income data
used to calculate $\alpha_{tm}, \beta_{tm}$).
Empirically, the Opportunity Insights team found that the within-tract
standard deviation of parent incomes is about twice the between-tract standard deviation. 
Let $\Var(\mu_{tm})$ denote the sample variance of the estimator $\mu_{tm}$. Then 
enforce that
$\Var_{tm}(y_{itm}) = 4\Var(\mu_{tm})$
where
$\Var_{tm}(y_{itm})$ is the variance of parental income $y_{itm}$ in tract $t$ and state $m$.
Furthermore, assume that $y_{itm}$ are lognormally distributed within each tract and draw
$\ln(y_{itm})$ from \\ $\calN(\E_{tm}[\ln(y_{itm})], \Var_{tm}[\ln(y_{itm})])$ for $i=1, \ldots, n_{tm}$, where $$\E_{tm}[\ln(y_{itm})] = 2\ln(\mu_{tm}) - 0.5\ln(\Var_{tm}(y_{itm}) + \mu_{tm}^2),$$
and $$\Var_{tm}[\ln(y_{itm})] = -2\ln(\mu_{tm}) + \ln(\Var_{tm}(y_{itm}) + \mu_{tm}^2).$$

\subsection{The Maximum Observed Sensitivity Algorithm}
\label{sec:mos}

Opportunity Insights~\cite{ChettyF19} 
provided a practical method -- which they term the
``Maximum Observed Sensitivity'' (MOS) algorithm -- to reduce the privacy loss
of their released estimates. This method is not formally
differentially private. We use MOS or OI interchangeably to refer to their
statistical disclosure limitation method.
The crux of their algorithm is as follows:
The maximum observed sensitivity, corresponding to an 
upper envelope on
the largest local sensitivity across tracts in a state in the dataset,
is calculated and Laplace noise
of this magnitude divided by the number of people in that cell is added to the
estimate and then released. The statistics they release include 0.25, 0.75
percentiles per cell, the standard error of these percentiles, and the count.

Two notes are in order. First, the MOS algorithm does not calculate the local sensitivity
exactly but uses a lower bound by overlaying an $11\times 11$ grid on the $[0, 1]\times[0, 1]$
space of possible $\bx, \by$ values. Then the algorithm proceeds to add a datapoint from this grid to the dataset and
calculate the maximum change in the statistic, which is then used as a lower bound.
Second, analysis are performed only on census tracts that satisfy the following property:
they must have at least 20 individuals with 10\% of parent income percentiles in that tract above the state parent income median
percentile and 10\% below. If a tract does not satisfy this condition then no regression estimate is released for that tract.

\section{Some Results in Differential Privacy}

In this section we will briefly review some of the fundamental definitions and results pertaining to general differentially private algorithms.

For any query function $f:\calX^n\rightarrow\reals^K$ let \[\gs_f = \max_{d\sim d'} \norm{f(d)-f(d')},\] called the global sensitivity, be the maximum amount the query can differ on neighboring datasets.

\begin{theorem}[Laplace Mechanism~\cite{DMNS06}]
For any privacy parameter $\eps > 0$ and any given query function
$f:\calX^n\rightarrow\reals^K$ and database
$d \in\calX^n$, the 
\emph{Laplace mechanism} outputs
$$\tilde{f}_L(d) = f(d) + (R_1, \ldots, R_K),$$
where $R_1, \ldots, R_K\sim\Lap(0, \frac{\gs_f}{\eps})$ are i.i.d.
random variables drawn from the 0-mean Laplace distribution with
scale $\frac{\gs_f}{\eps}$.

The Laplace mechanism is $(\eps, 0)$-DP. 
\label{thm:laplace}
\end{theorem}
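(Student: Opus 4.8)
The plan is to verify the definition of $(\eps,0)$-differential privacy directly by comparing the output densities of $\tilde f_L(d)$ and $\tilde f_L(d')$ on neighboring databases $d \sim d'$. Because $\delta = 0$, it suffices to establish the pointwise multiplicative bound $p_d(z) \le e^\eps\, p_{d'}(z)$ on the densities for every output $z \in \reals^K$, and then integrate over an arbitrary event $E$. Writing $b = \gs_f/\eps$ for the Laplace scale, the density of $\tilde f_L(d)$ at $z$ factors across coordinates, since the $R_k$ are independent:
\[
p_d(z) = \prod_{k=1}^K \frac{1}{2b}\exp\!\left(-\frac{|z_k - f(d)_k|}{b}\right).
\]

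First I would form the ratio of densities, which cancels the normalizing constants $1/(2b)$ and leaves an exponential of a sum of absolute-value differences:
\[
\frac{p_d(z)}{p_{d'}(z)} = \exp\!\left(\frac{1}{b}\sum_{k=1}^K \Big(|z_k - f(d')_k| - |z_k - f(d)_k|\Big)\right).
\]
By the reverse triangle inequality, each summand is at most $|f(d)_k - f(d')_k|$, so the exponent is bounded by $\tfrac{1}{b}\norm{f(d)-f(d')}_1$. Invoking the definition of global sensitivity, $\norm{f(d)-f(d')}_1 \le \gs_f$ for $d\sim d'$, and substituting $b = \gs_f/\eps$ then yields $p_d(z)/p_{d'}(z) \le \exp(\gs_f/b) = e^\eps$.

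To finish, I would integrate the pointwise bound over any event $E\subseteq\reals^K$:
\[
\Pr[\tilde f_L(d)\in E] = \int_E p_d(z)\,dz \le e^\eps\int_E p_{d'}(z)\,dz = e^\eps\Pr[\tilde f_L(d')\in E],
\]
which is exactly the $(\eps,0)$-DP condition with the additive $\delta$ term vanishing. There is no genuine obstacle here, as this is the classical argument; the one point that must be stated carefully is that the global sensitivity is measured in the $\ell_1$ norm, and that this is precisely the norm that emerges when the per-coordinate Laplace exponents are summed. Were the noise calibrated instead to an $\ell_2$ sensitivity, the telescoping of coordinate contributions would produce the wrong quantity and the bound would fail, so the matching of the Laplace mechanism to $\ell_1$ sensitivity is the real content of why the scale $\gs_f/\eps$ is the correct choice.
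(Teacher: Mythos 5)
Your proof is correct and is precisely the classical density-ratio argument for the Laplace mechanism: factor the product density, bound each coordinate's contribution via the (reverse) triangle inequality, sum to obtain the $\ell_1$ distance $\norm{f(d)-f(d')}_1 \le \gs_f$, and integrate the pointwise bound over an arbitrary event. The paper itself gives no proof of this statement --- it is quoted as a known result from the cited reference --- and your argument matches the standard proof of that result, including the correct observation that the paper's global sensitivity is an $\ell_1$ quantity, which is exactly what the per-coordinate Laplace exponents sum to.
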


\begin{theorem}[Exponential Mechanism~\cite{McSherryT07}]
Given an arbitrary range $\calR$, let $u: \calX^n \times \calR \rightarrow \reals$ be a utility function that maps database/output pairs to utility scores. Let $\gs_u = \max_r \gs_{u(\cdot, r)}$. For a fixed database $d \in \calX^n$ and privacy parameter $\eps > 0$, the \emph{exponential mechanism} outputs an element $r \in \calR$ with probability proportional to $\exp\left(\frac{\eps\cdot u(d, r)}{2 GS_u}\right)$.

The exponential mechanism is $(\eps, 0)$-DP.
\label{thm:exp-mech}
\end{theorem}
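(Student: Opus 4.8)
The plan is to bound, for every fixed output $r$ and every pair of neighboring databases $d \sim d'$, the ratio of the two output densities by $e^\eps$, and then integrate over an arbitrary event $E$. First I would write the (normalized) output density of the mechanism on input $d$ as
\[
p_d(r) = \frac{\exp\!\left(\frac{\eps\, u(d, r)}{2 GS_u}\right)}{\int_{\calR} \exp\!\left(\frac{\eps\, u(d, r')}{2 GS_u}\right)\, dr'},
\]
with the integral replaced by a sum when $\calR$ is discrete, and note that the denominator is finite whenever the mechanism is well defined.

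The core step is to factor the likelihood ratio $p_d(r)/p_{d'}(r)$ into a numerator ratio and a normalizer ratio and control each by $e^{\eps/2}$. For the numerator, since $u(\cdot, r)$ has global sensitivity at most $GS_u$ by the definition $GS_u = \max_r \gs_{u(\cdot, r)}$, we have $u(d, r) - u(d', r) \le GS_u$, hence $\exp\!\left(\frac{\eps (u(d,r) - u(d',r))}{2 GS_u}\right) \le e^{\eps/2}$. For the normalizer, the same sensitivity bound applied pointwise gives $\exp\!\left(\frac{\eps\, u(d', r')}{2 GS_u}\right) \le e^{\eps/2}\exp\!\left(\frac{\eps\, u(d, r')}{2 GS_u}\right)$ for every $r'$; integrating over $r'$ shows the denominator for $d'$ is at most $e^{\eps/2}$ times the denominator for $d$, so the ratio of normalizers (which appears inverted in $p_d/p_{d'}$) is also at most $e^{\eps/2}$. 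Multiplying the two factors yields $p_d(r)/p_{d'}(r) \le e^{\eps}$ for all $r$.

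Finally I would integrate this pointwise bound over an arbitrary event $E \subseteq \calR$, so that $\Pr[M(d) \in E] = \int_E p_d(r)\, dr \le e^\eps \int_E p_{d'}(r)\, dr = e^\eps \Pr[M(d') \in E]$, which is exactly $(\eps, 0)$-differential privacy (the additive $\delta$ term is zero). The only real subtlety---hardly an obstacle---is recognizing why the factor of $2$ appears in the exponent $\frac{\eps u}{2 GS_u}$ of the mechanism: it is precisely what splits the target ratio $e^\eps$ into two independent $e^{\eps/2}$ contributions, one from the change in $u(\cdot, r)$ at the realized output and one from the change in the normalizing constant. A minor technical point worth checking is that the normalizing integral is finite (guaranteed here because $\calR$ is a bounded interval in the median application), so that the densities are well defined and the argument above is rigorous.
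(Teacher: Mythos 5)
Your proposal is correct, and it is the standard argument for the exponential mechanism: factor the likelihood ratio into the pointwise numerator ratio and the (inverted) normalizer ratio, bound each by $e^{\eps/2}$ using the sensitivity bound $|u(d,r)-u(d',r)| \le GS_u$, and integrate over the event $E$. Note that the paper does not actually prove this theorem; it is stated as a known building block cited to McSherry and Talwar, so there is no in-paper proof to compare against --- but your argument matches the original one, including the correct explanation of why the factor of $2$ in the exponent appears (one factor of $e^{\eps/2}$ from the score at the realized output, one from the normalizing constant) and the appropriate caveat that the normalizer must be finite for the density to be well defined, which holds in the paper's median application since $\calR = [r_l, r_u]$ is a bounded interval.
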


The following results allow us to use differentially private algorithms as building blocks in larger algorithms.

\begin{lemma}[Post-Processing~\cite{DMNS06}]
Let $M:\calX^n\rightarrow\calY$ be an $(\eps, \delta)$
differentially private and $f:\calY\rightarrow\calR$ be 
a (randomized) function. Then 
$f\circ M:\calX^n\rightarrow\calR$ is an
$(\eps, \delta)$
differentially private algorithm.
\label{lem:pp}
\end{lemma}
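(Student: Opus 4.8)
The plan is to reduce to the deterministic case and then lift to randomized $f$ by conditioning on the post-processing randomness. The key observation is that for a deterministic $f$, the output event $\{f(M(d)) \in E\}$ is \emph{exactly} the preimage event $\{M(d) \in f^{-1}(E)\}$; since the DP guarantee of $M$ holds for \textbf{all} events in $\calY$—in particular for $f^{-1}(E)$—the bound transfers with no loss.

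First I would treat deterministic $f$. Fix neighboring $d \sim d'$ and an arbitrary (measurable) event $E \subseteq \calR$, and set $T = f^{-1}(E) = \{y \in \calY : f(y) \in E\}$, which is an event in $\calY$. Then
\[
\Pr[f(M(d)) \in E] = \Pr[M(d) \in T] \le e^\eps \Pr[M(d') \in T] + \delta = e^\eps \Pr[f(M(d')) \in E] + \delta,
\]
where the middle inequality is simply the $(\eps,\delta)$-DP guarantee of $M$ applied to the event $T$.

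Next I would extend to randomized $f$. The idea is to model a randomized $f$ as a deterministic map $g : \calY \times \Omega \to \calR$, where $\omega \in \Omega$ denotes the internal coins of $f$, drawn from a fixed distribution \emph{independent} of the input data. For each fixed $\omega$ the map $g(\cdot, \omega)$ is deterministic, so the previous step yields $\Pr[g(M(d), \omega) \in E] \le e^\eps \Pr[g(M(d'), \omega) \in E] + \delta$. Because $\omega$ is independent of the data, taking the expectation over $\omega$ and using monotonicity/linearity preserves the inequality:
\[
\Pr[f(M(d)) \in E] = \E_\omega\bigl[\Pr[g(M(d), \omega) \in E]\bigr] \le e^\eps \E_\omega\bigl[\Pr[g(M(d'), \omega) \in E]\bigr] + \delta = e^\eps \Pr[f(M(d')) \in E] + \delta.
\]

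The main subtlety—more bookkeeping than genuine obstacle—is measure-theoretic: one must check that $f^{-1}(E)$ is a measurable event so that the guarantee for $M$ applies, and invoke independence (Fubini) to justify swapping the expectation over $\omega$ with the probability over $M$'s coins. In the discrete setting both concerns evaporate and the whole argument collapses to a one-line manipulation. The conceptual point I would stress is that the independence of $f$'s randomness from the dataset is precisely what guarantees post-processing cannot leak additional information about any individual record.
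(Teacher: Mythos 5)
Your proof is correct, and there is nothing to compare it against within the paper itself: the paper states this lemma as a known result imported from \cite{DMNS06} and never proves it, relying on it only as a building block (e.g., in the privacy proof of \texttt{NoisyStats}). Your argument---reduce to deterministic $f$ via the preimage event $f^{-1}(E)$, where the DP guarantee of $M$ applies because it holds for \emph{all} events in $\calY$, then handle randomized $f$ by fixing its data-independent coins $\omega$ and averaging, which preserves the bound since $\delta$ is constant in $\omega$---is exactly the canonical proof from the literature the paper cites, including the correct observation that the only subtleties (measurability of $f^{-1}(E)$, the Fubini step) vanish in the discrete setting.
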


\begin{theorem}[Basic Composition~\cite{DMNS06}]
For any $k\in[K]$, let $M_k$ be an $(\eps_k, \delta_k)$ differentially private
algorithm. Then the composition of the $T$ mechanisms
$M = (M_1, \ldots, M_K)$
is $(\eps, \delta)$ differentially private where
$\eps = \sum_{k\in[K]}\eps_k$ and $\delta = \sum_{k\in[K]}\delta_k$.
\label{thm:comp}
\end{theorem}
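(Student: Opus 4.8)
The plan is to prove the theorem by induction on $K$, so that the whole argument reduces to the case of composing two mechanisms; the general statement then follows by grouping $(M_1,\dots,M_{K-1})$ into a single $(\sum_{k<K}\eps_k,\ \sum_{k<K}\delta_k)$-DP mechanism and composing it with $M_K$. (I assume the mechanisms use independent coins, so that the law of $M(d)=(M_1(d),\dots,M_K(d))$ is the product of the individual output laws.) Before the induction I would record one preliminary ``soft'' form of differential privacy: if $N$ is $(\eps,\delta)$-DP and $h:\calY\to[0,1]$ is any test function, then
\[
\E[h(N(d))] \le e^{\eps}\,\E[h(N(d'))] + \delta,
\]
which follows from the event-based definition by the layer-cake identity $h = \int_0^1 \mathbf 1[h\ge t]\,dt$ and integrating the guarantee $\Pr[N(d)\in\{h\ge t\}] \le e^\eps\Pr[N(d')\in\{h\ge t\}]+\delta$ over $t\in[0,1]$.

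For the two-mechanism step, fix neighbors $d\sim d'$ and an event $E\subseteq\calY_1\times\calY_2$. For each first-coordinate output $y_1$ let $E_{y_1}=\{y_2:(y_1,y_2)\in E\}$ and set $f(y_1)=\Pr[M_2(d)\in E_{y_1}]$ and $g(y_1)=\Pr[M_2(d')\in E_{y_1}]$. By independence, $\Pr[M(d)\in E]=\E_{y_1\sim M_1(d)}[f(y_1)]$ and $\Pr[M(d')\in E]=\E_{y_1\sim M_1(d')}[g(y_1)]$. The key move is to split
\[
f = \min\{f,\,e^{\eps_2}g\} + \max\{0,\,f-e^{\eps_2}g\}
\]
and handle the two pieces differently. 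Since $M_2$ is $(\eps_2,\delta_2)$-DP, applying it to the event $E_{y_1}$ for each fixed $y_1$ gives $f(y_1)-e^{\eps_2}g(y_1)\le\delta_2$ pointwise, so the expectation of the second piece is at most $\delta_2$. The first piece $h:=\min\{f,e^{\eps_2}g\}$ takes values in $[0,1]$, so the soft-DP bound for $M_1$ yields $\E_{M_1(d)}[h]\le e^{\eps_1}\E_{M_1(d')}[h]+\delta_1$, and then $\E_{M_1(d')}[h]\le \E_{M_1(d')}[e^{\eps_2}g]=e^{\eps_2}\Pr[M(d')\in E]$. Combining the two pieces gives $\Pr[M(d)\in E]\le e^{\eps_1+\eps_2}\Pr[M(d')\in E]+\delta_1+\delta_2$, which is exactly the two-mechanism claim.

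The step I expect to be the main obstacle is getting the additive term to come out as $\delta_1+\delta_2$ rather than something like $e^{\eps_1}\delta_2+\delta_1$. The naive approach---bounding $f(y_1)\le e^{\eps_2}g(y_1)+\delta_2$ pointwise and then taking the expectation over $y_1\sim M_1(d)$, transferring to $M_1(d')$ via $M_1$'s guarantee---amplifies the $\delta_2$ by a factor of $e^{\eps_1}$ and does not recover the stated bound. The truncation trick above is what avoids this: the excess mass $\max\{0,f-e^{\eps_2}g\}$ is peeled off and bounded by $\delta_2$ \emph{before} invoking $M_1$'s privacy, while the remaining part is kept in $[0,1]$ so that $M_1$'s (soft) guarantee applies cleanly and the leftover factor $e^{\eps_2}$ lands on $g$ (i.e.\ on $\Pr[M(d')\in E]$) rather than on $\delta_2$. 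Everything else---the base case $K=1$, the product-of-laws observation, and the final induction---is routine, as is the pure-DP specialization $\delta_k=0$, where the argument collapses to multiplying the likelihood ratios, each bounded by $e^{\eps_k}$, to get $e^{\sum_k\eps_k}$.
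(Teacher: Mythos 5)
Your proof is correct, but there is nothing in the paper to compare it against: Theorem~\ref{thm:comp} is stated in the appendix as an imported, known result (cited to prior work on differential privacy) and is used as a black box in the privacy proofs of \texttt{NoisyStats} and \texttt{DPGradDescent}; the paper gives no proof of it. As a self-contained argument, yours is sound and handles the one genuinely delicate point the right way: the naive chain --- bounding $f \le e^{\eps_2} g + \delta_2$ pointwise and then pushing the expectation from $M_1(d)$ to $M_1(d')$ --- yields the weaker additive term $\delta_1 + e^{\eps_1}\delta_2$, whereas your truncation $f = \min\{f, e^{\eps_2}g\} + \max\{0, f - e^{\eps_2}g\}$ removes the excess mass (pointwise at most $\delta_2$ by $M_2$'s guarantee applied to each section $E_{y_1}$) \emph{before} invoking $M_1$'s privacy, and the layer-cake ``soft DP'' lemma is exactly what is needed to apply $M_1$'s $(\eps_1,\delta_1)$ guarantee to the $[0,1]$-valued test $\min\{f, e^{\eps_2}g\}$, so that the leftover factor $e^{\eps_2}$ multiplies $\Pr[M(d')\in E]$ rather than $\delta_2$. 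This is essentially the standard textbook proof of basic composition for approximate DP. Two small remarks: the measurability of $f$ and $g$ (needed for the layer-cake step) follows from Fubini's theorem since $E$ is measurable in the product space; and your argument proves non-adaptive composition with independent coins, which is what the statement asserts (note the statement's ``$T$ mechanisms'' is a typo for $K$) --- the same truncation argument extends to adaptive composition with only notational changes, since $M_2$'s guarantee is applied for each fixed $y_1$ anyway.
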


\begin{definition}[Coupling]
Let $\bz$ and $\bz'$ be two random variables defined over the probability spaces $Z$ and $Z'$, respectively. A coupling of $\bz$ and $\bz'$ is a joint variable $(\bz_c, \bz_c')$ taking values in the product space $(Z \times Z')$ such that $\bz_c$ has the same marginal distribution as $\bz$ and $\bz'_c$ has the same marginal distribution as $\bz'$.
\end{definition}

\begin{definition}[$c$-Lipschitz randomized transformations]
\label{def:rand-lipschitz}
  A randomized transformation $T: \calX^n \rightarrow \mathcal{Y}^m$ is $c$-Lipschitz if for all datasets $d, d' \in \calX^n$, there exists a coupling $(\bz_c, \bz'_c)$ of the random variables $\bz = T(d)$ and $\bz' = T(d')$ such that with probability 1,
  \[
  H(\bz_c, \bz'_c) \leq c \cdot H(d, d')
  \]
 where $H$ denotes Hamming distance.
\end{definition}

\begin{lemma}[Composition with Lipschitz transformations (well-known)]
Let $M$ be an $(\eps, \delta)$-DP algorithm, and let $T$ be a $c$-Lipschitz transformation of the data with respect to the Hamming distance. Then, $M \circ T$ is $(c\eps, \delta)$-DP.
\label{lem:lipschitz}
\end{lemma}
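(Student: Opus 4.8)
The plan is to reduce the claim to a \emph{group privacy} statement for $M$ by exploiting the coupling guaranteed by $c$-Lipschitzness. Fix neighboring datasets $d \sim d'$ (so $H(d,d') = 1$), a privacy parameter $\eps$, and an event $E \subseteq \calY$; the goal is to show
\[
\Pr[(M \circ T)(d) \in E] \le e^{c\eps}\Pr[(M \circ T)(d') \in E] + \delta,
\]
where all probability is over the independent internal randomness of $T$ and of $M$. Write $\mu_d$ and $\mu_{d'}$ for the output distributions of $\bz = T(d)$ and $\bz' = T(d')$.

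First I would invoke Definition~\ref{def:rand-lipschitz} to obtain a coupling $(\bz_c, \bz'_c)$ of $\bz$ and $\bz'$ whose marginals are $\mu_d$ and $\mu_{d'}$ and which satisfies $H(\bz_c, \bz'_c) \le c \cdot H(d,d') = c$ with probability $1$. Since the randomness of $M$ is independent of that of $T$, I can rewrite the output probability as an expectation over the coupling,
\[
\Pr[(M\circ T)(d) \in E] = \E_{(\bz_c,\bz'_c)}\bigl[\Pr[M(\bz_c) \in E]\bigr],
\]
using only that $\bz_c$ has the correct marginal $\mu_d$; the analogous identity holds for $d'$ with $\bz'_c$. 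The crux is then a \emph{pointwise} bound: for every realization of the pair, the two arguments $\bz_c, \bz'_c$ differ in at most $c$ coordinates, so I apply group privacy of $M$ to get $\Pr[M(\bz_c)\in E] \le e^{c\eps}\Pr[M(\bz'_c)\in E] + (\text{additive term})$ for that fixed pair. Taking expectations over the coupling and using that $\bz'_c$ has marginal $\mu_{d'}$ recovers $\Pr[(M\circ T)(d')\in E]$ on the right-hand side, yielding the desired inequality.

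The one nontrivial ingredient, and the main obstacle, is the group privacy step and the exact size of its additive term. To bound $\Pr[M(\bz_c)\in E]$ against $\Pr[M(\bz'_c)\in E]$ when $H(\bz_c,\bz'_c)=k\le c$, I would interpolate a path $\bz_c = w_0, w_1, \ldots, w_k = \bz'_c$ of datasets with $H(w_{i},w_{i+1})=1$ and iterate the $(\eps,\delta)$-DP guarantee of $M$ along it. Telescoping gives the multiplicative factor $e^{k\eps} \le e^{c\eps}$ immediately. When $\delta = 0$ --- which is the case in all of this paper's applications of the lemma (e.g.\ the $(\eps,0)$-DP Theil-Sen mechanisms) --- the additive term vanishes and the clean $(c\eps, 0)$ bound follows. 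For $\delta > 0$ the same telescoping produces an additive term $\delta\sum_{i=0}^{k-1} e^{i\eps}$, so to state the result with additive loss exactly $\delta$ one should either restrict to pure DP or absorb the resulting geometric factor into the reported $\delta$; I would flag this bookkeeping explicitly rather than hide it.
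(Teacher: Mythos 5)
Your proposal is correct and is essentially the rigorous version of the paper's own two-sentence proof: the paper likewise reduces the claim to the Lipschitz coupling plus the DP guarantee of $M$, but leaves the coupling-expectation step and the group-privacy telescoping entirely implicit, which is exactly what you spell out. Your closing flag is moreover a genuine catch that the paper glosses over: for $\delta > 0$ the telescoping along a Hamming path of length at most $c$ gives an additive term $\delta\sum_{i=0}^{c-1}e^{i\eps}$ rather than $\delta$, so the lemma as stated is exact only for pure DP (or after inflating the reported $\delta$); this does not affect the paper's results, since every application of the lemma (the privacy proofs for the \texttt{DPTheilSenkMatch} variants) invokes it with $\delta = 0$.
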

\begin{proof}
Let $H(d, d')$ denote the distance (in terms of additions and removals or swaps) between datasets $d$ and $d'$.
By definition of the Lipschitz property, $H(T(d), T(d')) \leq c \cdot H(d, d')$. The lemma follows directly from the Lipschitz property on adjacent databases and the definition of $(\eps, \delta)$-differential privacy.
\end{proof}

\section{\texttt{NoisyStats}}

\subsection{Privacy Proof of \texttt{NoisyStats}}
\label{sec:gs}

\begin{lemma}
  We are given two vectors $\bx, \by\in[0, 1]^n$.
  Let \[\ncov(\bx, \by) = \bracs{\bx-\bar{x}\ones, \by-\bar{y}\ones} = (\sum_{i=1}^n x_i\cdot y_i) - \frac{(\sum_{i=1}^n x_i)(\sum_{i=1}^n y_i)}{n}\]
  and \[\nvar(\bx) = \bracs{\bx-\bar{x}\ones, \bx-\bar{x}\ones} = (\sum_{i=1}^n x_i^2) - \frac{(\sum_{i=1}^n x_i)^2}{n}.\] Also,
  let $\bar{x}, \bar{y}$ be the means of $\bx$ and $\by$ respectively and
  $\ones$ be the all ones vector.

  Then if $\gs_{\ncov}$ and $\gs_{\nvar}$ are the global sensitivities of functions $\ncov$
  and $\nvar$ then
  $\gs_{\ncov} = \left(1-\frac{1}{n}\right)$ and
  $\gs_{\nvar} = \left(1-\frac{1}{n}\right)$.
\label{lem:gs}
\end{lemma}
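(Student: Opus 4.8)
The plan is to treat $\nvar$ and $\ncov$ separately, in each case computing exactly how the statistic changes when a single record is replaced, and then maximizing the absolute change over the box $[0,1]$. By the symmetry of both functions in the indices, I may relabel so that the changed record is the last one.

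For $\nvar(\bx)$, note that it depends only on the $x$-coordinates, so a neighboring dataset corresponds to replacing a single entry $x_n$ by $x_n'$. Writing $S=\sum_{i<n}x_i$ and expanding $\nvar$ before and after the swap, the $\sum_{i<n}x_i^2$ terms cancel and the difference factors as $(x_n-x_n')\bigl[(x_n+x_n')\tfrac{n-1}{n}-\tfrac{2S}{n}\bigr]$. For fixed $x_n,x_n'$ this is affine in $S\in[0,n-1]$, so its extremum occurs at $S=0$ or $S=n-1$. At $S=0$ the magnitude is $\tfrac{n-1}{n}|x_n-x_n'|(x_n+x_n')=\tfrac{n-1}{n}|x_n^2-x_n'^2|\le\tfrac{n-1}{n}$, with equality at $\{x_n,x_n'\}=\{0,1\}$; the case $S=n-1$ is symmetric under $x\mapsto 1-x$. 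Hence $\gs_{\nvar}=1-1/n$.

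For $\ncov$, I would first record the pairwise identity $\ncov(\bx,\by)=\tfrac1n\sum_{i<j}(x_i-x_j)(y_i-y_j)$ (the same algebra that gives $\nvar(\bx)=\tfrac1n\sum_{i<j}(x_i-x_j)^2$). Replacing the last record $(x_n,y_n)=(a,c)$ by $(b,d)$ changes only the $n-1$ terms involving index $n$, and expanding gives difference $=\tfrac1n\bigl[(n-1)(ac-bd)-(a-b)S_y-(c-d)S_x\bigr]$, where $S_x=\sum_{i<n}x_i$ and $S_y=\sum_{i<n}y_i$. Since $S_x$ and $S_y$ range independently over $[0,n-1]$ and the expression is affine in each, the maximum absolute value is attained at one of the four corners $S_x,S_y\in\{0,n-1\}$. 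The crux is to check that each corner yields magnitude at most $\tfrac{n-1}{n}$: on each corner the bracket factors into a difference of two products of terms lying in $[0,1]$. For $S_x=S_y=0$ it is $(n-1)(ac-bd)$; for $S_x=S_y=n-1$ it rearranges to $(n-1)\bigl[(1-a)(1-c)-(1-b)(1-d)\bigr]$; and the mixed corners give $(n-1)\bigl[d(1-b)-c(1-a)\bigr]$ and its $x\leftrightarrow y$ analogue. In every case the bracket is a difference of two quantities in $[0,1]$, hence lies in $[-1,1]$, so the whole difference is at most $\tfrac{n-1}{n}$ in absolute value. Achievability is witnessed by placing all other records at $(0,0)$ and swapping $(1,1)$ for $(0,0)$, which changes $\ncov$ by exactly $1-1/n$; thus $\gs_{\ncov}=1-1/n$.

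The main obstacle I anticipate is precisely this last bound. A naive termwise estimate on the pairwise form is too weak, since a single pair $(a-x_j)(c-y_j)$ can be as large as $1$ and the difference of two such products could be as large as $2$, giving the useless bound $\tfrac{2(n-1)}{n}$. The essential point is that $a,b,c,d$ are \emph{shared} across all $n-1$ pairs, so the aggregate is affine in the partial sums $S_x,S_y$; reducing to the four corners and exploiting the factorization into $[0,1]$-valued products is what brings the bound down to the tight value $1-1/n$.
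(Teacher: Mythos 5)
Your proof is correct and complete, and it is worth comparing with the paper's. Both arguments land on the identical expression for the one-record change --- in your notation $\frac{1}{n}\bigl[(n-1)(ac-bd)-(a-b)S_y-(c-d)S_x\bigr]$ --- but the paper derives it by directly expanding the defining formula $\sum_i x_iy_i - \frac{(\sum_i x_i)(\sum_i y_i)}{n}$, while you route through the pairwise identity $\ncov(\bx,\by)=\frac{1}{n}\sum_{i<j}(x_i-x_j)(y_i-y_j)$. The real divergence is in how the maximization over the remaining records is organized: the paper does a case analysis on the signs of $x_n'-x_n$ and $y_n'-y_n$, substituting the worst-case partial sum ($0$ or $n-1$) separately in each case, whereas you observe the difference is affine in $S$ (bi-affine in $(S_x,S_y)$) and use convexity of the absolute value to jump straight to the endpoints and corners. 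The two routes end up checking the same extreme configurations with the same factorizations into products of $[0,1]$-valued terms: your $(1-a)(1-c)-(1-b)(1-d)$ is the paper's $(x_n-1)(y_n-1)-(x_n'-1)(y_n'-1)$, and your $d(1-b)-c(1-a)$ matches the paper's mixed-sign case. Your corner reduction is the more systematic of the two and applies verbatim to $\nvar$; the paper's sign analysis is more pedestrian but self-contained. One genuine addition on your side: the lemma asserts an \emph{equality}, and the paper's proof only ever establishes the upper bound; your explicit witness (all records at $(0,0)$, one swapped with $(1,1)$) supplies the matching lower bound, so your write-up proves the statement as written while the paper's, strictly speaking, does not.
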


\begin{proof}
  Let $\bz=\bracs{\bx,\by}$ and $\bz'=\bracs{\bx',\by'}$ be neighbouring databases differing on the $n$th datapoint
\footnote{This is without loss of generality as we can always ``rotate'' both databases until the index on which they differ becomes the $n$th datapoint.}. Let $a=\sum_{i=1}^{n-1}x_i$ and $b=\sum_{i=1}^{n-1}y_i$ and note that $\max\{a,b\}\le n-1$. Then, 
\begin{align*}
\nvar(\bx)-\nvar(\bx')&=x_n^2-x_n'^2-\frac{2ax_n}{n}-\frac{x_n^2}{n}+\frac{2ax_n'}{n}+\frac{x_n'^2}{n}\\
&=(1-\frac{1}{n})(x_n^2-x_n'^2)+\frac{2a}{n}(x_n'-x_n).
\end{align*}
If $x_n'-x_n\le 0$ then $\nvar(\bx)-\nvar(\bx')\le (1-\frac{1}{n})(x_n^2-x_n'^2)\le 1-\frac{1}{n}$. Otherwise, 
\begin{align*}
\nvar(\bx)-\nvar(\bx')
&\le (1-\frac{1}{n})(x_n^2-x_n'^2)+\frac{2(n-1)}{n}(x_n'-x_n)\\
&=(1-\frac{1}{n})(x_n^2-2x_n+2x_n'-x_n'^2)
\end{align*}
Since $x_n\in[0,1]$ we have $x_n^2-2x_n\in[-1,0]$, so $\nvar(\bx)-\nvar(\bx')\le 1-\frac{1}{n}$. 

Also,
\begin{align*}
\ncov(\bx, \by)-\ncov(\bx',\by') \\
= x_ny_n-x_n'y_n'+ \\
\frac{a(y_n'-y_n)+b(x_n'-x_n)+x_n'y_n'-x_ny_n}{n}\\
\le (1-\frac{1}{n})(x_ny_n-x_n'y_n')+ \\
\frac{a(y_n'-y_n)+b(x_n'-x_n)}{n}
\end{align*}
If $y_n'-y_n\le0$ and $x_n'-x_n\le 0$ then $\ncov(\bx, \by)-\ncov(\bx',\by')\le (1-\frac{1}{n})(x_ny_n-x_n'y_n')\le (1-\frac{1}{n})$.
If $y_n'-y_n\le0$ and $x_n'-x_n>0$ then $\ncov(\bx, \by)-\ncov(\bx',\by')\le (1-\frac{1}{n})(x_ny_n-x_n'y_n'+(x_n'-x_n))$. Since $x_ny_n-x_n\le0$ and $x_n'-x_n'y_n'\le1$ we have $\ncov(\bx, \by)-\ncov(\bx',\by')\le 1-\frac{1}{n}$. Similarly if $y_n'-y_n>0$ and $x_n'-x_n\le0$ then $\ncov(\bx, \by)-\ncov(\bx',\by')\le 1-\frac{1}{n}$. Finally, if $y_n'-y_n>0$ and $x_n'-x_n>0$, we have 
\begin{align*}
\ncov(\bx, \by)-\ncov(\bx',\by') \le \\ (1-\frac{1}{n})(x_ny_n-x_n'y_n'+(y_n'-y_n)+(x_n'-x_n))\\
\le (1-\frac{1}{n})(x_n(y_n-1)-x_n'(y_n'-1)+(y_n'-1)-(y_n-1))\\
\le (1-\frac{1}{n})((x_n-1)(y_n-1)-(x_n'-1)(y_n'-1)).
\end{align*}
Since, $(x_n-1)(y_n-1)\in[0,1]$ and $(x_n'-1)(y_n'-1)\in[0,1]$, we have $\ncov(\bx, \by)-\ncov(\bx',\by')\le1-\frac{1}{n}$.
\end{proof}

\begin{proof}[Proof of Lemma~\ref{lem:ns}: (\texttt{NoisyStats})] 
  The global sensitivity of both $\ncov(\bx, \by)$ and $\nvar(\bx)$ is bounded by
  $\Delta = \left(1-1/n\right)$ (by Lemma~\ref{lem:gs}).

  As a result, if we sample $L_1, L_2 \sim \Lap(0,  3\Delta/\eps)$ then both
  $\ncov(\bx, \by) + L_1$ and $\nvar(\bx) + L_2$ are $(\eps/3, 0)$-DP
  estimates by the Laplace mechanism guarantees
  (see Theorem~\ref{thm:laplace}).
  By the post-processing properties of differential privacy
  (Lemma~\ref{lem:pp}), $1/(\nvar(\bx) + L_2)$ is a private release and the
  test $\nvar(\bx) + L_2 > 0$ is also private. As a result, $\tilde{\alpha}$
  is a $(2\eps/3, 0)$-DP release. Now to calculate the private intercept $\tilde\beta$, we use the
  global sensitivity of $(\bar{y} - \tilde{\alpha}\bar{x})$ which is at most
  $1/n\cdot(1+|\tilde\alpha|)$, since the means of $\bx, \by$ can change by at most $1/n$.
  \footnote{
  Alternatively, to estimate $\tilde\beta$, one could compute $\tilde{x}, \tilde{y}$, private estimates
  of $\bar{x}, \bar{y}$ by adding Laplace noise from $\Lap(0, 1/n)$ and then compute 
  $\hat\beta = \tilde{y} - \hat\alpha\tilde{x}$.
  }
  The Laplace noise we add ensures the private release of the intercept is $(\eps/3, 0)$-DP.

  Finally, by composition properties of differential privacy
  (Theorem~\ref{thm:comp}),
  Algorithm~\ref{alg:ns} is $(\eps, 0)$-DP.
\end{proof}

\subsection{On the Failure Rate of~\texttt{NoisyStats}}
\label{sec:nsfailures}

\begin{figure}[ht!]
\begin{subfigure}[b]{0.2\textwidth}
    \includegraphics[width=\textwidth]{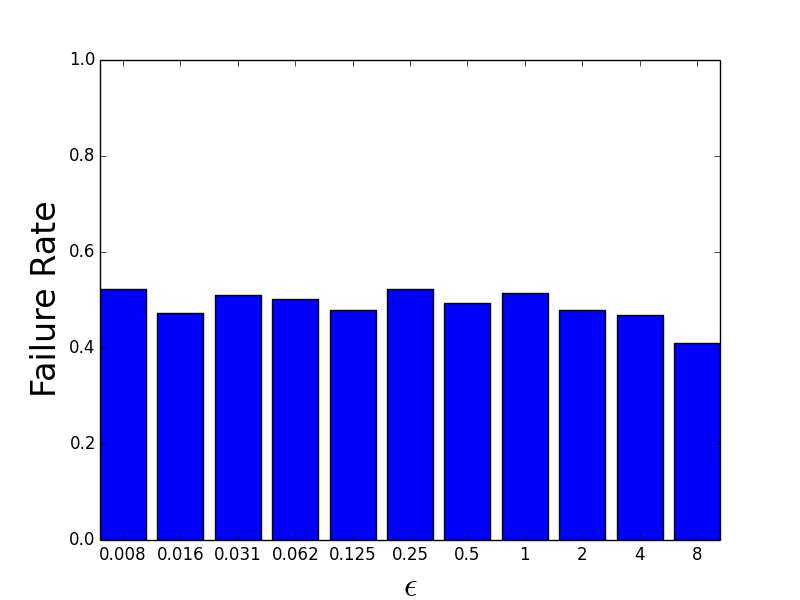}
    \caption{$\hat\alpha = 0.45577, \nvar(\bx) = 0.0245, n = 39$\\
    For Tract 800100 in County 31 in IL}
    \label{fig:IL_ns_faila}
  \end{subfigure}
  \qquad
  \begin{subfigure}[b]{0.2\textwidth}
    \includegraphics[width=\textwidth]{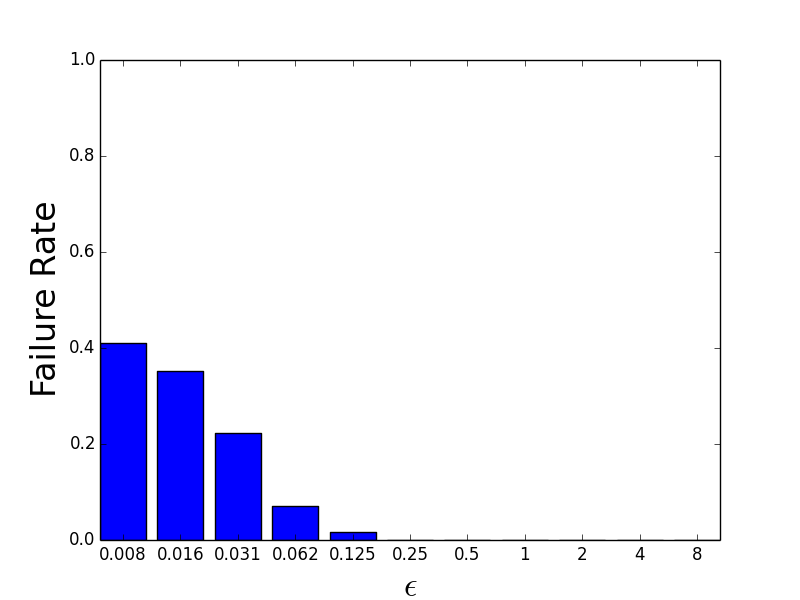}
    \caption{$\hat\alpha = 0.25217, \nvar(\bx) = 28.002, n = 381$\\For Tract 520100 in County 31 in IL}\label{fig:IL_ns_failb}
  \end{subfigure}
\noindent\caption{}
\end{figure}

\begin{figure}[ht!]
\begin{subfigure}[b]{0.2\textwidth}
    \includegraphics[width=\textwidth]{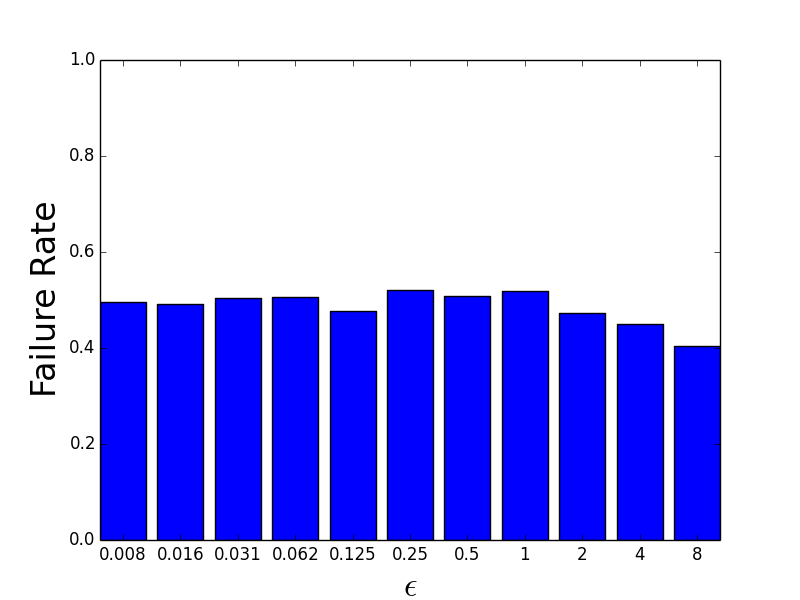}
    \caption{$\hat\alpha = 0.0965, \nvar(\bx) = 0.0245, n = 39$\\
    For Tract 2008 in County 63 in NC}\label{fig:NC_ns_faila}
  \end{subfigure}
  \qquad
  \begin{subfigure}[b]{0.2\textwidth}
    \includegraphics[width=\textwidth]{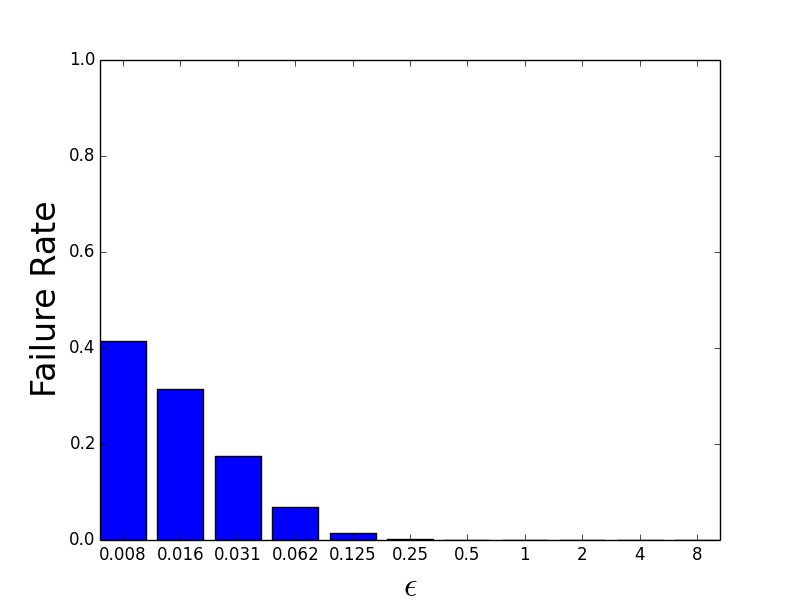}
    \caption{$\hat\alpha = 0.03757, \nvar(\bx) = 31.154, n = 433$\\
    For Tract 603 in County 147 in NC}\label{fig:NC_ns_failb}
  \end{subfigure}
  \caption{}
\end{figure}

\begin{figure}[ht!]
\begin{subfigure}[b]{0.3\textwidth}
    \includegraphics[width=\textwidth]{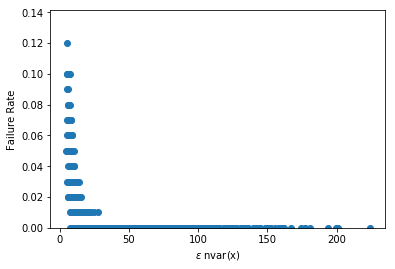}
\end{subfigure}
\caption{Failure rate of \texttt{NoisyStats} for all tracts in IL sorted by $\eps\nvar(\bx)$}\label{fig:IL_ns}
\end{figure}

Unlike all the other algorithms in this paper,~\texttt{NoisyStats} (Algorithm~\ref{alg:ns})
can fail by returning $\perp$. It fails if and only if the noisy $\nvar(\bx)$ sufficient statistic
becomes 0 or negative i.e. $\nvar(\bx) + L_2 \leq 0$ where $L_2\sim\Lap(0, 3(1-1/n)/\eps)$.
Since the statistic $\nvar(\bx)$ will always be non-negative, we also require the private version
of this statistic to be non-negative.
Intuitively, we see that $\nvar(\bx) + L_2$ is more likely to be less than or equal to 0 if
$\nvar(\bx)$ is small or $\eps$ is small. The setting of $\eps$ directly affects the standard deviation
of the noise distribution added to ensure privacy. The smaller $\eps$ is, the more spread out the
noise distribution is.
So we would expect that when $\eps$ or $\nvar(\bx)$
is small, Algorithm~\ref{alg:ns} would fail more often. We experimentally show this observation holds
on some tracts in IL and NC (from the semi-synthetic datasets given to us by the Opportunity Insights
team).

In Figures~\ref{fig:IL_ns_faila},~\ref{fig:IL_ns_failb}
,~\ref{fig:NC_ns_faila},~\ref{fig:NC_ns_failb}, we
see the failure rates for both high and low $\nvar(\bx)$ 
census tracts in both IL and NC as we vary $\eps$ between
$2^{-7}$ and $8$. We see that the failure rate is about 40\%
for any value of $\eps$ in low $\nvar(\bx)$ and is on average
less than 5\% for high $\nvar(\bx)$ tracts.
In Figure~\ref{fig:IL_ns}, we show
the failure rate for \texttt{NoisyStats} when evaluated on 
data from all tracts in IL. The results are averaged over
100 trials.
We see that the failure rate for $\eps = 8$ is 0\% for the majority of tracts. For tracts with
small $\nvar(\bx)$, the rate of failure is at most 12\%.
Thus, we can conclude that the failure rate approaches 0 as we increase either $\eps$ or $\nvar(\bx)$.

\section{\texttt{DPExpTheilSen} and \texttt{DPWideTheilSen}}

\subsection{Privacy Proofs for \texttt{DPExpTheilSen} and \texttt{DPWideTheilSen}}

\begin{lemma}\label{lem:match-k-lipschitz}
Let $T$ be the following randomized algorithm. For dataset $d = (x_i, y_i)_{i=1}^n$, let $K_n(d)$ be the complete graph on the $n$ datapoints, where edges denote points paired together to compute estimates in Theil-Sen. Then $K_n(d)$ can be decomposed into $n-1$ matchings, $\Sigma_1, \ldots, \Sigma_{n-1}$. Suppose $T(d)$ samples $k$ matchings without replacement from the $n-1$ matchings, and computes the corresponding pairwise estimates (up to $k n/2$ estimates). Then $T$ is a $k$-Lipschitz randomized transformation.
\end{lemma}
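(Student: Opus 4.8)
The plan is to verify Definition~\ref{def:rand-lipschitz} directly by exhibiting, for every pair of datasets $d, d'$, a coupling of $T(d)$ and $T(d')$ under which the Hamming distance of the outputs is at most $k \cdot H(d,d')$. The key observation is that the decomposition of $K_n$ into the matchings $\Sigma_1, \ldots, \Sigma_{n-1}$ is a purely combinatorial object on the index set $[n]$: it does not depend on the \emph{values} $(x_i, y_i)$. Consequently the distribution over which $k$ matchings $T$ selects (sampled without replacement) is identical for $d$ and $d'$, and the natural coupling is to use the \emph{same} random choice of $k$ matchings for both runs.

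First I would fix an arbitrary realization of this shared randomness, i.e.\ a set $S$ of $k$ matchings. For this fixed $S$, I would index the output coordinates by the edges appearing in $\bigcup_{\Sigma \in S} \Sigma$ (treating a coordinate as ``absent'' when the corresponding pair is dropped by the $x_l - x_j \neq 0$ test in Algorithm~\ref{alg:ts-match}), so that $T(d)$ and $T(d')$ live on a common coordinate set. An edge $(j,l)$ produces an estimate that depends only on the two datapoints it joins; hence if neither endpoint is among the coordinates on which $d$ and $d'$ differ, the edge yields the identical estimate (and identical inclusion status) in both runs. Therefore a coordinate can differ only if its edge is incident to one of the $H(d,d')$ changed datapoints.

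Next I would bound the number of such incident edges. Because each $\Sigma \in S$ is a matching, every index $i \in [n]$ lies in at most one edge of $\Sigma$, and so lies in at most $k$ edges across the $k$ matchings in $S$. Summing over the changed indices, the number of edges incident to a changed datapoint is at most $k \cdot H(d,d')$, so the coupled outputs differ in at most $k \cdot H(d,d')$ coordinates. Since this bound holds for \emph{every} realization $S$ of the shared randomness, it holds with probability $1$ under the coupling, which is exactly the condition in Definition~\ref{def:rand-lipschitz} with $c = k$.

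The main obstacle is the bookkeeping around the variable output length: because the $x_l = x_j$ guard in Algorithm~\ref{alg:ts-match} can drop different edges for $d$ and $d'$, the two output vectors need not have the same length, and Hamming distance must be defined on a common coordinate set. Indexing coordinates by edges (with an explicit ``absent'' symbol) resolves this cleanly, and one checks that any edge whose inclusion status flips must be incident to a changed datapoint, so it is already counted in the $k \cdot H(d,d')$ bound. The remaining ingredients---the data-independence of the matching decomposition and the matching property that each vertex meets at most one edge per matching---are immediate, so the bulk of the work is simply setting up the coupling and the shared coordinate indexing carefully.
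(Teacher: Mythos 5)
Your proof is correct and takes essentially the same approach as the paper's: couple the two runs by using the same $k$ sampled matchings for both datasets, then note that each changed datapoint lies in at most one edge per matching, hence affects at most $k$ estimates, giving Hamming distance at most $k \cdot H(d,d')$ with probability $1$. If anything, you are slightly more careful than the paper, which asserts each datapoint is used in \emph{exactly} $k$ estimates and silently ignores the edges dropped by the $x_l - x_j \neq 0$ guard; your edge-indexed coordinate set with an explicit ``absent'' symbol handles that bookkeeping cleanly.
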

\begin{proof}
Let $\bz = T(d)$ and $\bz' = T(d')$ denote the multi-sets of estimates that result from applying $T$ to datasets $d$ and $d'$, respectively. 
  We can define a coupling $\bz_c$ and $\bz'_c$ of $\bz$ and $\bz'$.
  First, use $k$ matchings sampled randomly without replacement from $K_n(d)$, $\Sigma_1, \ldots, \Sigma_k$, to compute the multi-set of estimates $\bz_c = \{ z_{j,l}^{(p_{xnew})}: (x_j, x_l) \in \Sigma_1 \cup \ldots \cup \Sigma_k\} \}$. 
  Now, use the corresponding $k$ matchings from $K_n(d')$ to compute a multi-set of estimates $\bz_c' = \{ z_{j,l}^{(p_{xnew})}: (x'_j, x'_l) \in \Sigma_1 \cup \ldots \cup \Sigma_k \}$. This is a valid coupling because the $k$ matchings are sampled randomly without replacement from the complete graphs $K_n(d)$ and $K_n(d')$, respectively, matching the marginal distributions of $\bz$ and $\bz'$.

  Notice that every datapoint $x_j$ is used to compute exactly $k$ estimates in $\bz_c$. Therefore, for every datapoint at which $d$ and $d'$ differ, $\bz_c$ and $\bz'_c$ differ by at most $k$ estimates. Therefore, by the triangle inequality, we are done.
\end{proof}

\begin{proof}[Proof of Lemma~\ref{lem:ts-match}] 
  If DPmed$(z^{(p25)}, \eps, (n, k, \text{hyperparameters}))=\\\mathcal{M}(z^{(p25)}, \text{hyperparameters}))$ then Algorithm~\ref{alg:ts-match} is a composition of two algorithms, $\mathcal{M} \circ T$, where by Lemma~\ref{lem:match-k-lipschitz}, $T$ is a $k$-Lipschitz randomized transformation, and $\mathcal{M}$ is $(\eps/k, 0)$-DP.
  By the Lipschitz composition lemma (Lemma~\ref{lem:lipschitz}), Algorithm~\ref{alg:ts-match} (\texttt{DPTheilSenkMatch}) is $(\eps, 0)$-DP.
\end{proof}

\begin{proof}[Proofs of Lemmas~\ref{lem:ets} and~\ref{lem:wts}] The privacy of \texttt{DPExpTheilSenkMatch} and \texttt{DPWideTheilSenkMatch}
follows directly from Theorem~\ref{thm:exp-mech} and Lemma~\ref{lem:ts-match}.
\end{proof}

\subsection{Sensitivity of Hyperparameter}\label{sec:exphyperparameters}

Choosing optimal hyperparameters is beyond the scope of this work. However, in this section we present some preliminary work exploring the behavior of \texttt{DPWideTheilSen} with respect to the choice of $\theta$. In particular, we consider the question of how robust this algorithm is to the setting of the hyperparameter. Figure~\ref{fig:hyperparameters} shows the performance as a function the widening parameter $\theta$ on synthetic (Gaussian) data.  Note that in each graph both axes are on a log-scale so we see very large variation in the quality depending on the choice of hyperparameter. 

\begin{figure}[H]
\begin{subfigure}[b]{0.5\textwidth}
    \centering
    \includegraphics[width=0.7\textwidth]{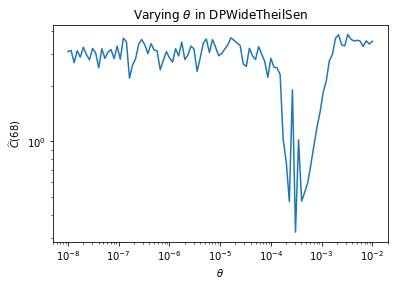}
    \label{fig:widening}
\end{subfigure}
\caption{Experimental results exploring the sensitivity of the hyperparameter choices for \texttt{DPWideTheilSen}. For each dataset $n=40$, and $n$ datapoints are generated as $x_i\sim \mathcal{N}(0,\sigma^2)$, $y_i=0.5*x_i+0.5+\mathcal{N}(0,\tau^2)$. The parameters of the data are fixed at $\sigma=10^{-3}$ and $\tau=10^{-4}$. The datapoints are then truncated so they belong between 0 and 1. Note that both axes are on a log scale.}
\label{fig:hyperparameters}
\end{figure}

\subsection{Pseudo-code for \texttt{DPExpTheilSen}}

In Algorithm~\ref{alg:em}, we give an efficient method for implementation of the DP median algorithm used as subroutine in \texttt{DPExpTheilSen}, the exponential mechanism for computing medians. 
To sample efficiently from this distribution, we implement a two-step algorithm following~\cite{C}: first, we sample an interval according to the exponential mechanism, and then we will sample an output uniformly at random from that interval. To efficiently sample from the exponential mechanism, we use the fact that sampling from the exponential mechanism is equivalent to choosing the value with maximum utility score after i.i.d. Gumbel-distributed noise has been added to the utility scores ~\cite{DworkTT014,ALT16}. 

\begin{algorithm}[h!]
  \KwData{$\bz$}
  \KwPrivacyparams{$\eps$}
  \KwInput{$n, k, r_l, r_u$}
  
  $\eps = \eps/k$
  
  Sort $\bz$ in increasing order
  
  Clip $\bz$ to the range $[r_l, r_u]$
  
  Insert $r_l$ and $r_u$ into $\bz$ and set $n = n+2$
  
  Set $\textrm{maxNoisyScore} = -\infty$
  
  Set $\textrm{argMaxNoisyScore} = -1$

  \For{$i \in [1, n)$} {
    $\textrm{logIntervalLength} = \log(\bz[i] - \bz[i-1])$
    
    $\textrm{distFromMedian} = \lceil| i - \frac{n}{2} | \rceil$
    
    $\textrm{score} = \textrm{logIntervalLength} -\frac{\eps}{2} \cdot \textrm{distFromMedian}$
    
    $N \sim \textrm{Gumbel}(0,1)$ 
    
    $\textrm{noisyScore} = \textrm{score} + N$
    
    \If {$\textrm{noisyScore} > \textrm{maxNoisyScore}$} {
        $\textrm{maxNoisyScore}= \textrm{noisyScore}$
        
        $\text{argMaxNoisyScore} = i$
        }
  }
  
  $\text{left} = \bz[\text{argMaxNoisyScore-1}]$
  
  $\text{right} = \bz[\text{argMaxNoisyScore}]$
  
  Sample $\tilde{m} \sim \text{Unif}\left[\text{left} , \text{right} \right]$
  
  \Return $\tilde{m}$

  \caption{Exponential Mechanism for Median:
  $(\eps/k, 0)$-DP Algorithm}  \label{alg:em}
\end{algorithm}

\subsection{Pseudo-code for \texttt{DPWideTheilSen}}

The pseudo-code for \texttt{DPWideTheilSen} is given in Algorithm~\ref{alg:wem}. It is a small variant on Algorithm~\ref{alg:em}.

\begin{algorithm}[h!]
  \KwData{$\bz$}
  \KwPrivacyparams{$\eps$}
  \KwInput{$n, k, \theta, r_l, r_u$}
  
  $\eps = \eps/k$
  
  Sort $\bz$ in increasing order
  
  Clip $\bz$ to the range $[r_l, r_u]$
  
  \If {$n$ is even} {
    Insert $m$, the true median, into $\bz$
    
    Set $n = n+1$
  }
  
  \For{$i \in [0, \lfloor \frac{n}{2} \rfloor]$}{
    $z[i] = \max(z_l, z[i] - \theta)$
    
    $z[n-i-1] = \min(z_u, z[i] + \theta)$
  }
  
  Insert $z_l$ and $z_b$ into $\bz$ and set $n = n+2$
  
  Set $\textrm{maxNoisyScore} = -\infty$
  
  Set $\textrm{argMaxNoisyScore} = -1$

  \For{$i \in [1, n)$} {
    $\textrm{logIntervalLength} = \log(\bz[i] - \bz[i-1])$
    
    $\textrm{distFromMedian} = \lceil| i - \frac{n}{2} | \rceil$
    
    $\textrm{score} = \textrm{logIntervalLength} -\frac{\eps}{2} \cdot \textrm{distFromMedian}$
    
    $N \sim \textrm{Gumbel}(0,1)$ 
    
    $\textrm{noisyScore} = \textrm{score} + N$
    
    \If {$\textrm{noisyScore} > \textrm{maxNoisyScore}$} {
        $\textrm{maxNoisyScore}= \textrm{noisyScore}$
        
        $\text{argMaxNoisyScore} = i$
        }
  }
  
  $\text{left} = \bz[\text{argMaxNoisyScore-1}]$
  
  $\text{right} = \bz[\text{argMaxNoisyScore}]$
  
  Sample $\tilde{m} \sim \text{Unif}\left[\text{left} , \text{right} \right]$
  
  \Return $\tilde{m}$

  \caption{$\theta$-Widened Exponential Mechanism for Median:  
  $(\eps/k, 0)$-DP Algorithm}  \label{alg:wem}
\end{algorithm}

\section{\texttt{DPSSTheilSen}} \label{sec:compute-ss}

Suppose we are given a dataset $(\bx, \by)$. Consider a neighboring dataset $(\bx', \by')$
that differs from the original dataset in exactly one row.
Let $\bz$ be the set of point estimates (e.g. the $p25$ or $p75$ point estimates) induced by the dataset $(\bx, \by)$, and let $\bz'$ be the set of point estimates induced by dataset $(\bx', \by')$ by Theil-Sen. Formally, for $N = k n/2$,
we let $\calZ_k:[0, 1]^n\times[0, 1]^n\rightarrow\reals^N$ denote
the function that transforms a set of point coordinates into estimates for each pair of 
points.
Then $\bz = \calZ(\bx, \by)$, $\bz' = \calZ(\bx', \by')$.
Notice that changing one datapoint in $(\bx, \by)$ changes at most $k$ of the point estimates in $\bz$. Assume that both $\bz$ and $\bz'$ are in sorted order.
Recall the definition of $S_{\text{med}\circ\calZ, t}^k((\bx, \by))$:
\begin{align*} 
    S_{\text{med}\circ\calZ_k, t}^k((\bx, \by))& \\
    =\max\Big\{&z_{m+k}-z_m, z_{m}-z_{m-k}, \\
    & \max_{l = 1, \ldots, n} \max_{s=0, \cdots, k(l+1)} e^{-lt} ( z_{m+s}-z_{m-(k(l+1)+s})\Big\},
\end{align*} 

Let \[LS_{\text{med}}^k(\bz)=\max_{\bz'\in\mathbb{R}^N, \text{Ham}(\bz, \bz')\le k}|\text{med}(\bz)-\text{med}(\bz')|\] be distance $k$ local sensitivity of the dataset $\bz$ with respect to the median. 
In order to prove that $S_{\text{med}\circ\calZ_k, t}^k((\bx, \by))$ is a $t$-smooth upper bound on $LS_{\text{med}\circ\calZ_k}$, we will use the observation that \[LS_{\text{med}\circ\calZ_k}(\bx, \by)\le LS_{\text{med}}^k(\bz).\] Now Figure~\ref{SSn-1proof} outlines the maximal changes we can make to $\bz$. For $l\ge1$ and any interval of $lk+k+1$ points containing the median, we can move the median to one side of the interval by moving $kl$ points, and to the other side by moving an additional $l$ points. Therefore, for $l\ge 1$, 
\begin{equation}\label{LSSS}
\max_{\bz': d(\bz, \bz') \leq lk}
    LS_{\text{med}}(\bz') = \max_{s=0, \cdots, lk+k} \{ z_{m+s}-z_{m-(lk+k)+s} \}
\end{equation}
so 
\[S_{\text{med}, t}^k(\bz) = \max_{l = 0, \ldots, n} e^{-lt} \max_{\bz': d(\bz, \bz') \leq lk}
    LS^{k}_{\text{med}}(\bz'). \]

\begin{figure}
\includegraphics[scale=0.4]{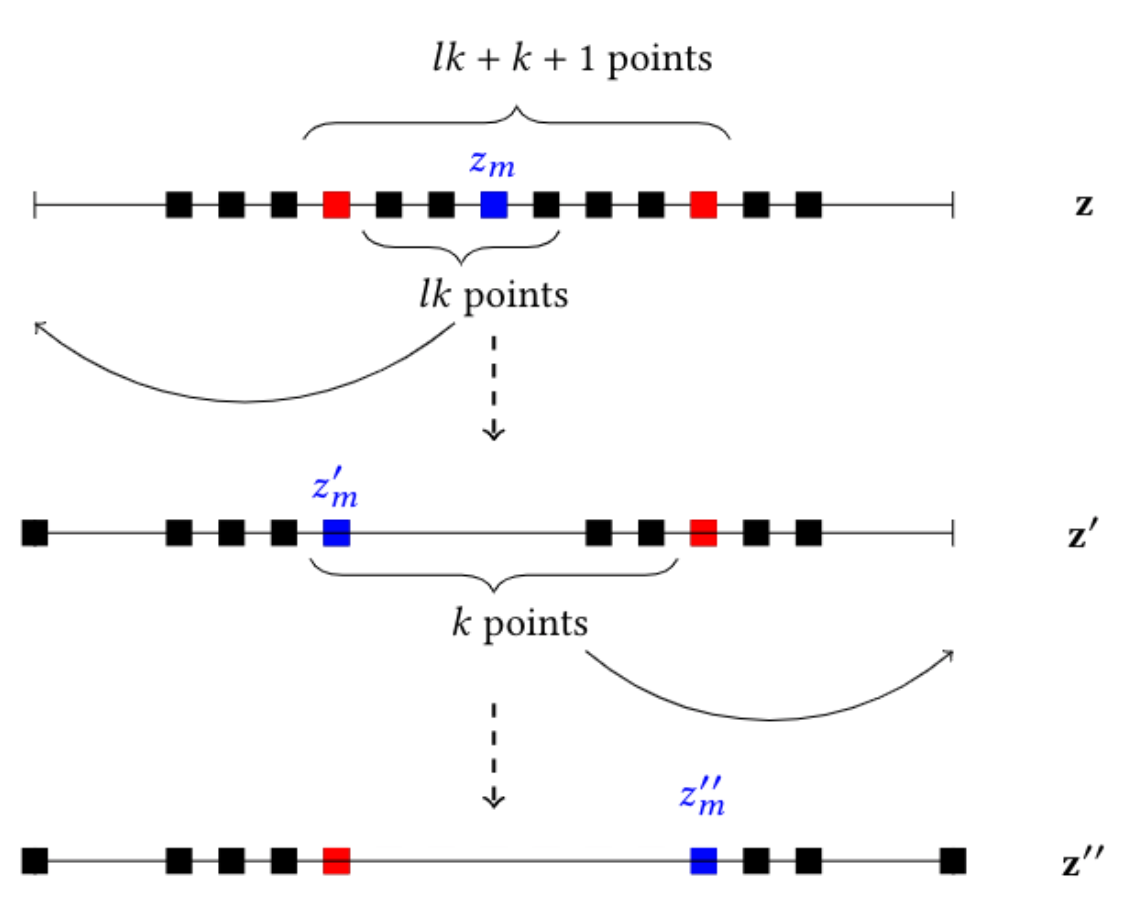}
\caption{A brief proof by pictures of Equation~\ref{LSSS}.
}
\label{SSn-1proof}
\end{figure}

\begin{proof}[Proof of Lemma~\ref{lem:ss-upper}]
We need to show that $S_{\text{med}\circ\calZ_k, t}^k((\bx, \by))$ is lower bounded by the local sensitivity and that
for any dataset $(\bx', \by')$ such that $d((\bx, \by), (\bx', \by')) \leq l$, we have $S_{\text{med}\circ\calZ_k, t}^k((\bx, \by)) \leq e^{tl} S_{\text{med}\circ\calZ_k, t}^k((\bx', \by'))$.

By definition of $S_{\text{med}, t}^k$, we see that 
$S_{\text{med}\circ\calZ_k, t}^k((\bx, \by))\geq LS_{\text{med}}^k$ (e.g. when $l = 0$ in the formula for 
$S_{\text{med}, t}^k$).
Next, we see that
\begin{align}
S_{\text{med}, t}^k(\bz) &= \max_{l = 0, \ldots, n} e^{-lt} \max_{\bz': d(\bz, \bz') \leq lk}
    LS^{k}_{\text{med}}(\bz') \\
    &\leq e^t \cdot \max_{l = 1, \ldots, n} e^{-lt} \max_{\bz'': d(\bz', \bz'') \leq lk}
    LS_{\text{med}}^k(\bz'') \\
    &\leq e^t\cdot S_{\text{med}, t}^k(\bz'),
\end{align}
which completes our proof.
\end{proof}

\begin{lemma}  
Let $M(\bx) = \text{median}(\bx) + \frac{1}{s} S_{\text{med}, t}^k(\bx) \cdot N$, where $N, s$ and $t$ are computed according to Algorithm~\ref{alg:st}.
Then, $M$ is $(\eps, 0)$-DP.
\label{lem:ss}
\end{lemma}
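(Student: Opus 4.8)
The plan is to reduce the statement to the general smooth-sensitivity noise-addition framework of Nissim, Raskhodnikova and Smith~\cite{NRS07}, instantiated with the heavy-tailed (Student's T) noise calibration of~\cite{BunS19}. That framework says: if $S$ is a $t$-smooth upper bound on the local sensitivity of a real-valued function $f$, and $N$ is drawn from an \emph{admissible} noise distribution, then the mechanism $f(\bx) + \frac{1}{s} S(\bx)\cdot N$ is $(\eps,0)$-DP, provided the scale parameter $s$ and the smoothness parameter $t$ are calibrated to $\eps$ and to the admissibility constants of the distribution. Our job is therefore to verify the two hypotheses of this framework for $f = \text{med}\circ\calZ_k$ and $N$ the Student's T variate produced by Algorithm~\ref{alg:st}.

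First I would invoke Lemma~\ref{lem:ss-upper}, which already establishes that $S_{\text{med}\circ\calZ_k, t}^k((\bx,\by))$ is a $t$-smooth upper bound on the local sensitivity of $\text{med}\circ\calZ_k$. This discharges the first hypothesis and is the only place where the special structure of the Theil-Sen pairwise estimates enters; everything downstream is a generic statement about adding admissibly-calibrated noise to a quantity equipped with a smooth sensitivity bound.

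Second, I would verify that the Student's T distribution with the degrees-of-freedom parameter $d$ and scale $s$ chosen in Algorithm~\ref{alg:st} is admissible for the target privacy level $\eps$. Concretely, this amounts to checking two pointwise ratio bounds on the density $h$ of $N$: a \emph{sliding} bound, controlling $\ln\bigl(h(z)/h(z+\Delta)\bigr)$ for shifts $\Delta$ of size up to the noise scale, and a \emph{dilation} bound, controlling $\ln\bigl(h(z)/(e^{\lambda}h(e^{\lambda}z))\bigr)$ for multiplicative perturbations $e^{\lambda}$ with $|\lambda|\le t$. The polynomial tail of Student's T is exactly what makes the dilation bound hold with a finite constant (a light-tailed law such as the Gaussian would only yield $(\eps,\delta)$-DP), and the smoothness guarantee from Lemma~\ref{lem:ss-upper} is what lets us pass from a single worst-case neighbor to all neighbors: on neighboring inputs the median shifts by at most $LS\le S$ (controlled by the sliding property) while the noise scale $S/s$ changes by at most a factor $e^{t}$ (controlled by the dilation property). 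Composing the two ratio bounds along the change from $(\bx,\by)$ to a neighbor $(\bx',\by')$ shows the output densities differ pointwise by at most a factor $e^{\eps}$, giving pure $(\eps,0)$-DP.

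The main obstacle is this second step: pinning down the exact relationship between $\eps$, the scale $s$, the smoothness parameter $t$, and the degrees of freedom $d$ so that the sliding and dilation bounds \emph{together} contribute exactly $\eps$ (rather than, say, $2\eps$) to the log-likelihood ratio. This requires the precise tail estimates for Student's T from~\cite{BunS19} and matching them to the constants set in Algorithm~\ref{alg:st}. Once those inequalities are in hand, the conclusion follows by the composition of the sliding and dilation bounds described above; I would relegate the detailed density computations to the citation and present only the verification that the choices of $s$ and $t$ in Algorithm~\ref{alg:st} meet the admissibility thresholds.
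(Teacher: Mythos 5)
Your proposal is correct and takes essentially the same route as the paper's proof: the paper likewise combines Lemma~\ref{lem:ss-upper} with the Student's T shift-and-dilation max-divergence bound of~\cite{BunS19} (Theorem 31 there, which is exactly your sliding-plus-dilation admissibility check, giving $D_\infty \leq |t|(d+1) + |s|\cdot\frac{d+1}{2\sqrt{d}}$), verifies that the parameter choices in Algorithm~\ref{alg:st} split $\eps$ evenly between the two terms, and then invokes the generic smooth-sensitivity noise-addition theorem (Theorem 46 of~\cite{BunS19}). The only step you defer to the citation --- the precise density estimates --- is deferred in the paper as well, so there is no gap.
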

\begin{proof}  
Let $D_{\infty}(P \vert\vert Q) = \sup_{x \in \text{supp}(Q)} \log \frac{p(x)}{q(x)}$ denote the max-divergence for distributions $P$ and $Q$. Let $N$ be a random variable sampled from StudentsT$(d)$, where $d > 0$ is the degrees of freedom. From Theorem 31 in~\cite{BunS19}, we have that for $s, t > 0$,
\begin{align*}
\left.
\begin{array}{ll}
D_\infty(N \vert\vert e^t N + s) \\
D_\infty(e^t N + s \vert\vert N)
\end{array}
\right\}
\leq
|t| (d+1) + |s| \cdot \frac{d+1}{2\sqrt{d}}
\end{align*}
The parameters $s$ and $t$ correspond to the translation (shifting) and dilation (scaling) of the StudentsT$(d)$ distribution.

Setting $s = 2 \sqrt{d} \left( \frac{\eps' - |t|(d+1)}{d+1} \right)$ as in Algorithm~\ref{alg:st}, we have that for $|t|(d+1) < \eps'$,
\begin{align} \label{eq:st-divergence}
\begin{cases}
D_\infty(N \vert\vert e^t N + s) \\
D_\infty(e^t N + s \vert\vert N)
\end{cases} 
\leq
\eps
\end{align}
If Equation~\ref{eq:st-divergence} is satisfied, then by Theorem 46 in~\cite{BunS19}, the mechanism in Algorithm~\ref{alg:st}, $M(\bz) = \text{median}(\bz) + \frac{1}{s} S^t_{\text{median}(\cdot)}(\bz) + N$, is $(\eps, 0)$-DP.
\end{proof}

\begin{algorithm}[!ht]
  \KwData{$\bz, \data$}
  \KwPrivacyparams{$\eps$}
  \KwHyperparams{$k, n, r_l, r_u, d$}
  
  Set $t = \frac{\eps}{2(d+1)}$ and $s = \frac{\eps \sqrt{d}}{d+1}$
 
  $S_{\textrm{median}} = S_{\text{med}, t}^k((\bx, \by))$ 
  
  Sample $N \sim \textrm{Student's T}(d)$
  
  Set $\widetilde{m} = \textrm{median}(\bz) + \frac{1}{s} \cdot S_{\textrm{median}} \cdot N$
  
  \Return $\widetilde{m}$
  \caption{Smooth Sensitivity Student's T Noise Addition for Median: $(\eps, 0)$-DP Algorithm  \label{alg:st}}
\end{algorithm}

\subsection{Sensitivity of Hyperparameters}

In Algorithm~\ref{alg:st} we set the smoothing parameter to be a specific function of $\epsilon$ and $d$: $t=\frac{\epsilon}{2(d+1)}$ and $s=\frac{\epsilon\sqrt{d}}{d+1}$. There were other choices for these parameters. For any $\beta\in[0,1]$, the $(\epsilon, 0)$-DP guarantee is preserved if we set \[t=\frac{\epsilon\beta}{d+1} \text{\;\; and \;\;} s=2\sqrt{d}\left(\frac{\epsilon - t (d+1)}{d+1}\right).\] Algorithm~\ref{alg:st} corresponds to setting $\beta=1/2$. Increasing $\beta$ increases $t$, which results in $S_{\text{med}\circ\calZ_k}^k((\bx, \by))$ decreasing. However, if increasing $\beta$ also decreases $s$. In Figure~\ref{fig:betahyperparam} we explore the performance of \texttt{DPSSTheilSen} as a function of $\beta$ on synthetic (Gaussian) data. Note that the performance doesn't seem too sensitive to the choice of $\beta$ and $\beta=0.5$ is a good choice on this data.

\begin{figure}[ht!]
    \centering
    \includegraphics[scale=0.5]{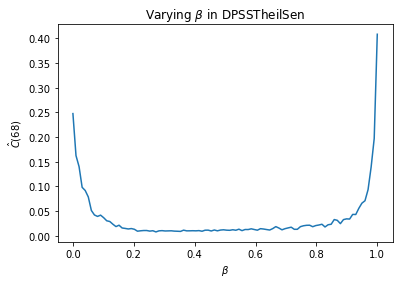}
    \caption{Experimental results exploring the sensitivity of the hyperparameter choices for \texttt{DPSSTheilSen}. For each dataset $n=30$, and $n$ datapoints are generated as $x_i\sim \mathcal{N}(0.5,\sigma^2)$, $y_i=0.5*x_i+0.2+\mathcal{N}(0,\tau^2)$. The parameters of the data are fixed at $\sigma=0.01$ and $\tau=0.005$. The datapoints are then truncated so they belong between 0 and 1. Note that both axes are on a log scale.}
    \label{fig:betahyperparam}
\end{figure}

\section{\texttt{DPGradDescent}}\label{appendixDPgraddescent}

There are three main versions of \texttt{DPGradDescent} we consider:
\begin{enumerate}
\item \texttt{DPGDPure}: $(\eps, 0)$-DP;
\item \texttt{DPGDApprox}: $(\eps, \delta)$-DP; and
\item \texttt{DPGDzCDP}: $(\eps^2/2)$-zCDP.
\end{enumerate}

Algorithm~\ref{alg:dpgdpure} is the specification of a $(\eps, 0)$-DP
algorithm and Algorithm~\ref{alg:dpgdz} is
a $\rho$-zCDP algorithm, from which we can obtain a $(\eps, \delta)$-DP algorithm
and a $(\eps^2/2)$-zCDP algorithm.
As with traditional gradient descent, there are several choices that have been made in designing this algorithm: the step size, the batch size for the gradients, how many of the estimates are averaged to make our final estimate, how the privacy budget is distributed. We have included this pseudo-code for completeness to show the choices that were made in our experiments. We do not claim to have extensively explored the suite of parameter choices, and it is possible that a better choice of these parameters would result in a better performing algorithm. Differentially private gradient descent has received a lot of attention in the literature. 
For a more in-depth discussion of DP gradient descent see \cite{BST14}.

\begin{algorithm}
  \KwData{\data}
  \KwPrivacyparams{$\eps$}
  \KwHyperparams{$n, T, \tau, \tilde{p}_{25}^0, \tilde{p}_{75}^0$}

  \For{$t=0:T-1$}{
  $\eps_t = \eps/T$
  
  \For{$i=1:n$}{
  
  $\tilde{y}_{i,t} = 2(\tilde{p}_{25}^t*(3/4-x_i)+\tilde{p}_{75}^t(x_i-1/4))$
  
  $\Delta_{i,t} = \begin{pmatrix} [2(y_i-\tilde{y})(3/4-x_i)]_{-\tau}^{\tau}, \\ [2(y_i-\tilde{y})(x_i-1/4)]_{-\tau}^{\tau}\end{pmatrix}$
  }
  
  $\Delta_t = \sum_{i=1}^n \Delta_{i,t} + \Lap_2\left(0, 4\tau/\eps_t\right)$
  
  $\gamma_t = \frac{1}{\sqrt{\sum_{l=0}^t \Delta_l^2}}$
  
  $[\tilde{p}_{25}^{t+1}, \tilde{p}_{75}^{t+1}] = [\tilde{p}_{25}^{t}, \tilde{p}_{75}^{t}]-\gamma_t*\Delta_t$
  }
  
  \Return $\frac{2}{T}\sum_{t=T/2}^{T-1} [\tilde{p}_{25}^{t}, \tilde{p}_{75}^{t}]$
  
  \caption{\texttt{DPGDPure}: $(\eps, 0)$-DP Algorithm}\label{alg:dpgdpure}
\end{algorithm}

\begin{algorithm}
  \KwData{\data}
  \KwPrivacyparams{$\rho$}
  \KwHyperparams{$n, T, \tau, \tilde{p}_{25}^0, \tilde{p}_{75}^0$}

  \For{$t=0:T-1$}{
  $\rho_t = \rho/T$
  
  \For{$i=1:n$}{
  
  $\tilde{y}_{i,t} = 2(\tilde{p}_{25}^t*(3/4-x_i)+\tilde{p}_{75}^t(x_i-1/4))$
  
  $\Delta_{i,t} = \begin{pmatrix} [2(y_i-\tilde{y})(3/4-x_i)]_{-\tau}^{\tau}, \\ [2(y_i-\tilde{y})(x_i-1/4)]_{-\tau}^{\tau}\end{pmatrix}$
  }
  
  $\Delta_t = \sum_{i=1}^n \Delta_{i,t} + \calN_2\left(0, (2\tau/\sqrt\rho_t)^2\right)$
  
  $\gamma_t = \frac{1}{\sqrt{\sum_{l=0}^t \Delta_l^2}}$
  
  $[\tilde{p}_{25}^{t+1}, \tilde{p}_{75}^{t+1}] = [\tilde{p}_{25}^{t}, \tilde{p}_{75}^{t}]-\gamma_t*\Delta_t$
  }
  
  \Return $\frac{2}{T}\sum_{t=T/2}^{T-1} [\tilde{p}_{25}^{t}, \tilde{p}_{75}^{t}]$
  
  \caption{\texttt{DPGDzCDP}: $\rho$-zCDP Algorithm}\label{alg:dpgdz}
\end{algorithm}

\begin{lemma}
For any $\rho > 0$, Algorithm~\ref{alg:dpgdz} is $\rho$-zCDP.
\end{lemma}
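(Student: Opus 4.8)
The plan is to show that \texttt{DPGDzCDP} (Algorithm~\ref{alg:dpgdz}) is an adaptive composition of $T$ Gaussian mechanisms, each satisfying $\rho_t$-zCDP, followed by post-processing. First I would fix a single round $t$ and analyze the one sensitive release, the noisy gradient $\Delta_t$. Conditioned on the current iterate $(\tilde{p}_{25}^t, \tilde{p}_{75}^t)$ produced by the earlier rounds, the prediction $\tilde{y}_{i,t}$ for record $i$ depends only on $(x_i,y_i)$ and this shared iterate, so the map $d \mapsto \sum_{i=1}^n \Delta_{i,t}$ is a deterministic query of the dataset in which each summand is a function of a single record. Because every coordinate of $\Delta_{i,t}$ is clipped to $[-\tau,\tau]$, each summand lies in the box $[-\tau,\tau]^2$. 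Since neighboring datasets $d \sim d'$ differ in exactly one record, exactly one summand changes, and the $\ell_2$-sensitivity of this query is bounded by $\max_j \|\Delta_{j,t} - \Delta'_{j,t}\|_2 \le \sqrt{(2\tau)^2 + (2\tau)^2} = 2\sqrt{2}\,\tau$.

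Next I would invoke the zCDP guarantee of the Gaussian mechanism from Bun--Steinke~\cite{BunS19}: perturbing a query of $\ell_2$-sensitivity $\Delta_2$ by noise $\calN(0,\sigma^2 I)$ yields $\frac{\Delta_2^2}{2\sigma^2}$-zCDP. Here the noise scale is $\sigma = 2\tau/\sqrt{\rho_t}$, so $\frac{\Delta_2^2}{2\sigma^2} = \frac{8\tau^2}{2\cdot(4\tau^2/\rho_t)} = \rho_t$, and hence the release of $\Delta_t$ at round $t$ is $\rho_t$-zCDP. Note that this bound is exactly what motivates the noise scaling chosen in the algorithm.

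I would then assemble the rounds. The step size $\gamma_t$ and the iterate update $[\tilde{p}_{25}^{t+1}, \tilde{p}_{75}^{t+1}]$ are functions of $\Delta_t$ together with quantities already released, so they are post-processing and introduce no further privacy loss. The full mechanism is therefore an adaptive composition of the $T$ Gaussian mechanisms. Applying the adaptive composition theorem for zCDP~\cite{BunS19}, which simply adds the per-round parameters, gives a total of $\sum_{t=0}^{T-1} \rho_t = T \cdot (\rho/T) = \rho$, so the transcript of iterates is $\rho$-zCDP. Finally, the returned estimate $\frac{2}{T}\sum_{t=T/2}^{T-1}[\tilde{p}_{25}^t, \tilde{p}_{75}^t]$ is a deterministic function of this transcript, and post-processing preserves zCDP, completing the argument.

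The main obstacle is handling the adaptivity correctly: the gradient computed at round $t$ depends on the noise injected in all earlier rounds, so the rounds are not independent and cannot be composed naively. The observation that resolves this is that the per-round sensitivity bound $2\sqrt{2}\,\tau$ holds \emph{uniformly} over every possible value of the current iterate, since it follows purely from the clipping step and not from the data or the history. Consequently the $\rho_t$-zCDP guarantee is valid for each realization of the history, which is precisely the hypothesis required by the adaptive composition theorem.
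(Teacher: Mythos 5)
Your proof is correct and follows essentially the same route as the paper's: bound the per-round $\ell_2$-sensitivity by $2\sqrt{2}\,\tau$ via the clipping to $[-\tau,\tau]^2$, apply the Gaussian-mechanism zCDP guarantee with $\sigma = 2\tau/\sqrt{\rho_t}$ to get $\rho_t$-zCDP per round, and compose over the $T$ rounds (the paper compresses your careful treatment of adaptivity and post-processing into the phrase ``by composition properties of zCDP''). The only nit is the citation: the Gaussian mechanism and zCDP composition results are from Bun--Steinke 2016 (cited in the paper as \cite{BunS16}, Proposition 1.6), not the 2019 smooth-sensitivity paper \cite{BunS19}.
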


\begin{proof}
By composition properties of zCDP, it suffices to show that
$\Delta_t = \sum_{i=1}^n \Delta_{i,t} + \calN_2\left(0, (2\tau/\sqrt\rho_t)^2\right)$ is
$\rho_t$-zCDP
where $\calN_2(0, s^2)$ represents two 
(independent) draws from a Normal distribution with standard deviation $s$.

The $L_2$-sensitivity of $\sum_{i=1}^n \Delta_{i,t}$ is at most $2\sqrt{2}\tau$ and the standard deviation
of the Gaussian distribution from which noise is added is $2\tau/\sqrt{\rho_t}$. Then by 
Proposition 1.6 in~\cite{BunS16}, the procedure to compute $\Delta_t$ is $\rho_t$-zCDP.

\end{proof}

\begin{lemma}
For any $\delta\in (0, 1]$ and any $\rho > 0$, Algorithm~\ref{alg:dpgdz} is
$(\eps, \delta)$-DP where
$$\eps = \rho + \sqrt{4\rho\log\left(\frac{\sqrt{\pi\rho}}{\delta}\right)}.$$
\end{lemma}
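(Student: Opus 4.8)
The plan is to decouple the statement into two parts: the algorithm-specific part, which is already done, and a generic conversion. By the preceding lemma, Algorithm~\ref{alg:dpgdz} satisfies $\rho$-zCDP, so it suffices to establish the purely information-theoretic fact that any $\rho$-zCDP mechanism $M$ is $(\eps, \delta)$-DP for the stated $\eps$; none of the structure of the gradient-descent iteration plays a further role. First I would unpack the definition of $\rho$-zCDP as a family of R\'enyi-divergence bounds: for all neighbours $d \sim d'$ and all orders $\alpha > 1$, $D_\alpha(M(d)\|M(d')) \le \rho\alpha$.

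The main step is the standard route from a R\'enyi bound to an approximate-DP guarantee. I would introduce the privacy-loss random variable $Z = \log\big(p_{M(d)}(o)/p_{M(d')}(o)\big)$ with $o \sim M(d)$, and recall that to certify $(\eps,\delta)$-DP it is enough to show $\Pr[Z > \eps] \le \delta$ for every neighbouring pair. The zCDP hypothesis controls the moment generating function of $Z$, since $\E[e^{(\alpha-1)Z}] = e^{(\alpha-1)D_\alpha(M(d)\|M(d'))} \le e^{(\alpha-1)\alpha\rho}$. A Chernoff bound then gives, for every $\alpha > 1$,
\[
\Pr[Z > \eps] \le e^{-(\alpha-1)\eps}\,\E[e^{(\alpha-1)Z}] \le e^{(\alpha-1)(\alpha\rho - \eps)},
\]
and I would minimise the exponent over $\alpha$. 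The optimal order is $\alpha^* = (\eps+\rho)/(2\rho)$, at which the exponent equals $\eps - (\eps+\rho)^2/(4\rho) = -(\eps-\rho)^2/(4\rho)$; setting this equal to $\log\delta$ and solving the resulting quadratic yields $\eps = \rho + 2\sqrt{\rho\log(1/\delta)}$.

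The hard part is that this crude argument produces $\log(1/\delta)$ rather than the sharper $\log(\sqrt{\pi\rho}/\delta)$ claimed in the statement. To recover the stated constant I would replace the plain Chernoff/Markov tail bound by the optimal R\'enyi-to-DP conversion, which retains the lower-order $(1-1/\alpha)^\alpha$ and $1/(\alpha-1)$ factors that the Markov step discards (equivalently, one uses the exact sub-Gaussian tail of $Z$, whose worst case is the Gaussian-mechanism privacy loss $\mathcal{N}(\rho,2\rho)$, together with the Gaussian tail estimate $\Phi^c(t)\le e^{-t^2/2}/(t\sqrt{2\pi})$). Carrying the optimisation over $\alpha$ through with these factors intact, and then simplifying, is what introduces the $\sqrt{\pi\rho}$ correction inside the logarithm; since $\frac{1}{2}\log(\pi\rho)<0$ exactly when $\rho<1/\pi$, this is a genuine improvement over the textbook bound in the small-$\rho$ regime relevant here. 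This conversion is a known result, so in the write-up I would either invoke it directly (as in~\cite{BunS16}) or include the short optimisation as a self-contained calculation.
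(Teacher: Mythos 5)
Your proposal is correct and takes essentially the same route as the paper: the paper's proof is simply the observation that the preceding lemma gives $\rho$-zCDP and that the stated $(\eps,\delta)$ guarantee then follows from the sharp zCDP-to-approximate-DP conversion, cited as Lemma 3.6 of \cite{BunS16}. Your additional sketch of why the plain Chernoff argument only yields $\log(1/\delta)$ and how the refined conversion recovers the $\sqrt{\pi\rho}$ factor is accurate but not needed, since invoking the cited lemma (as you propose as one option) is exactly what the paper does.
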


\begin{proof}
Follows from Lemma 3.6 in~\cite{BunS16}.
\end{proof}

\begin{lemma}
For any $\eps > 0$, Algorithm~\ref{alg:dpgdpure} is $(\eps, 0)$-DP.
\end{lemma}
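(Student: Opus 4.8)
The plan is to view Algorithm~\ref{alg:dpgdpure} as an adaptive composition of $T$ applications of the Laplace mechanism, followed by a single post-processing step, and then to invoke basic composition (Theorem~\ref{thm:comp}) together with post-processing (Lemma~\ref{lem:pp}). First I would isolate the only data-dependent release in each round $t$: the noisy gradient $\Delta_t = \sum_{i=1}^n \Delta_{i,t} + \Lap_2(0, 4\tau/\eps_t)$. Everything else at round $t$ is a deterministic function of the current iterate $(\tilde{p}_{25}^t, \tilde{p}_{75}^t)$ together with the released gradients $\Delta_0, \ldots, \Delta_t$: the step size $\gamma_t = 1/\sqrt{\sum_{l=0}^t \Delta_l^2}$ depends only on the noisy gradients, and the update producing $(\tilde{p}_{25}^{t+1}, \tilde{p}_{75}^{t+1})$ depends only on the previous iterate and $\gamma_t \Delta_t$. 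In particular, the iterate at the start of round $t$ is a deterministic function of the initial point and of $\Delta_0, \ldots, \Delta_{t-1}$.

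The key step is a sensitivity bound that holds uniformly over all iterates. Fix any value of the iterate $(\tilde{p}_{25}^t, \tilde{p}_{75}^t)$ and regard $g_t(d) = \sum_{i=1}^n \Delta_{i,t}$ as a function of the dataset alone. For neighbors $d \sim d'$ differing in a single record $i$, with the iterate held fixed only the one term $\Delta_{i,t}$ changes; since each of the two coordinates of $\Delta_{i,t}$ is clipped to $[-\tau, \tau]$, each coordinate of $g_t$ changes by at most $2\tau$, so $\gs_{g_t} = \max_{d \sim d'} \norm{g_t(d) - g_t(d')}_1 \le 4\tau$. Crucially, this bound does not depend on the iterate. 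Hence by the Laplace mechanism (Theorem~\ref{thm:laplace}) with per-coordinate scale $\gs_{g_t}/\eps_t = 4\tau/\eps_t$, the release of $\Delta_t$ --- conditioned on any fixed value of the current iterate, equivalently on any fixed prefix $\Delta_0, \ldots, \Delta_{t-1}$ --- is $(\eps_t, 0)$-DP with $\eps_t = \eps/T$.

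I would then assemble the rounds by basic composition (Theorem~\ref{thm:comp}, in its standard adaptive form): because the per-round $(\eps_t, 0)$-DP guarantee holds for \emph{every} fixing of the earlier outputs, the joint release $(\Delta_0, \ldots, \Delta_{T-1})$ is $(\eps, 0)$-DP, since $\sum_{t=0}^{T-1} \eps_t = T \cdot (\eps/T) = \eps$. Finally, the returned value $\frac{2}{T} \sum_{t=T/2}^{T-1} (\tilde{p}_{25}^t, \tilde{p}_{75}^t)$ is a deterministic function of the initial point and of the released gradients, so by post-processing (Lemma~\ref{lem:pp}) the full algorithm is $(\eps, 0)$-DP.

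The main obstacle is the adaptivity: on neighboring inputs the iterates $(\tilde{p}_{25}^t, \tilde{p}_{75}^t)$ diverge after the first round, so one cannot claim that ``only $\Delta_{i,t}$ changes'' across the two executions simultaneously. The resolution, and the crux of the argument, is that clipping each per-record gradient to $[-\tau, \tau]^2$ forces $\gs_{g_t} \le 4\tau$ for \emph{every} possible iterate value; this uniform, iterate-independent sensitivity bound is exactly the hypothesis required to apply basic composition in its adaptive form, which sidesteps any need to reason directly about the coupled gradient-descent trajectories.
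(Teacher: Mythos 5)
Your proof is correct and follows essentially the same route as the paper's: per-round $(\eps_t,0)$-DP via the Laplace mechanism with $L_1$-sensitivity at most $4\tau$ (guaranteed by the clipping of each $\Delta_{i,t}$ to $[-\tau,\tau]^2$), assembled by basic composition across the $T$ rounds, with the iterate updates and final average treated as post-processing. The paper states this in two sentences and leaves the adaptive-composition and post-processing details implicit, whereas you spell them out explicitly; this is elaboration, not a different argument.
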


\begin{proof}
By basic composition, it suffices to show that
$\Delta_t = \sum_{i=1}^n \Delta_{i,t} + \Lap_2\left(0, 4\tau/\eps_t\right)$
is $(\eps_t, 0)$-DP where $\Lap_2(0, s)$ represents two 
(independent) draws from a Laplace distribution with scale $s$.
This holds since
the $L_1$-sensitivity of $\sum_{i=1}^n \Delta_{i,t}$ is at most $4\tau$.
\end{proof}

\end{document}